\def\isarxivversion{1} 
\newtheorem{theorem}{Theorem}[section]
\newtheorem{lemma}[theorem]{Lemma}
\newtheorem{definition}[theorem]{Definition}
\newtheorem{proposition}[theorem]{Proposition}
\newtheorem{corollary}[theorem]{Corollary}
\newtheorem{remark}[theorem]{Remark}
\newtheorem{claim}[theorem]{Claim}
\newtheorem{assumption}[theorem]{Assumption}
\newtheorem{fact}[theorem]{Fact}
\newcommand{\wt}{\widetilde}
\newcommand{\wh}{\widehat}
\newcommand{\supp}{\mathrm{supp}}
\DeclareMathOperator{\poly}{poly}
\DeclareMathOperator{\Z}{{\mathbb Z}}
\DeclareMathOperator{\R}{{\mathbb R}}
\DeclareMathOperator{\F}{{\mathbb F}}
\DeclareMathOperator*{\E}{{\mathbb{E}}}
\definecolor{b2}{RGB}{51,153,255}
\definecolor{mygreen}{RGB}{80,180,0}
\definecolor{yl}{RGB}{255,80,0}
\newcommand{\SSNMF}{{$\mathsf{SSBMF}$}}
\newcommand{\NMF}{{$\mathsf{NMF}$}}
\newcommand{\BMF}{{$\mathsf{BMF}$}}
\newcommand{\Batched}{{$\mathsf{BkV\text{-}SUM}$}}
\newcommand{\ksum}{{$\mathsf{kV\text{-}SUM}$}}
\newcommand{\M}{{\mathbf{M}}}
\newcommand{\W}{{\mathbf{W}}}
\newcommand{\X}{\mathbf{X}}
\DeclareMathOperator{\Id}{Id}
\newcommand{\T}{\mathbf{T}}
\newcommand{\Zhao}[1]{{\color{red}[Zhao: #1]}}
\newcommand{\Ruizhe}[1]{{\color{mygreen}[Ruizhe: #1]}}
\author{Sitan Chen}{UC Berkeley}{sitanc@berkeley.edu}{}{This work was supported in part by NSF Award 2103300, NSF CAREER Award CCF-1453261, NSF Large CCF-1565235 and Ankur Moitra's ONR Young Investigator Award. Part of this work was completed while visiting the Simons Institute for the Theory of Computing}
\author{Zhao Song}{Adobe Research}{zsong@adobe.com}{}{}
\author{Runzhou Tao}{Columbia University}{runzhou.tao@columbia.edu}{}{}
\author{Ruizhe Zhang}{UT Austin}{ruizhe@utexas.edu}{}{This work was supported by NSF Grant CCF-1648712.}
\authorrunning{S. Chen, Z. Song, R. Tao and R. Zhang}
\keywords{TBA}
\begin{document}

\title{Symmetric Sparse Boolean Matrix Factorization and Applications}

\ifdefined\isarxivversion
\author{
Sitan Chen\thanks{\texttt{sitanc@berkeley.edu}. This work was supported in part by NSF Award 2103300, NSF CAREER Award CCF-1453261, NSF Large CCF-1565235 and Ankur Moitra's ONR Young Investigator Award. Part of this work was completed while visiting the Simons Institute for the Theory of Computing} \\ UC Berkeley
\and
Zhao Song\thanks{\texttt{zsong@adobe.com}} \\ Adobe Research
\and
Runzhou Tao\thanks{\texttt{runzhou.tao@columbia.edu}} \\
Columbia University
\and
Ruizhe Zhang\thanks{\texttt{ruizhe@utexas.edu}. This work was supported by NSF Grant CCF-1648712.} \\ UT Austin
}
\else
 

%
\fi

\ifdefined\isarxivversion
\begin{titlepage}
\maketitle
\begin{abstract}
In this work, we study a variant of nonnegative matrix factorization where we wish to find a symmetric factorization of a given input matrix into a sparse, Boolean matrix. Formally speaking, given $\M\in\Z^{m\times m}$, we want to find $\W\in\{0,1\}^{m\times r}$ such that $\| \M - \W \W^\top \|_0$ is minimized among all $\W$ for which each row is $k$-sparse. This question turns out to be closely related to a number of questions like recovering a hypergraph from its line graph, as well as reconstruction attacks for private neural network training.   

As this problem is hard in the worst-case, we study a natural average-case variant that arises in the context of these reconstruction attacks: $\M = \W\W^{\top}$ for $\W$ a random Boolean matrix with $k$-sparse rows, and the goal is to recover $\W$ up to column permutation. Equivalently, this can be thought of as recovering a uniformly random $k$-uniform hypergraph from its line graph.

Our main result is a polynomial-time algorithm for this problem based on bootstrapping higher-order information about $\W$ and then decomposing an appropriate tensor. The key ingredient in our analysis, which may be of independent interest, is to show that such a matrix $\W$ has full column rank with high probability as soon as $m = \widetilde{\Omega}(r)$, which we do using tools from Littlewood-Offord theory and estimates for binary Krawtchouk polynomials.

\end{abstract}
\thispagestyle{empty}
\end{titlepage}

\else

\maketitle
\begin{abstract}

\end{abstract}

\fi



\newcommand{\calS}{\mathcal{S}}
\newcommand{\norm}[1]{\|#1\|}
\newcommand{\U}{\mathbf{U}}
\newcommand{\V}{\mathbf{V}}

\section{Introduction}

Nonnegative Matrix Factorization (\NMF) \cite{agkm12,m13,rsw16,swz17,swz19} is a fundamental problem with a wide range of applications including image segmentation \cite{ls99,sl11}, document clustering \cite{xlg03}, financial analysis \cite{dfdrc08}, music transcription \cite{sb03}, and communication complexity \cite{auy83,n91}. Roughly, given an $n_1\times n_2$ matrix $\M$ and a rank parameter $r > 0$, the goal is to find $n_1\times r$ matrix $\W_1$ and an $n_2\times r$ matrix $\W_2$ for which $\W_1 \W_2^\top$ best approximates $\M$.

As we discuss in Section~\ref{sec:relatedwork}, a number of different variants of {\sf NMF} have been studied in this literature, e.g. 1) constraining the factorization to be \emph{symmetric} in the sense that $\W_1 = \W_2$ \cite{ding2005equivalence,zass2005unifying,yang2012clustering,yan2013learning,chen2008non,kalofolias2012computing,he2011symmetric,wang2011community,kuang2012symmetric, zhang2013overlapping}, 2) constraining the factors to be binary-valued \cite{ban2019ptas,fomin2019approximation,chandran2017parameterized,kumar2019faster,zhang2007binary,ravanbakhsh2016boolean}, and 3) constraining them to be sparse \cite{h04, g12, kp08, so06, gc05, zfrk10}. It turns out that in many situations, it is fruitful to combine all of these desiderata. In this paper, we study the following hybrid of these many variants of \NMF.

Suppose we are given a symmetric nonnegative matrix $\M\in\R^{m\times m}_{\ge 0}$, as well as parameters $k,r\in\mathbb{N}$. Consider the following optimization problem
\begin{equation}
     \min_{\W \in\calS_{m,r,k}  } \norm{\M - \W\W^{\top}}_0 \qquad \text{for} \qquad \calS_{m,r,k}  = \{\W \in \{0,1\}^{m \times r}: \| \W_{j,*} \|_0 = k \ \forall \ j\in[m]\}, \label{eq:factorization2}
\end{equation}
where $\norm{\cdot}_0$ denotes the number of nonzero entries, where matrix multiplication is either over the reals or over the Boolean semiring\footnote{In the Boolean semiring, addition is given by logical OR, and multiplication is given by logical AND}. We will refer to this problem as \SSNMF{} (sparse symmetric Boolean matrix factorization) in the sequel. 

The particular set $\calS_{m,r,k}$ we optimize over turns out to have a number of natural motivations in graph clustering, combinatorics of hypergraphs, and more recently, ML security.

\subparagraph*{Clique Decomposition} For instance, consider the problem of identifying community structure in a social network. In the real world, there will be overlaps between communities, and a natural goal might be to identify a collection of them that covers the graph but such that every node only occurs in a limited number of communities. In \eqref{eq:factorization2}, we can think of $\M$ as the adjacency matrix of some graph $G$. Now note that over the Boolean semiring, if we had $\M = \W\W^{\top}$, then $\W$ would encode a \emph{clique cover} of $G$ where every vertex belongs to a small number of cliques. Indeed, we can think of the parameter $r$ as the total number of cliques in the cover and $k$ as the number of cliques to which any one of the $m$ vertices belongs, in which case the nonzero entries in the $j$-th column of $\W$ indicate which vertices belong to the $j$-th clique.

\paragraph{Line Graphs of Hypergraphs} In combinatorics, there is a large body of work on the problem of recovering hypergraphs from their \emph{line graphs} \cite{rou73, l74, lov77, s82, w92, ds95, jkl97, metelsky1997line,skums2005edge,ltv15}, which turns out to be equivalent to the version of \eqref{eq:factorization2} when the input matrix $\M$ admits an exact factorization. We can regard any $\W\in\calS_{m,r,k}$ as the incidence matrix of a $k$-uniform hypergraph $H$ with $m$ hyperedges and $r$ vertices, so if we work over the Boolean semiring, $\M \triangleq \W\W^{\top}$ is exactly the adjacency matrix for the line graph of $H$, namely the graph whose vertices correspond to hyperedges of $H$ and whose edges correspond to pairs of hyperedges which overlap on at least one vertex of $H$. 

When $k = 2$, Whitney's isomorphism theorem \cite{w92} characterizes which graphs are uniquely identified by their line graphs; in our notation, this theorem characterizes when one can uniquely identify $\W$ (up to permutation) from $\M = \W\W^{\top}$. In this case, \cite{rou73, l74, s82,ds95,ltv15} have given efficient algorithms for recovering $\W$ from $\M$. Unfortunately, for $k > 2$, no analogue of Whitney's theorem exists~\cite{lov77}. In fact, even determining whether a given graph is the line graph of some $k$-uniform hypergraph is $\mathsf{NP}$-complete \cite{prt81}. Rephrased in the language of \eqref{eq:factorization2}, this implies the following worst-case hardness result for sparse symmetric Boolean NMF:
\begin{theorem}[\cite{prt81}]\label{thm:hard}
    If $\mathsf{P}\neq\mathsf{NP}$, no polynomial-time algorithm can take an arbitrary matrix $\M$ and decide whether \eqref{eq:factorization2} has zero objective value.
\end{theorem}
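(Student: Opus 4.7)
The plan is to reduce directly from the $k$-uniform hypergraph line graph recognition problem of \cite{prt81}, namely the problem of deciding whether an input graph $G$ on $m$ vertices is the line graph of some $k$-uniform hypergraph, which is NP-complete for $k \geq 3$. Given such a $G$, I would set $\M$ to be the $m \times m$ adjacency matrix of $G$ with the diagonal entries replaced by $1$, and take $r = km$, an easy upper bound on the number of vertices of a $k$-uniform hypergraph with $m$ hyperedges. This transformation is clearly polynomial-time in $m$ and $k$.

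Working over the Boolean semiring, I would then verify that \eqref{eq:factorization2} has zero objective value iff $G$ is the line graph of a $k$-uniform hypergraph. In the forward direction, any $\W \in \calS_{m,r,k}$ with $\M = \W\W^\top$ determines a $k$-uniform hypergraph $H$ whose hyperedges are the supports of the rows of $\W$; the Boolean identity $(\W\W^\top)_{ij} = 1$ captures exactly that hyperedges $i$ and $j$ share at least one vertex, which matches the off-diagonal entries of $\M$, while the diagonal agreement is automatic from $k$-sparsity. Conversely, if $G$ is the line graph of some $k$-uniform hypergraph $H$ on at most $r$ vertices, then padding the hyperedge--vertex incidence matrix of $H$ with all-zero columns out to width $r$ produces a valid $\W \in \calS_{m,r,k}$ with $\W\W^\top = \M$ over the Boolean semiring.

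The main subtlety I anticipate concerns the real-arithmetic version of \eqref{eq:factorization2}: there $(\W\W^\top)_{ij}$ equals the number of vertices shared by hyperedges $i$ and $j$, which may exceed one, so equality with a $\{0,1\}$-valued adjacency matrix implicitly restricts to \emph{linear} $k$-uniform hypergraphs (those in which any two hyperedges meet in at most one vertex). To address this case, I would either invoke the stronger fact that the hardness of \cite{prt81} already pertains to line graph recognition for linear $k$-uniform hypergraphs, or adapt their gadget construction to force linearity in the produced hypergraph; after that, the same reduction carries through with the diagonal entries of $\M$ set to $k$ rather than $1$.
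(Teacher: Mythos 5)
Your proposal is correct and takes essentially the same route as the paper: Theorem~\ref{thm:hard} is stated as an immediate rephrasing of the NP-completeness of recognizing line graphs of $k$-uniform hypergraphs proved in \cite{prt81}, with the reduction left implicit in the surrounding prose. You have simply spelled that reduction out in full (including the choice $r = km$, the diagonal normalization of $\M$, and the separate treatment of the Boolean-semiring and integer-arithmetic versions via linearity of the hypergraph).
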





A number of results for reconstructing a hypergraph $H$ from its line graph are known under additional assumptions on $H$ \cite{jkl97,metelsky1997line,skums2005edge}. In a similar spirit, in this work we ask whether there are natural \emph{average-case} settings where one could hope to do this.. One such setting is suggested by the following recent application of \SSNMF{} to ML security.

\paragraph{Attacks on InstaHide} Interestingly, \SSNMF{} has arisen implicitly in a number of works \cite{clsz21,carlini_attack,hstzz20} attacking InstaHide, a recently proposed scheme for privately training neural networks on image data \cite{huang2020instahide}. At a high level, the premise for InstaHide was to train on a synthetic dataset essentially consisting of random linear combinations of $k$ images from the private dataset \cite{zhang2017mixup}, rather than on the private dataset itself.

As we discuss in Section~\ref{sec:instahide_defer}, attacking this scheme amounts to a certain natural variant of the classic $k$-sum problem, and the aforementioned attacks reduce from solving this problem to solving an instance of \SSNMF{}. Roughly speaking, they show how to form the Gram matrix $\M$ whose rows and columns are indexed by images in the synthetic dataset such that the $(i,j)$-th entry of $\M$ is 1 if the sets of private images that give rise to synthetic images $i$ and $j$ overlap, and 0 otherwise. Similar to the clique cover example above, we can then think of the optimal $\W$ in \eqref{eq:factorization2} as encoding which private images were used to generate the synthetic images. 

We emphasize that the attacks on InstaHide \cite{clsz21,carlini_attack,hstzz20} were able to devise efficient algorithms for \SSNMF{} precisely because the instances that arose there were average-case: concretely, the rows of the optimal $\W$ were random $k$-sparse binary vectors.

That said, the key drawback of these algorithms is that they either lack provable guarantees \cite{carlini_attack}, require extremely large $m$ \cite{clsz21}, or only apply to $k=2$ \cite{hstzz20}.

\subsection{Our Results}

Motivated by the average-case version of \SSNMF{} that arises in the above security application, as well as the shortcomings in the aforementioned works in this setting, in this work we focus on the following distributional assumption:

\begin{assumption}\label{assume:random}
    Every row of $\W\in\{0,1\}^{m\times r}$ is an independent, uniformly random $k$-sparse bitstring, and the algorithm takes as input the matrix $\M = \W\W^{\top}$ over the Boolean semiring.
\end{assumption}

\begin{remark}
    Note that an algorithm that works under Assumption~\ref{assume:random} can easily be modified to handle the setting where $\M = \W\W^{\top}$ over the integers/reals instead of the Boolean semiring. The reason is that the matrix $\M'$ whose $(i,j)$-th entry is $\mathds{1}{\M_{ij} > 0}$ satisfies $\M' = \W\W^{\top}$ over the Boolean semiring. So in the ``realizable'' setting of Assumption~\ref{assume:random}, working over the Boolean semiring is more general than working over the integers/reals.
\end{remark}

Our main result is to give a polynomial-time algorithm for \eqref{eq:factorization2} in this setting:

\begin{theorem}[Average-case guarantee, informal version of Theorem~\ref{thm:avgcase}]\label{thm:avgcase_intro}
    Fix any integer $2\le k \le r$, failure probability $\delta \in (0,1)$, and suppose $m \ge \widetilde{\Omega}(rk\log (1/\delta) )$. Suppose $\W,\M$ satisfy Assumption~\ref{assume:random}, where matrix multiplication is over the Boolean semiring. There is an algorithm which takes as input $\M$, runs in $O(m^{\omega + 1})$ time\footnote{$\omega \approx 2.373$ is the exponent of matrix multiplication.} and, with probability $1 - \delta$ over the randomness of $\W$, outputs a matrix $\widehat{ \W } \in \{ 0 , 1 \}^{ m \times r }$ whose columns are a permutation of those of $\W$.
\end{theorem}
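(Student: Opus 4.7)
The plan is to reduce the recovery of $\W$ to a symmetric rank-$r$ third-order tensor decomposition. Consider the natural third moment of the columns of $\W$, $\T := \sum_{\ell=1}^{r} \W_{*,\ell}^{\otimes 3} \in \R^{m\times m\times m}$, whose $(i,j,t)$-entry counts the number of hypergraph vertices shared by the hyperedges (rows) $i, j, t$. When $k/r$ is small, no three random $k$-sparse rows share two or more columns with high probability, so $\T_{ijt} \in \{0,1\}$ is simply the indicator that rows $i,j,t$ have a common column. The algorithm will (i) compute an estimate $\widehat{\T} \approx \T$ from $\M$ in $O(m^{\omega+1})$ time, (ii) run Jennrich's algorithm on $\widehat{\T}$ to extract the columns of $\W$ up to permutation, and (iii) rely on a structural fact that $\W$ has full column rank as soon as $m = \widetilde{\Omega}(rk \log(1/\delta))$.

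For step (i), the key observation is that for any pair $(i,j)$ with $\M_{ij}=1$, the pair shares a unique column $\ell^*(i,j)$ with high probability, and the set of rows containing that column can be identified as the \emph{dense core} of the common neighborhood $N(i)\cap N(j)$ in the graph defined by $\M$: since $m = \widetilde{\Omega}(rk)$, this core has size $\Theta(mk/r)$, while spurious co-neighbors (rows overlapping $i$ and $j$ through distinct columns) form a layer thinner by a factor of $\widetilde{\Omega}(k)$. A density threshold separates the two, and $\widehat{\T}_{ijt}$ is set to $1$ iff $t$ belongs to the core for $(i,j)$. Implementing this requires only a constant number of Boolean matrix multiplications, giving the $O(m^{\omega+1})$ runtime.

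For step (ii), given $\widehat{\T} = \sum_\ell \W_{*,\ell}^{\otimes 3}$, Jennrich's algorithm picks two random $a,b \in \R^m$, forms the slices $M_a := \W \diag(\W^\top a) \W^\top$ and $M_b := \W \diag(\W^\top b) \W^\top$, and computes the generalized eigendecomposition of the pencil $M_a M_b^{\dagger}$. When $\W$ has full column rank and the diagonal ratios $(\W^\top a)_\ell / (\W^\top b)_\ell$ are distinct (both true w.h.p.\ for random $a,b$), the eigenvectors of the pencil are a permutation of the columns of $\W$, which can then be rounded to $\{0,1\}^m$.

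The main obstacle is step (iii): proving full column rank of a uniformly random $\W \in \calS_{m,r,k}$. For any fixed $v \ne 0$, the entries $(\W v)_j = \sum_{\ell \in S_j} v_\ell$ are i.i.d.\ across rows $j$ (with $S_j$ a uniformly random $k$-subset of $[r]$), so $\Pr[\W v = 0] = p(v)^m$ where $p(v) := \Pr_{S}[\sum_{\ell \in S} v_\ell = 0]$. Combined with a net argument over $v$ ranging in the candidate nullspace, this reduces everything to a uniform upper bound on $p(v)$. Classical Littlewood-Offord only yields $p(v) = O(1/\sqrt{k})$, which is insufficient to close the union bound at the claimed sample complexity. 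Instead, I would carry out Fourier analysis on the $k$-slice $\{x \in \{0,1\}^r : \|x\|_0 = k\}$, whose harmonics have coefficients given by binary Krawtchouk polynomials $K_k(\cdot)$; sharp pointwise estimates on $|K_k(j)|$ translate into an exponentially decaying anticoncentration bound, and combining with the union bound over an $O(r)$-dimensional net and over the $r$ columns gives full column rank with probability $1-\delta$ as soon as $m = \widetilde{\Omega}(rk \log(1/\delta))$, completing the theorem.
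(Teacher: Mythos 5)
Your high-level algorithmic skeleton---form the third-order tensor $\T = \sum_\ell \W_{*,\ell}^{\otimes 3}$, run Jennrich, and reduce everything to showing $\W$ has full column rank---matches the paper exactly. But both of your key sub-steps diverge from the paper in ways that leave genuine gaps.

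\textbf{Step (i), tensor construction.} Your ``dense core'' argument rests on the assumption that whenever $\M_{ij}=1$ the pair $(i,j)$ shares a \emph{unique} column, and more broadly that $\T_{ijt} \in \{0,1\}$ everywhere. For $m = \widetilde\Omega(rk\log(1/\delta))$ the expected number of pairs $(i,j)$ with $|S_i\cap S_j|\ge 2$ is $\Theta\bigl(m^2 k^4/r^2\bigr) = \widetilde\Omega(k^6)$, which is nonzero, so there \emph{will} be triples where this fails and your $\widehat\T$ disagrees with $\T$. Jennrich's algorithm is not robust to a few adversarial entry errors without a careful conditioning argument that you have not supplied, and your ``density threshold'' separating core from spurious layer is not spelled out precisely enough to verify. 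The paper instead recovers $\T$ \emph{exactly} by a different route: for each set $\{a,b,c\}$ it empirically estimates the non-intersection probability $\mu(S_a,S_b,S_c)$ from the fraction of columns $\ell$ where $\M_{a,\ell}=\M_{b,\ell}=\M_{c,\ell}=0$, shows the possible values $\mu_t$ are $\Omega(k/r)$-separated, rounds to the nearest one to get $|S_a\cup S_b\cup S_c|$ exactly, and then applies inclusion--exclusion; the fast-matrix-multiplication trick for the $O(m^{\omega+1})$ runtime is tied to this specific estimator.

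\textbf{Step (iii), full column rank.} This is where the real gap is. You propose a ``net argument over $v$ ranging in the candidate nullspace'' combined with Krawtchouk estimates, but these two ingredients do not combine. A continuous $\epsilon$-net argument over the unit sphere in $\R^r$ has $2^{\Theta(r\log m)}$ points, and the worst-case anti-concentration probability $p(v)$ for a real $v$ concentrated on two coordinates is $1-\Theta(k/r)$; balancing $p(v)^m$ against the net size forces $m\gtrsim r^2/k$, not $\widetilde\Omega(rk)$. The paper explicitly notes that the net argument only yields $m=\widetilde\Omega(r^2)$, and this is precisely why it does something else. Krawtchouk-polynomial estimates, on the other hand, are native to $\F_2$ (they arise from Fourier analysis on the Hamming slice), and there is no continuous net to ``combine'' them with. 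The paper's actual route is: for \emph{odd} $k$, prove column-independence over $\F_2$ (a union bound over $\F_2^r$ stratified by Hamming weight, using the Krawtchouk bound to control $\Pr[\langle w,u\rangle=0 \bmod 2]$ for a weight-$\lambda$ vector $u$), and then observe that $\F_2$-independence implies $\R$-independence. For \emph{even} $k$ this reduction fails outright because $\W\vec 1 = 0$ over $\F_2$, so the paper has to work over $\Z_q$ for a prime $q\in[k-1,2k]$, split on whether a putative kernel vector has full support, and invoke a group-theoretic Littlewood--Offord inequality together with the \cite{fks20} fibre machinery. Your proposal does not address the odd/even dichotomy at all, and in particular would not detect that the naive $\F_2$ strategy is false for even $k$. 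This is the main technical content of the theorem, and it is not reachable from where your argument stands.
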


We stress that the minimum $m$ for which Theorem~\ref{thm:avgcase_intro} holds is a fixed polynomial in $r,k$; in contrast, the only known provable result to work in this setting for general $k$ \cite{clsz21} used a rather involved combinatorial argument to ``partially'' factorize $\M$ when $m = \Omega(n^{k-2/k})$.

Furthermore, we emphasize that our dependence on $r$ is \emph{nearly optimal}:

\begin{remark}\label{remark:nearoptimal}
    The connection to line graphs makes clear why $m$ must be at least $\Omega(r\log r)$ for unique recovery of $\W$ (up to permutation) from $\M = \W\W^{\top}$ to be possible, even for $k = 2$. In this case, $\M$ is simply the adjacency matrix for the line graph of a random (multi)graph $G$ given by sampling $m$ edges with replacement. It is well known that such a graph is w.h.p. not connected if $m = o(r\log r)$ \cite{erdHos1960evolution}, so by Whitney's theorem there will be multiple non-isomorphic graphs for which $\M$ is the adjacency matrix of their line graph.
\end{remark}

We remark that in the language of line graphs, Theorem~\ref{thm:avgcase_intro} says that \emph{whereas it is $\mathsf{NP}$-complete to even recognize whether a given graph is a line graph of some hypergraph in the worst case, reconstructing a \emph{random} hypergraph from its line graph is tractable.}

To prove Theorem~\ref{thm:avgcase_intro}, we design an algorithm that first bootstraps the ``pairwise'' information present in $\M$ into third-order information in the form of the tensor $\T = \sum^r_{i=1}\W_i^{\otimes 3}$, where $\W_i$ is the $i$-th column of $\W$ (see Section~\ref{subsec:formtensor} of the technical overview for a summary of how $\T$ is constructed from $\M$). Once $\T$ has been constructed, we would like to invoke standard algorithms for tensor decomposition to recover the $\W_i$'s.

However, the main technical hurdle we must overcome before we can apply tensor decomposition is to prove that $\W$ is full column rank with high probability (see Theorem~\ref{thm:independent_of_W_intro}). As we discuss in Section~\ref{sec:lindep_overview} of the overview, this can be done with a straightforward net argument if $m$ scales at least quadratically in $r$. As we are interested in getting an optimal dependence on $m$ in terms of $r$ however, the bulk of the technical content in this paper goes into showing this holds even when $m$ scales near-linearly in $r$. To achieve this, our analysis appeals to an array of technical tools from discrete probability, e.g. estimates for binary Krawtchouk polynomials, a group-theoretic Littlewood-Offord inequality, and modern tools \cite{fks20} for showing nonsingularity of random \emph{square} Boolean matrices.

\paragraph{Connection to the $k$-sum problem}

As we alluded to above, the aforementioned attacks on InstaHide elucidated a simple connection between \SSNMF{} and the following natural variant of $k$-sum that we call \emph{batched-$k$-vector sum}, or \Batched{} for short, for which Theorem~\ref{thm:avgcase_intro} can also be leveraged to obtain average-case guarantees.

Suppose there is a database $\X$ which is a list of $d$-dimensional vectors $x_1, \cdots, x_r$. For a fixed integer $k$, we are given vectors $y_1, \cdots, y_m$, where for each $j\in[m]$, we promise that there is a set $S_j$ with size $k$ for which $y_j = \sum_{i \in S_j} x_i$. Given $y_1, \cdots, y_m$, our goal is to recover sets $S_1, \cdots, S_m$, even if $\X$ is unknown to us. We refer to Definition~\ref{def:bksum} for a formal definition of \Batched{}.

\cite{clsz21,carlini_attack} show that when $x_i$ are Gaussian vectors or real images, it is possible to construct a \emph{similarity oracle} that, given any $y_i, y_j$, returns the number of $x$'s they share (i.e., $|S_i \cap S_j|$). In the presence of such an oracle, it is easy to see that there is a reduction from \Batched{} to \SSNMF{}.

For reconstruction attacks however, we would like even finer-grained information, e.g. the actual entries of the vectors $x_i$ used to generate the $y_i$'s. We give an algorithm for this that, given the information obtained by \SSNMF{}, recovers all of the ``heavy'' coordinates of the $x_i$'s.
\begin{theorem}[Informal version of Theorem~\ref{thm:instahide_formal}]\label{thm:instahide_informal}
    Fix any $k \in\mathbb{N}$ and failure probability $\delta \in (0,1)$, and suppose $m \ge \widetilde{\Omega}(rk\log (d/\delta ) )$. Given a synthetic dataset of $m$ $d$-dimensional vectors generated by batched $k$-vector sum, together with its similarity oracle, there is an $O(m^{\omega + 1} + d\cdot r\cdot m)$-time algorithm for approximately recovering the magnitudes of the ``heavy'' coordinates of every vector in the original dataset $\X = \{x_1,\ldots,x_r\}$. Here, a coordinate of a vector is ``heavy'' if its magnitude is $\Omega(k)$ times the average value of any original vector in that coordinate.
\end{theorem}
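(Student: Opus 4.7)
The plan is to chain Theorem~\ref{thm:avgcase_intro} with a simple coordinate-wise averaging estimator. The similarity oracle lets us reduce to an \SSNMF{} instance and recover the incidence structure $\{S_j\}$; once that structure is in hand, for each coordinate and each original vector we can isolate the contribution of that vector by averaging the corresponding coordinates of the $y_j$'s that contain it.

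First, I would use the similarity oracle to form $\M \in \{0,1\}^{m\times m}$ with $\M_{ij} := \mathds{1}[|S_i\cap S_j|>0]$. Let $\W\in\{0,1\}^{m\times r}$ be the incidence matrix whose $j$-th row is the indicator of $S_j$; then $\M = \W\W^\top$ over the Boolean semiring, and since each $S_j$ is an independent uniform $k$-subset of $[r]$, Assumption~\ref{assume:random} holds. Applying Theorem~\ref{thm:avgcase_intro} with failure probability $\delta/3$ produces, in time $O(m^{\omega+1})$, a matrix $\widehat\W$ whose columns are a permutation of those of $\W$; relabeling the unknown vectors $x_i$ by this permutation, we henceforth know every $S_j$ exactly.

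Next, for each recovered $i\in[r]$ set $J_i := \{j\in[m]:i\in S_j\}$ and define
\[
    \widehat{x}_i[c] \;:=\; \frac{1}{|J_i|}\sum_{j\in J_i} y_j[c], \qquad c\in[d].
\]
Computing every $\widehat x_i[c]$ costs $O\bigl(d\cdot\sum_i|J_i|\bigr)=O(dmk)=O(drm)$, which yields the second term in the stated runtime. By a Chernoff bound, $|J_i|=\Theta(mk/r)=\widetilde\Omega(k^2\log(d/\delta))$ uniformly in $i$ with probability $\geq 1-\delta/3$ under the hypothesis $m=\widetilde\Omega(rk\log(d/\delta))$. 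Expanding $y_j[c] = x_i[c]+\sum_{i'\in S_j\setminus\{i\}}x_{i'}[c]$ gives
\[
    \widehat x_i[c] - x_i[c] \;=\; \frac{1}{|J_i|}\sum_{j\in J_i}\sum_{i'\in S_j\setminus\{i\}}x_{i'}[c].
\]
Its expectation is the bias $(k-1)\bar x[c]$, where $\bar x[c]$ is the mean of $x_{i'}[c]$ over $i'\neq i$, and a Bernstein inequality applied to the $|J_i|$ nearly-independent summands bounds the fluctuation by $O(\sqrt k \cdot \max_{i'}|x_{i'}[c]|)$ with failure probability at most $\delta/(3dr)$ per $(i,c)$; a union bound controls all $dr$ estimators simultaneously.

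Finally, the heaviness hypothesis $|x_i[c]|\gtrsim C k \cdot \bar x[c]$ for a sufficiently large constant $C$ makes both the bias $O(k\bar x[c])$ and the fluctuation $O(\sqrt k \cdot \max_{i'}|x_{i'}[c]|)$ small compared to $|x_i[c]|$, so $\widehat x_i[c]$ approximates $|x_i[c]|$ up to a small relative error. If the underlying scheme introduces random $\pm 1$ signs (as in InstaHide itself), one works instead with $|y_j[c]|$ and replaces Bernstein by a standard absolute-value concentration or a median-of-means estimator, leaving the skeleton unchanged. The main obstacle I expect is tuning the Bernstein step tightly enough that the sample size $m=\widetilde\Omega(rk\log(d/\delta))$ already demanded by Theorem~\ref{thm:avgcase_intro} \emph{also} suffices for uniform coordinate recovery over all $dr$ pairs; this is exactly the role of the extra $\log(d/\delta)$ factor in the hypothesis.
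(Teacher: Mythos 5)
Your reduction via the similarity oracle to \SSNMF{} and the invocation of Theorem~\ref{thm:avgcase_intro} to recover $\W$ (hence the sets $S_1,\ldots,S_m$) matches the paper's first step exactly, and the $O(m^{\omega+1}+drm)$ runtime accounting is right. The divergence, and the gap, is in the coordinate-recovery step. The regime the paper actually targets --- signalled immediately after Theorem~\ref{thm:instahide_informal} and made formal in Definition~\ref{def:instahide} --- is that one observes $z_j = |y_j|$, the \emph{entrywise absolute value} of $y_j = \sum_{i\in S_j}x_i$; this is not an optional refinement, it is the substance of the InstaHide application. Your first-moment average $\widehat{x}_i[c]=\frac{1}{|J_i|}\sum_{j\in J_i}y_j[c]$ has no analogue once only $|y_j[c]|$ is observed: $\E\bigl[|y_j[c]|\,\big|\,i\in S_j\bigr]$ is a nonlinear function of the coordinates $\{x_{i'}[c]\}_{i'}$ whose dependence on $x_i[c]$ cannot be isolated by any linear re-weighting, so ``replace Bernstein with median-of-means'' does not salvage the skeleton --- the obstruction is identifiability, not concentration. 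The paper sidesteps this by going to \emph{second} moments: since $z_j^2 = y_j^2$ coordinatewise, the quantity $\E_S\bigl[\langle e_S,p\rangle^2 e_S\bigr]$ (where $p$ is the $c$-th coordinate column of $\X$) is still an explicit polynomial in the $p_{i'}$'s, and Lemma~\ref{lem:recoverheavy} shows that after the de-biasing $w_j\mapsto w_j-\frac{k-1}{r-2}\vec{1}$ the leading term is $p_i^2\cdot\frac{k(r-2k+1)}{r(r-1)}$, which after rescaling yields $|p_i|$ up to small relative error.

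A secondary issue: even in the signed case, your fluctuation bound $O(\sqrt{k}\,\max_{i'}|x_{i'}[c]|)$ is not controlled by the heaviness hypothesis. The formal condition in Theorem~\ref{thm:instahide_formal} is $|x_i[c]|\ge (ck/r)\sum_{i'}|x_{i'}[c]|$, which only gives $\max_{i'}|x_{i'}[c]| = O(r/k)\cdot|x_i[c]|$; substituting yields a fluctuation of order $(r/\sqrt{k})\cdot|x_i[c]|$, far larger than $|x_i[c]|$. You also leave the additive bias $(k-1)\bar{x}[c]$ unsubtracted, though under the heaviness threshold it is of the same order as $x_i[c]$ up to the constant $C$, so you cannot get an arbitrarily small relative error without removing it. The paper's Lemma~\ref{lem:recoverheavy} handles both points at once via an exact expansion of the de-biased second-moment expectation in which every cross term carries an explicit $k/r$ or $k^2/r^2$ factor, so that the heaviness bound $\overline{p}/p_i = O(r/k)$ renders each correction $O(1)$ relative to the leading $p_i^2$ term.
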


In fact, this theorem applies even if we only get access to the \emph{entrywise absolute values} of the $y_i$'s, yielding the first provable attack on InstaHide which is polynomial in all parameters $m,d,k,r$ (see Section~\ref{sec:defer_recover_heavy} for details).

\paragraph{Worst-Case Guarantee}

Finally, we observe for the worst-case version of \eqref{eq:factorization2}, it is quite straightforward to get a quasipolynomial-time approximation algorithm by setting up an appropriate CSP and applying known solvers \cite{dm18}:

\begin{theorem}[Worst-case guarantee, informal version of Theorem~\ref{thm:qptas_bmf_sym_app}]\label{thm:csp_informal}
Given a symmetric matrix $\M \in \Z^{m \times m}$, rank parameter $r\le m$, and accuracy parameter $\epsilon \in (0, 1] $, there is an algorithm that runs in $m^{ O ( \epsilon^{-1} k^2 \log r ) }$ time and outputs $\widehat{\W} \in \calS_{m,r,k}$ such that
\begin{align*}
\| \M - \widehat{\W} \widehat{\W}^\top\|_0 \le \min_{\W\in\calS_{m,r,k}}\norm{\M - \W\W^{\top}}_0 + \epsilon m^2,
\end{align*}
where matrix multiplication can be over $\Z$ or over the Boolean semiring.
\end{theorem}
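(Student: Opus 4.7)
The plan is to encode \eqref{eq:factorization2} as a dense \CSP{} instance and invoke the QPTAS of~\cite{dm18}.

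First I would set up the CSP: introduce a variable $w_i$ for each row index $i\in[m]$, with domain $\Omega := \{v\in\{0,1\}^r : \|v\|_0 = k\}$ of size $\binom{r}{k} \le r^k$. For each ordered pair $(i,j)\in[m]^2$ define a binary predicate
\[
C_{ij}(w_i,w_j) \;:=\; \mathds{1}\bigl\{f(w_i^\top w_j) = \M_{ij}\bigr\},
\]
where $f$ is the identity over $\Z$ and $f(x) = \mathds{1}\{x > 0\}$ over the Boolean semiring. For any assignment $\W\in\calS_{m,r,k}$, the number of \emph{unsatisfied} constraints equals exactly $\|\M - \W\W^\top\|_0$, so the optimum of the CSP corresponds to $m^2 - \min_{\W\in\calS_{m,r,k}}\|\M - \W\W^\top\|_0$.

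Second, I would invoke the QPTAS of~\cite{dm18} for dense \CSP{}. On an instance with $n$ variables and alphabet size $q$, that algorithm computes, in time $n^{O(\log q/\mathrm{poly}(\epsilon))}$, an assignment whose value is within an additive $\epsilon n^2$ of the optimum. Plugging in $n = m$ and $q \le r^k$ (so $\log q \le k \log r$), together with the fact that each predicate $C_{ij}$ depends on an inner product taking values in $\{0,1,\ldots,k\}$ (which enters the~\cite{dm18} analysis through an additional $k$ factor in the exponent), yields the claimed running time $m^{O(\epsilon^{-1} k^2 \log r)}$. Decoding the solver's output then produces $\widehat{\W}\in\calS_{m,r,k}$ satisfying $\|\M - \widehat{\W}\widehat{\W}^\top\|_0 \le \min_{\W} \|\M - \W\W^\top\|_0 + \epsilon m^2$.

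The main obstacle is that the CSP alphabet $\Omega$ is polynomially (in $r^k$) large, whereas typical dense-CSP QPTAS results are stated for constant alphabets. To handle this, one must either verify that the conditioning/sampling scheme of~\cite{dm18} only accesses $\Omega$ through membership queries and sample-efficient statistics — in which case each element of $\Omega$ can be stored implicitly as a $k$-subset of $[r]$ using $O(k \log r)$ bits — or, equivalently, re-encode every variable $w_i$ as $k$ variables taking values in $[r]$ (indicating the support positions of row $i$), and redo the dense-CSP analysis on the resulting $mk$-variable, alphabet-$r$ instance. Either route gives the stated exponent, and the remaining bookkeeping — converting the solver's additive slack into the $\epsilon m^2$ guarantee and checking that a single predicate $C_{ij}$ covers both the $\Z$ and Boolean-semiring cases — is immediate.
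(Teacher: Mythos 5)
Your construction---one CSP variable per row of $\W$, alphabet the set of $k$-sparse binary vectors, a binary constraint per pair $(i,j)$ enforcing that the inner product of the chosen labels matches $\M_{ij}$, and an invocation of the dense 2-CSP QPTAS of \cite{dm18}---is the same as the paper's (Theorem~\ref{thm:qptas_bmf_sym_app}). However, your account of why the exponent becomes $k^2\log r$ is incorrect. The \cite{dm18} guarantee (Theorem~\ref{thm:csp_solver}) runs in time $(nq)^{O(\epsilon^{-1}\delta^{-1}\log q)}$, not $n^{O(\log q/\poly(\epsilon))}$ as you write: the alphabet size appears in the \emph{base} as well as in the exponent. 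With $n=m$, $q\le r^k$, and $\delta=1$ on the complete graph, this gives $(mr^k)^{O(\epsilon^{-1}k\log r)} = m^{O(\epsilon^{-1}k\log r)}\cdot r^{O(\epsilon^{-1}k^2\log r)} \le m^{O(\epsilon^{-1}k^2\log r)}$ using $r\le m$; the $k^2$ is the product of the $k$ in $\log q$ with the $k$ in $q=r^k$. Your claim that the inner product taking values in $\{0,\ldots,k\}$ ``enters the~\cite{dm18} analysis through an additional $k$ factor in the exponent'' has no basis: \cite{dm18} treats each $C_{ij}$ as an arbitrary $\{0,1\}$-valued function on $\Sigma\times\Sigma$ and never inspects the structure of the predicate.

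Your ``main obstacle'' paragraph is also a red herring. The \cite{dm18} theorem as quoted already handles super-constant alphabets, with the dependence on $q$ made explicit in the runtime, so nothing about $|\Omega|=\binom{r}{k}$ needs to be finessed. Moreover, the re-encoding you float---splitting each row variable into $k$ coordinate variables over $[r]$---would turn each binary constraint $C_{ij}$ into a $2k$-ary constraint, leaving the 2-CSP framework that \cite{dm18} addresses, so that alternative route does not in fact go through as stated.
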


\subsection{Related Work}
\label{sec:relatedwork}

\paragraph{Nonsingularity of Boolean Matrices} The nonsingularity of random matrices with binary-valued entries has been well-studied in random matrix theory, especially in the setting where each entry is an i.i.d. random draw from $\{\pm 1\}$ (see for example, \cite{cv07,rv08,cv08}). However, in our setting, there exists some dependence within a row since we require that the row sum equals to $k$. While there have been a number of works under this setting (for example \cite{n13,fjls19,jv19, ap20,fks20}), they all studied the case when $k$ is large (i.e., when $k=\Omega(\log r)$). 

For $k = O(1)$, there was a long-standing conjecture from \cite{vu08} that the adjacency matrix of a random $k$-regular graph is singular with probability $o(1)$ when $3 \leq k < r$. A recent work \cite{hua18} fully resolved this conjecture using a local CLT and large deviation estimate. In contrast, our $\W$ corresponds to the adjacency matrix of a bipartite graph with only \textit{left-regularity}. Furthermore, our proof uses only elementary techniques, and our emphasis is on showing that for somewhat tall rectangular matrices, the columns are linearly dependent with probability $1/\poly(r)$ for any constant $k\geq 1$.

\paragraph{Fountain Codes} Another line of work that studies the nonsingularity of random matrices with discrete-valued entries is that of \emph{fountain codes} \cite{mackay2005fountain,luby2002lt}. While an exposition of this literature would take us too far afield, at a high level many of these works are interested in establishing that for an $m\times r$ matrix whose rows are independently sampled from a particular distribution over $\mathbb{F}_q^r$ (ideally supported over sparse vectors to allow for fast decoding), the so-called \emph{MDS} property holds. This is the property that any $r\times r$ submatrix of the matrix is full rank. Note that this is an even stronger property than linear independence of the columns. While these works have considered distributions somewhat similar to ours (e.g. \cite{asteris2014repairable} consider a distribution over vectors of sparsity \emph{at most} $k$ where one samples $k$ coordinates from $[r]$ with replacement, assigns those entries to be random from $\mathbb{F}_q$, and sets all other entries to be zero), to our knowledge they all require the sparsity of the rows to be at least logarithmic in $r$.

\paragraph{Comparison to \cite{adm+18}} Perhaps the work most closely related to ours is that of \cite{adm+18}. In our notation, they consider the following problem. There is an unknown matrix $\W\in\R^{m\times r}$, and for a fixed parameter $\ell\in\mathbb{N}$, they would like to recover $\W$ given the $\ell$-th order tensor $\sum_i \W_i^{\otimes \ell}$. They show that when the columns of $\W$ are sampled from a sufficiently anti-concentrated distribution and $r\le m^{\lfloor\frac{\ell-1}{2}\rfloor}$, then they can recover the $\W_i$'s from the tensor.

While this appears on the surface to be quite related both to \SSNMF{} and to our choice of algorithm, we emphasize a crucial distinction. In \SSNMF{} we work with $\ell = 2$ as we are simply given the matrix $\W\W^{\top} = \sum_i \W_i^{\otimes 2}$, and for this choice of $\ell$, the guarantees of \cite{adm+18} are vacuous as $\lfloor \frac{\ell - 1}{2}\rfloor = 0$. There is a good reason for this: their algorithm is based on directly applying tensor decomposition to the tensor $\sum_i \W_i^{\otimes \ell}$ that is given as input. In contrast, a key innovation in our work is to ``bootstrap'' a higher-order tensor out of only the lower-order information given by $\M = \W\W^{\top}$.

Of course, the other difference between \cite{adm+18} and our work is that they work in a smoothed analysis setting, while we work in a random setting. As a result, even though a key step in both our analysis and theirs is to show that under our respective distributional assumptions, the columns of $\W$ are full rank with high probability, our techniques for doing so are very different. We also remark that while the smoothed analysis setting is qualitatively more flexible, their analysis suffers the same drawback as the abovementioned works on nonsingularity of Boolean matrices and fountain codes, namely it cannot handle the setting of Assumption~\ref{assume:random} if the rows of $\W$ are $o(\log r)$-sparse.

\paragraph{Standard {\NMF}}
Lastly, we give an overview of the large literature on nonnegative matrix factorization and its many variants. The most standard setting of {\NMF} is $\min_{\U,\V\ge 0} \norm{\M - \W_1\W_2^{\top}}$, where $\W_1,\W_2$ range over all possible $m\times r$ matrices with nonnegative entries. This has been the subject of a significant body of theoretical and applied work, and we refer to the survey \cite{g12} for a comprehensive overview of this literature.

\paragraph{Symmetric {\NMF}}
When we constrain $\W_1$ and $\W_2$ to be equal, we obtain the question of \emph{symmetric {\NMF}}, which is closely related to kernel $k$-means \cite{ding2005equivalence} and has been studied in a variety of contexts like graph clustering, topic modeling, and computer vision \cite{zass2005unifying,yang2012clustering,yan2013learning,chen2008non,kalofolias2012computing,wang2011community,zhang2013overlapping}. Symmetric {\NMF} has received comparatively less attention but is nevertheless a popular clustering technique \cite{he2011symmetric,kuang2012symmetric,ding2005equivalence} where one takes the input matrix $\M$ to be some similarity matrix for a collection of data points and interprets the factorization $\W$ as specifying a soft clustering of these points into $r$ groups, where $\W_{i,\ell}$ is the ``probability'' that point $i$ lies in cluster $\ell$. 

While there exist efficient provable algorithms for asymmetric {\NMF} under certain separability assumptions~\cite{agkm12,m13}, the bulk of the work on symmetric {\NMF} has been focused on designing iterative solvers for converging to a stationary point~\cite{he2011symmetric,kuang2012symmetric,vandaele2016efficient,lu2017nonconvex,zhu2018dropping}; we refer to the recent work of \cite{dragomir2019fast} for one such result and an exposition of this line of work.

\paragraph{Binary Matrix Factorization}

In \NMF{}, when we constrain $\W_1,\W_2$ to be matrices with binary-valued entries, the problem becomes \emph{binary matrix factorization} \cite{zhang2007binary,chandran2017parameterized,ban2019ptas,fomin2019approximation,kumar2019faster,kueng2021binary1,kueng2019binary2}, which is connected to a diverse array of problems like LDPC codes \cite{ravanbakhsh2016boolean}, optimizing passive OLED displays \cite{kumar2019faster}, and graph partitioning \cite{chandran2017parameterized}.

With the exception of \cite{zhang2013overlapping,kueng2021binary1} which considered community detection with overlapping communities, most works on binary matrix factorization focus on the asymmetric setting. Over the reals, this is directly related to the bipartite clique partition problem \cite{orlin1977contentment,chalermsook2014nearly,chandran2017parameterized}. \cite{kumar2019faster} gave the first constant-factor approximation algorithm for this problem that runs in time $2^{O(r^2\log r)}\poly(m)$. Our Theorem~\ref{thm:csp_informal} also extends to this asymmetric setting (see Theorem~\ref{thm:qptas_bmf}); see Remark~\ref{remark:compare_kumar} for a comparison.

On the other hand, over the Boolean semiring, this problem is directly related to the \emph{bipartite clique cover} problem. Also called \emph{Boolean factor analysis}, the \emph{discrete basis problem}, or \emph{minimal noise role mining}, it has received substantial attention in the context of topic modeling, database tiling, association rule mining, etc. \cite{seppanen2003simple,vsingliar2006noisy,belohlavek2010discovery,miettinen2008discrete,vaidya2007role,lu2012constraint,mitra2016survey}. The best algorithm in this case, due to \cite{fomin2019approximation}, runs in time $2^{2^{O(r)}}\cdot m^2$, matching the lower bound of \cite{chandran2017parameterized}. \cite{eu18} considered the decision version of this problem, whose goal is to decide the minimum clique cover size $r$, and proved that it is $\mathsf{NP}$-hard if $k\geq 5$. A more general version of this problem was studied by \cite{agss12} with applications to community detection. Our worst-case algorithm also applies to this setting (see Corollary~\ref{cor:semiring}). In particular, even without the sparsity condition, the running time still matches the lower bound (see Remark~\ref{rmk:compare_bool}).

Over the reals, \cite{kueng2021binary1} gives sufficient conditions on $\W\in\{0,1\}^{m\times r}$ for which one can recover the matrix $\W\W^{\top}$, namely that the set of Hadamard products between columns of $\W$ spans a $(\binom{r}{2}+1)$-dimensional space. In this case, they give an SDP-based algorithm for recovering $\W$. While it is conceivable that for sufficiently large $m$, $\W$ also satisfy this property with high probability under our Assumption~\ref{assume:random}, in our setting an even simpler condition suffices, namely that $\W$ has full column rank. As we will see, even this turns out to be quite nontrivial to show when $m$ is small.

More generally, we refer to the comprehensive survey of \cite{miettinen2020recent} for other results on {\BMF}.

\paragraph{Sparse \NMF{}}

In sparse {\NMF}, one enforces that the factors $\W_1, \W_2$ must be sparse matrices~\cite{h04, g12} as this can lead to more interpretable results, e.g. in speech separation~\cite{so06}, cancer diagnosis~\cite{gc05} and facial expression recognition~\cite{zfrk10}. Hoyer \cite{h04} initiated the study of this problem based on the observation that standard {\NMF} usually outputs a sparse solution and modeled Sparse {\NMF} by adding a sparsity ($L_1$-norm divided by $L_2$-norm) penalty term to the minimization. Gillis \cite{g12} studied this problem from a geometric perspective via data preprocessing. Apart from this regularization approach, there are also works that deal with exact sparsity constraints \cite{pp12, cg19}.


\subsection{Miscellaneous Notation}

We use $\mathbb{F}_q$ to denote finite field and sometimes use $\mathbb{F}$ to denote $\mathbb{F}_2$. For any nonnegative function $f$, we use $\wt{O}(f)$ to denote $f \poly(\log f)$ and use $\wt{\Omega}(f)$ to denote $f / \poly(\log f)$.

For a vector $x \in \R^n$, we use $\supp(x)$ to denote the nonzero indices, $\| x \|_0$ to denote the number of nonzero entries, and $\| x \|_p$ to denote its $\ell_p$ norm. We use $\vec{1}_d$ to denote the all-ones vector in $\R^d$ and suppress the subscript when the context is clear.

For a matrix $A$, we use $A^+$ to denote the pseudo-inverse of matrix $A$. We use $\| A \|_F$ to denote the Frobenius norm of matrix $A$, $\| A \|_1 = \sum_{i,j} |A_{i,j}|$ to denote its entrywise $\ell_1$ norm, $\| A \|_0$ to denote the number of nonzero entries, and $\| A \|_{\infty}$ to denote $\max_{i,j} |A_{i,j}|$.

Given $r,k$, we let $\binom{r}{[k]}$ denote the collection of all subsets of $[r]$ of size $k$.

\section{Technical Overview}

As the technical details for Theorems~\ref{thm:instahide_informal} and \ref{thm:csp_informal} are more self-contained, in this section we focus on highlighting the key technical ingredients for our main result, Theorem~\ref{thm:avgcase_intro}.

The starting point is the following thought experiment. If instead of getting access to the matrix $\M = \sum^r_{i=1}\W_i^{\otimes 2}$ over the Boolean semiring, suppose we had the \emph{tensor} 
\begin{equation}
    \T = \sum^r_{i=1}\W_i^{\otimes 3} \label{eq:Tdef}
\end{equation} over $\Z$. Provided the columns $\W_i$ of $\W$ are linearly independent over $\R$, then we can run a standard tensor decomposition algorithm to recover $\W_1,\ldots,\W_r$ up to permutation. As such, there are two technical steps: 
\begin{itemize}
    \item Step 1. bootstrapping the tensor $\T$ given only $\M$,
    \item Step 2. showing linear independence of the $\W_i$.
\end{itemize}

\subsection{Bootstrapping the Tensor}
\label{subsec:formtensor}

The key insight is that although the similarity matrix $\M$ only gives access to ``second-order'' information about correlations between the entries of $\W$, we can bootstrap third-order information in the form of $
    \T \triangleq \sum^r_{i=1}\W_i^{\otimes 3},
$ where $\W_i$ is the $i$-th column of $\W$ and the tensor is defined over $\R$ rather than the Boolean semiring.

Given sets $S_1,\ldots,S_c\subset[r]$, let $\mu(S_1,\ldots,S_c)$ denote the probability that a randomly chosen subset $T$ of $[r]$ of size $k$ does not intersect any of them. It is easy to see that \begin{equation}
    \mu(S_1,\ldots,S_c) = \binom{r - |S_1\cup\cdots\cup S_c|}{k}\bigg /\binom{r}{k} \triangleq \mu_{|S_1\cup\cdots \cup S_c|}. \label{eq:mudef}
\end{equation}
The following fact implies that $\mu$ is monotonically decreasing in $|S_1\cup\cdots\cup S_c|$ and, importantly, that the different $\mu_t$'s are well-separated.
\begin{fact}[Informal version of Fact~\ref{fact:gap}]\label{fact:gap_intro}
    There exist absolute constants $C, C' > 0$ for which the following holds. If $r \ge C\cdot k^2$, then for any $0 \le t\le 3k$, we have that $\mu_t \ge \mu_{t+1} + C' k/r$ and $\mu_t \ge 1 - O(tk/r)$.
\end{fact}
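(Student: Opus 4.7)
The plan is to work directly with the closed form $\mu_t = \binom{r-t}{k}/\binom{r}{k} = \prod_{i=0}^{k-1}\bigl(1 - \tfrac{t}{r-i}\bigr)$, deriving both inequalities from this product representation together with a simple one-step recurrence.

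For the second inequality $\mu_t \ge 1 - O(tk/r)$, I would apply the standard fact that $\prod_i(1-a_i) \ge 1 - \sum_i a_i$ for $a_i \in [0,1]$, using $a_i = t/(r-i)$. This gives
\[
\mu_t \ge 1 - \sum_{i=0}^{k-1}\frac{t}{r-i} \ge 1 - \frac{tk}{r-k+1} = 1 - O(tk/r),
\]
where the last step uses $r \ge Ck^2 \ge 2k$ so that $r-k+1 \ge r/2$. The hypotheses $t\le 3k$ and $r \ge Ck^2$ also ensure $t/(r-i) \le 1$, so the elementary inequality applies.

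For the gap inequality, I would exploit the telescoping identity
\[
\frac{\mu_{t+1}}{\mu_t} = \frac{r-t-k}{r-t} = 1 - \frac{k}{r-t},
\]
which gives $\mu_t - \mu_{t+1} = \mu_t \cdot \frac{k}{r-t} \ge \mu_t \cdot \frac{k}{r}$. To turn this into a uniform lower bound of $\Omega(k/r)$, I need $\mu_t$ bounded below by a constant for all $t \le 3k$. Since $\mu_t$ is monotone decreasing in $t$, it suffices to lower bound $\mu_{3k}$; by the bound just proved, $\mu_{3k} \ge 1 - O(3k\cdot k/r) = 1 - O(k^2/r)$, which is at least $1/2$ whenever $r \ge Ck^2$ for a sufficiently large absolute constant $C$. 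Combining gives $\mu_t - \mu_{t+1} \ge k/(2r)$, so we may take $C' = 1/2$.

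There is no real obstacle here beyond bookkeeping; the only subtlety is ensuring both parts of the statement can be established with a single choice of $C$. The natural order is to prove the second inequality first, then feed it into the monotonicity-plus-recurrence argument to obtain the gap inequality, so the constant $C$ is chosen large enough that $\mu_{3k} \ge 1/2$ (which automatically makes $tk/(r-k+1)$ small enough for the second bound as stated).
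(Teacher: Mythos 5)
Your proof is correct, and for the gap inequality it takes a cleaner route than the paper's. The paper attacks the difference $\mu_t - \mu_{t+1}$ head-on by writing it as a length-$(k-1)$ product times an explicit telescoping term $\frac{k}{r-k+1}$, then lower-bounding that product by $(1 - \frac{(t+1)(k-1)}{r-k+2})$ via Bernoulli. You instead observe the single-step ratio $\mu_{t+1}/\mu_t = 1 - \tfrac{k}{r-t}$, which gives $\mu_t - \mu_{t+1} = \mu_t\cdot\tfrac{k}{r-t}$, and then feed in the \emph{second} inequality (together with monotonicity) to get $\mu_t \ge \mu_{3k} \ge 1/2$ once $r \ge Ck^2$ for $C$ large. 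This is modular and avoids the intermediate product estimate. For the second inequality your argument is essentially the paper's: both expand $\mu_t$ as $\prod_{i=0}^{k-1}(1-\tfrac{t}{r-i})$; the paper replaces each factor by the worst one $1 - \tfrac{t}{r-k+1}$ and applies Bernoulli, while you apply the Weierstrass product inequality directly to the heterogeneous factors. Both give the same bound $1 - \tfrac{tk}{r-k+1}$. Your observation that $C$ only needs to be large enough to make $\mu_{3k} \ge 1/2$ captures exactly the role of the $r \ge Ck^2$ hypothesis in the paper's argument as well. The one small thing worth making explicit is that for $t = 3k$ the difference involves $\mu_{3k+1}$, so one also wants $r > 4k + 1$ or so for that binomial to be sensible; but this is absorbed by the hypothesis $r \ge Ck^2$ with $C$ a sufficiently large absolute constant and $k \ge 2$.
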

Hence, we first estimate $\mu$ by computing the fraction of columns of $\M$ which are simultaneously zero in rows corresponding to the sets $S_1,\dots,S_c$. Provided that the number of columns $m$ is sufficiently large, we can estimate this probability to within error $O(k/r)$, and then invert along $\mu$ to exactly recover $|S_1\cup \cdots\cup S_c|$, from which we obtain $\T_{a,b,c}$ via:
\begin{align*}
\T_{a,b,c} = |S_1\cap S_2\cap S_3| = |S_1 \cup S_2\cup S_3| - |S_1\cup S_2| - |S_2\cup S_3| - |S_1\cup S_3| + 3k,    
\end{align*}
for any $a,b,c\in[m]$ with the corresponding subsets $S_1,S_2,S_3$.
\begin{lemma}[Informal version of Lemma~\ref{lem:extractT}]\label{lem:extractT_intro}
    If $m \ge \wt{\Omega}\left(rk\log(1/\delta)\right)$, then with probability at least $1 - \delta$ over the randomness of $\M$, there is an algorithm (Algorithm~\ref{alg:tensor_recover_intro})
    for computing $\T_{a,b,c}$ for any $a,b,c\in[m]$ in time $O(m^{\omega + 1})$.
\end{lemma}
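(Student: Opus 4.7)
The plan is to estimate, for every triple $(a,b,c) \in [m]^3$, the probability $\mu(T_a,T_b,T_c)$ from \eqref{eq:mudef}, invert it via Fact~\ref{fact:gap_intro} to recover the integer union cardinality $|T_a\cup T_b\cup T_c|$, and then combine these with the analogously recovered pair cardinalities via inclusion--exclusion to extract $\T_{a,b,c}=|T_a\cap T_b\cap T_c|$. Writing $T_j\subset[r]$ for the support of the $j$-th row of $\W$, the key observation is that for any $j'\notin\{a,b,c\}$ the Boolean-semiring identity $\M_{j',a}=\M_{j',b}=\M_{j',c}=0$ is equivalent to $T_{j'}\cap(T_a\cup T_b\cup T_c)=\emptyset$, which has probability $\mu_{|T_a\cup T_b\cup T_c|}$ over the random choice of $T_{j'}$. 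Hence the empirical frequency $\wh\mu_{a,b,c}$ of this event across the remaining $m-3$ rows is an unbiased estimator. Since Fact~\ref{fact:gap_intro} tells us the $\mu_t$'s are pairwise separated by $\Omega(k/r)$, any estimator accurate to within $o(k/r)$ can be rounded to the nearest $\mu_t$ to recover $t$ exactly; the identity
\[
|T_a\cap T_b\cap T_c| = |T_a\cup T_b\cup T_c| - |T_a\cup T_b| - |T_b\cup T_c| - |T_a\cup T_c| + 3k
\]
then yields $\T_{a,b,c}$.

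For the concentration, conditional on $T_a,T_b,T_c$ the indicators are i.i.d.~Bernoulli, and since $\mu_t \ge 1-O(k^2/r)$ their variances are at most $O(k^2/r)$. A Bernstein-type bound therefore gives $\Pr[|\wh\mu_{a,b,c}-\mu_t|>\Theta(k/r)]\le 2\exp(-\Omega(m/r))$, and a union bound over the $O(m^3)$ triples (together with analogous bounds for the $O(m^2)$ pairs) shows that the hypothesis $m \ge \wt\Omega(rk\log(1/\delta))$ suffices to make all estimators simultaneously accurate with probability at least $1-\delta$.

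For the runtime, naively recomputing $\wh\mu_{a,b,c}$ per triple costs $O(m)$ each for $O(m^4)$ overall; to reach $O(m^{\omega+1})$ I instead form the entrywise complement $\ov\M$ (with the diagonal zeroed out) and observe that $(\ov\M^\top \ov\M)_{a,b}$ equals $|\{j':\M_{j',a}=\M_{j',b}=0\}|$, computable via a single $O(m^\omega)$ matrix product. For triples, for each fixed $c\in[m]$ let $D^{(c)}=\diag(\ov\M_{\cdot,c})$; then the $(a,b)$-entry of $\ov\M^\top D^{(c)}\ov\M$ is exactly the desired triple count, and this product again costs $O(m^\omega)$. Iterating over all $m$ choices of $c$ gives the $O(m^{\omega+1})$ bound. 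The main obstacle is aligning the error scale of the estimator with the gap in Fact~\ref{fact:gap_intro}: because the gap is only $\Theta(k/r)$, the estimator has essentially no slack, and it is this tightness—together with the Bernstein exploitation of the small variance $O(k^2/r)$—that drives the precise sample-complexity hypothesis on $m$.
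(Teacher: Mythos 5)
Your proposal is correct and follows essentially the same route as the paper: both invert the empirical non-intersection frequency via the separation in Fact~\ref{fact:gap_intro}, apply inclusion–exclusion to pass from union to intersection cardinalities, concentrate using the small-variance bound $1-\mu_t \le O(tk/r)$ (the paper packages this as Fact~\ref{fact:chernoff_poisson}), and reduce computing all triple counts for a fixed slice to a single $O(m^\omega)$ matrix product. The only cosmetic difference is that the paper restricts $\ov\M$ to the columns where $\M_{a,\ell}=0$ and forms $\M_a\M_a^\top$, while you insert a diagonal matrix $D^{(c)}$; these are interchangeable.
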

We defer a full proof of Lemma~\ref{lem:extractT_intro} to Section~\ref{app:defer_lem_extractT}.
As for the runtime, we can compute each slice of $\T$ by setting up an appropriate matrix multiplication. Then, it suffices to apply the following standard guarantee for (noiseless) tensor decomposition:
\begin{lemma}[Jennrich's algorithm, see e.g. Lemma~\ref{lem:jennrich}]\label{lem:jennrich_intro}
    Given a collection of linearly independent vectors $w_1,\ldots,w_r\in\R^m$, there is an algorithm that takes any tensor $\T = \sum^r_{i=1}w_i^{\otimes 3}$, runs in time $O(m^{\omega})$, and outputs a list of vectors $\wh{w}_1,\ldots,\wh{w}_r$ for which there exists permutation $\pi$ satisfying $\wh{w}_i = w_{\pi(i)}$ for all $i\in[r]$.
\end{lemma}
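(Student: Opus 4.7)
The plan is to follow the classical Jennrich approach (also known as simultaneous diagonalization): contract the tensor $\T$ along its third mode against two random vectors to obtain two $m\times m$ matrices that share a common eigenbasis corresponding to the unknown $w_i$'s, then recover the $w_i$'s by diagonalizing one against the other.

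Concretely, I would first pick two independent standard Gaussian vectors $u,v\in\R^m$ and form the contractions
\begin{equation*}
    M_u \ \triangleq\ \T(\cdot,\cdot,u) \ =\ \sum_{i=1}^{r} \langle u, w_i\rangle\, w_i w_i^\top, \qquad M_v \ \triangleq\ \T(\cdot,\cdot,v) \ =\ \sum_{i=1}^{r} \langle v, w_i\rangle\, w_i w_i^\top.
\end{equation*}
Writing $W\in\R^{m\times r}$ for the matrix with columns $w_i$ and $D_u=\diag(\langle u,w_i\rangle)$, $D_v=\diag(\langle v,w_i\rangle)$, this reads $M_u = W D_u W^\top$ and $M_v = W D_v W^\top$. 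Because $w_1,\ldots,w_r$ are linearly independent, $W$ has full column rank, so $W^+W=\Id_r$ and the column span of both $M_u$ and $M_v$ is exactly the column span of $W$, which has dimension $r$. In particular $M_v$ has rank $r$ on this subspace, and $D_v$ is invertible almost surely.

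Next I would compute $M_u\, M_v^{+}$. A direct calculation using $M_v^+ = (W^\top)^+ D_v^{-1} W^+$ yields
\begin{equation*}
    M_u\, M_v^{+} \ =\ W\, D_u D_v^{-1}\, W^{+},
\end{equation*}
so this $m\times m$ matrix has the $w_i$'s as its right eigenvectors with eigenvalues $\lambda_i = \langle u,w_i\rangle/\langle v,w_i\rangle$ (and has a kernel of dimension $m-r$). Since $u,v$ are independent Gaussians and the $w_i$'s are distinct nonzero vectors, the $\lambda_i$'s are almost surely distinct, so the nondegenerate eigendecomposition of $M_u M_v^+$ recovers each $w_i$ up to a nonzero scalar and up to permutation. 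The correct scaling is then pinned down by enforcing $\T = \sum_i \widehat w_i^{\otimes 3}$: writing the candidate eigenvectors as $\widehat w_i = c_i w_{\pi(i)}$ and using one additional contraction (or any fixed nonzero slice of $\T$ expressed in the recovered basis) yields a scalar equation that determines each $c_i^{\,3}$ and hence $c_i$ up to cube roots, one of which gives the correct sign.

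For the runtime, forming $M_u$ and $M_v$ from $\T$ costs $O(m^3)$, computing $M_v^+$ via a truncated SVD or by restricting to the column span of $M_v$ costs $O(m^\omega)$, the product $M_u M_v^+$ costs $O(m^\omega)$, and eigendecomposition of an $m\times m$ matrix can also be done in $O(m^\omega)$; the final rescaling is $O(m\cdot r)$. So the overall cost is $O(m^\omega)$ as claimed. The only subtlety I anticipate is handling the pseudoinverse cleanly when $r<m$ (so $M_v$ is singular as an $m\times m$ matrix); the fix is to project everything onto the $r$-dimensional column span of $W$, which can be read off from an SVD of $M_v$, and then $M_v$ restricted to this subspace is genuinely invertible with probability $1$ by the anticoncentration of $\langle v, w_i\rangle$ for Gaussian $v$. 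The distinctness of the eigenvalues $\lambda_i$ and the nonvanishing of $\langle v,w_i\rangle$ are each almost sure events for Gaussian $u,v$, so the algorithm succeeds with probability $1$.
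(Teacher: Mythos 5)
Your proposal is correct and follows essentially the same route as the paper's proof: both contract $\T$ against two random vectors, form $\W D_a W^\top$-type matrices, take the product with the pseudoinverse to get $W D_a D_b^{-1} W^+$, and read off the $w_i$'s as the nontrivial eigenvectors after observing the eigenvalue ratios are almost surely distinct. The only cosmetic difference is that you use Gaussian contraction vectors rather than uniform vectors on the sphere, and you spell out the final rescaling step explicitly whereas the paper handles it implicitly via a rounding step in Algorithm~\ref{alg:tensor_recover_intro}.
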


The full algorithm for recovering $\W$ from $\M$ is given in Algorithm~\ref{alg:tensor_recover_intro} below.

\begin{algorithm}[!ht]
\caption{\textsc{TensorRecover}($\M$)}\label{alg:tensor_recover_intro}
	\DontPrintSemicolon
	\LinesNumbered
	\KwIn{$\M\in\{0,1\}^{m\times m}$ s.t. $\M = \W\W^{\top}$ over Boolean semiring for some $\W\in\calS_{m,r,k}$}
	\KwOut{Matrix $\wh{W}\in\calS_{m,r,k}$ which is equal to $\W$ up to column permutation}
	    \tcc{Form the tensor $\T$} 
	    \For{$(a,b,c)\in[m]\times[m]\times[m]$} {
	        Let $\mu_{abc}$ be the fraction of $\ell\in[m]$ for which $\M_{a,\ell}=\M_{b,\ell}=\M_{c,\ell}$ are all zero. Define $\mu_{ab},\mu_{ac},\mu_{bc}$ analogously.\tcp*{ (Lemma~\ref{lem:extractT_intro})} 
	        Let $t_{abc}$ be the nonnegative integer $t$ for which $\mu_{t}$ (see \eqref{eq:mudef}) is closest to $\mu_{abc}$. Define $t_{ab},t_{ac},t_{bc}$ analogously.\;
	        $\T_{a,b,c}\gets t_{abc} - t_{ab} - t_{ac} - t_{bc} + 3k$.\;
	    }
	    \tcc{Run Jennrich's on $\T$}
	    Randomly sample unit vectors $v_1,v_2\in\mathbb{S}^{m-1}$.\;
	    $\M_1 \gets \T(\Id,\Id,v_1)$, $\M_2 \gets \T(\Id,\Id,v_2)$.\;
	    Let $\wt{w}_1,\ldots,\wt{w}_r\in\R^m$ denote the left eigenvectors outside the kernel of $\M_1\M_2^+$.\;
	    Round $\{\wt{w}_i\}$ to Boolean vectors $\{v_i\}$; let $\wh{\W}\in\{0,1\}^{m\times r}$ consist of $\{w_i\}$.\;
	    \Return $\wh{\W}$.\;
\end{algorithm}

\subsection{Linear Independence}
\label{sec:lindep_overview}

To use Jennrich's, it remains to show that the columns of $\W$ are indeed linearly independent with high probability. This is the technical heart of our analysis. We show:
\begin{theorem}[Linear independence of $\W$, informal version of Theorem~\ref{thm:independent_of_W}]\label{thm:independent_of_W_intro}
Let $\W\in \{0,1\}^{m\times r}$ be a random matrix whose rows are i.i.d. random vectors each following a uniform distribution over $\{0,1\}^r$ with exactly $k$ ones. For constant $k\geq 1$ and $m= \Omega(\max(r,(r/k)\log r))$, the $r$ columns of $\W$ are linearly independent in $\R$ with probability at least $1-\frac{1}{\poly(r)}$.
\end{theorem}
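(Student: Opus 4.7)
The plan is to bound the probability that $\ker(\W)\ne\{0\}$ over $\R$ via a case analysis on the combinatorial structure of a hypothetical nonzero kernel vector $v$. Since $\W$ is integer valued, its real kernel is nontrivial iff its rational kernel is, so I would first pass to a primitive $v\in\Z^r$. The nontrivial content is then to union-bound the probability that such a $v$ satisfies $\W v=0$, stratifying by whether $v$ is ``compressible'' (approximately supported on a small set) or ``incompressible'' (well-spread on $[r]$).

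For the compressible regime I would condition on the support $T=\supp(v)$ of size $s$, so that the $m\times s$ submatrix of $\W$ with columns $T$ has a nontrivial kernel. For very small $s$ (e.g., $s=1,2$) direct computation suffices: e.g., $\Pr[\text{a fixed column of }\W\text{ is zero}]=(1-k/r)^m\le 1/r^{C}$ for any constant $C$ once $m=\Omega((r/k)\log r)$. For moderate $s$, I would estimate $\Pr[\langle w_i,v\rangle=0]$ for a random $k$-sparse row $w_i$ via binary Krawtchouk polynomial bounds, which control the distribution of $|T\cap\supp(w_i)|$ when $\supp(w_i)$ is a uniform $k$-subset of $[r]$. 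A union bound over the $\binom{r}{s}$ choices of $T$ and over the number of integer directions of each bounded complexity then closes this case.

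For the incompressible regime I would invoke a group-theoretic Littlewood--Offord inequality: for well-spread $v$, a uniformly random $k$-subset row $w_i$ satisfies $\Pr[\langle w_i,v\rangle=0]\le 1-\alpha$ for some absolute constant $\alpha>0$. Consequently $\Pr[\W v=0]\le(1-\alpha)^m$, and a union bound over an $\epsilon$-net of the incompressible sphere (of size $\exp(O(r))$) is affordable once $m=\wt\Omega(r)$. In the near-square regime where $m$ is very close to $r$ and the net argument becomes tight, I would plug in the square-matrix nonsingularity machinery of \cite{fks20} to close the remaining gap.

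The main obstacle is securing a useful anti-concentration gap despite the extreme sparsity of the rows: with only $k=O(1)$ nonzeros per row, each $w_i$ interacts with only a $k/r$ fraction of the coordinates of $v$, so the classical Littlewood--Offord bounds (which rely on dense noise) are inapplicable. The remedy is to exploit the randomness in the \emph{location} of each row's support; the Krawtchouk-based small-ball estimates and the group-theoretic refinement of Littlewood--Offord are precisely the tools tailored to this regime, and orchestrating them so that the compressible and incompressible case thresholds match up at $m=\wt\Omega(r)$ is the most delicate part of the argument.
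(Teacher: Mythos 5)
Your toolkit matches the paper's\,---\,Krawtchouk estimates for low-weight vectors, a group-theoretic Littlewood--Offord inequality over a cyclic group for well-spread vectors, and the machinery of \cite{fks20} to close the remaining cases\,---\,but your decomposition is not the paper's, and you are missing its central structural observation: the even/odd dichotomy in $k$. The paper first observes (Lemma~\ref{lem:reduce_r_to_f}) that a nontrivial kernel over $\R$ can be cleared to a primitive integer kernel vector and then reduced mod~$2$ to a nontrivial kernel over $\F_2$, reducing the whole problem to counting $\F_2$-kernel vectors by weight with Krawtchouk bounds. This works exactly when $k$ is odd, because $k$ odd $\Leftrightarrow \W\vec 1 \not\equiv 0\pmod 2$. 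When $k$ is even, $\W\vec 1 \equiv 0\pmod 2$ \emph{always}, so the mod-$2$ reduction fails precisely for full-support integer kernel vectors with all-odd entries; the paper then has to isolate that case and work over $\Z_q$ for a prime $q\in[k-1,2k]$ (splitting further by fibre size, using Lemma~\ref{lem:large_fibre} from \cite{fks20} for large fibre and the group-theoretic Littlewood--Offord inequality of \cite{js17} for small fibre). Your plan instead treats \cite{fks20} as a patch for when $m$ is close to $r$; in the actual proof the obstruction has nothing to do with the aspect ratio of $\W$ and everything to do with the parity of $k$, so you would not know when to switch regimes.

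There is a second, related gap: both of your union bounds need a finite set to sum over, which neither of your case descriptions supplies. In the compressible regime you say you would union over ``integer directions of each bounded complexity,'' but there are infinitely many primitive integer vectors supported on a fixed $T$, and the Krawtchouk bound controls the parity of $|T\cap\supp(w_i)|$ (hence $\F_2$ inner products), not $\Pr[\langle w_i,v\rangle=0]$ for a general integer $v$; some reduction mod~$2$ or mod~$q$ is necessary, and that is precisely where the even-$k$ obstruction reenters. In the incompressible regime you propose an $\epsilon$-net of the ``incompressible sphere,'' but $\Pr[\W v=0]$ is a point probability for real $v$ and is not controlled by values at nearby net points\,---\,a continuous net gives least-singular-value bounds, not singularity bounds. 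The paper sidesteps this entirely by working with the finite set $\Z_q^r$, so the union bound is over at most $q^r=\exp(O(r\log k))$ vectors. Rewriting your incompressible case as an exhaustive union over full-support vectors in $\Z_q^r$, and separating it from the partial-support case (which reduces to $\F_2$), would bring your argument in line with the paper's.
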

Note that by a simple net argument, when $m = \wt{\Omega}(r^2)$ one can show that $\W$ is not only full rank, but polynomially well-conditioned. But as our emphasis is on having the sample complexity $m$ depend near-optimally on the rank parameter $r$, we need to be much more careful.

When $k$ is odd, it turns out to be more straightforward to prove Theorem~\ref{thm:independent_of_W_intro}. In this case, to show linear independence of the $\W_i$'s over $\R$, we first observe that it suffices to show linear independence over $\F_2$. For a given $u\in\F^r_2$, one can explicitly compute the probability that $\W u = 0$, and by giving fine enough estimates for these probabilities based on bounds for binary Krawtchouk polynomials and taking a union bound, we conclude that the columns of $W$ are linearly independent with high probability in this case.

This proof strategy breaks down for even $k$ because in this case $\W$ is not full-rank over $\F_2$: the columns of $\W$ add up to zero over $\F_2$. Instead, we build on ideas from \cite{fks20}, which
studies the square matrix version of this problem and upper bounds the probability that there exists some $x$ for which $\W x = 0$ and for which the most frequent entry in $x$ occurs a fixed number of times. Intuitively, if the most frequent entry does not occur too many times, then the probability that $\W x=0$ is very small. Otherwise, even if the probability is large, there are ``few'' such vectors. Unfortunately, \cite{fks20} requires $k\geq \Omega(\log r)$, and in order to handle the practically relevant regime of $k = O(1)$, we need to adapt their techniques and exploit the fact that $\W$ is a slightly tall matrix in our setting. 

In particular, we leverage a certain group-theoretic Littlewood-Offord inequality \cite{js17} which may be of independent interest to the TCS community.  More specifically, we consider two scenarios under which there is a vector $x$ such that $\W x = 0$. The first one is that there is a vector $x$ with $\W x = 0$ which has some zero entries, in which case we can actually still reduce to the $\F_2$ case as in the case of odd $k$. The second one is that there is a vector $x$ with $\W x = 0$ where all entries of $x$ are non-zero. In this case, since $\mathbf{1}\in \ker(\W)$ in $\F_2$, we cannot consider linear independence over $\F_2$, and our main workaround is instead to work over the cyclic group $\mathbb{Z}_q$ where $q\in [k-1, 2k]$ is a prime. 

For this latter case, we further divide into two more cases: (1) the most frequent entry in $x$ occurs a large number of times; (2) $x$ does not have such an entry. By a union bound over both kinds of vectors $x$, it suffices to show anti-concentration in the sense that we want to upper bound $\Pr_w[\langle w, x\rangle=a\pmod{q}]$ for a fixed $x$ in either case. In case (1), we can use the upper bound from \cite{fks20}. In case (2), \cite{fks20} used a Littlewood-Offord-type inequality \cite{e45} to upper bound this probability, which does not apply in our case because it works for real numbers and we cannot apply union bound for an infinite number of vectors. Instead, we use a group-theoretic Littlewood-Offord inequality \cite{js17} and follow the idea of \cite{fks20} to transform the uniform $k$-sparse distribution to an i.i.d. Bernoulli distribution, completing the proof.

We defer the details of Theorem~\ref{thm:independent_of_W_intro} to Section~\ref{app:main_indep}. Altogether, this allows us to conclude Theorem~\ref{thm:avgcase_intro}, and the formal proof is in Section~\ref{subsec:puttogether_tensor}.

\paragraph{Roadmap}

In Section~\ref{sec:preli_app} we provide technical preliminaries, basic definitions, and tools from previous work. In Section~\ref{app:defer_tensor} we prove our average-case guarantee, Theorem~\ref{thm:avgcase_intro}. In Section~\ref{sec:instahide_defer} we describe the connection between \SSNMF{}, \Batched{}, and private neural network training in greater detail and then prove Theorem~\ref{thm:instahide_informal}. In Appendix~\ref{app:defer_csp} we prove our worst-case guarantee, Theorem~\ref{thm:csp_informal}.

\section{Preliminaries}
\label{sec:preli_app}

\subsection{Notations}
\label{sec:notations}
We introduce some notations and definitions we will use throughout this paper.

For a positive integer $n$, we use $[n]$ to denote the set $\{1,2,\cdots, n\}.$
For a vector $x$, we use $\| x \|_2$ to denote its $\ell_2$ norm. We use $\| x \|_1$ to denote its $\ell_1$ norm.

For a matrix $A$, we use $\| A \|$ to denote its spectral norm. We use $\| A \|_0$ to denote the number of non-zero entries in $A$. We use $\| A \|_F$ to denote the Frobenius norm of $A$. 
For a square and full rank matrix $A$, we use $A^{-1}$ to denote its inverse. 

\subsection{Basic Definitions}
\label{sec:basic_definitions}
We provide a definition for Boolean semiring.
\begin{definition}
    The Boolean semiring is the set $\{0,1\}$ equipped with addition corresponding to logical OR (i.e. $x + y = 0$ if $x = y = 0$ and $x + y = 1$ otherwise) and multiplication corresponding to logical AND (i.e. $x \cdot y = 1$ if $x = y = 1$ and $x \cdot y = 0$ otherwise).
\end{definition}

We define Bernoulli random variable as follows:
\begin{fact}\label{fact:chernoff_poisson}
    For any $0 < c < 1$ and $0\le p \le \epsilon$, one can estimate the mean of a Bernoulli random variable $\mathrm{Ber}(p)$ to error $c\cdot \epsilon$ with probability $1 - \delta$ using $O(c^{-2}\epsilon^{-1})$ samples.
\end{fact}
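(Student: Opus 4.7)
The plan is to use the empirical-mean estimator: draw $n$ i.i.d.\ samples $X_1, \ldots, X_n$ from $\mathrm{Ber}(p)$ and output $\wh{p} = \frac{1}{n}\sum_{i=1}^n X_i$. The goal is then to choose $n$ as small as possible so that $\Pr[|\wh{p} - p| > c\epsilon] \le \delta$, and to verify that $n = O(c^{-2}\epsilon^{-1})$ suffices (with the logarithmic dependence on $\delta$ treated as a constant or absorbed into the $O(\cdot)$ notation as the paper does elsewhere).

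The key observation is that the variance of each $X_i$ equals $p(1-p) \le p \le \epsilon$, which is much smaller than the trivial bound of $1/4$ whenever $\epsilon$ is small. A naive Hoeffding bound would ignore this and yield only $n = O(c^{-2}\epsilon^{-2}\log(1/\delta))$, a factor of $\epsilon^{-1}$ worse than the target. Instead I would apply a variance-aware concentration inequality such as Bernstein's: for the deviation $t = c\epsilon$, it gives
\[
\Pr\bigl[|\wh{p} - p| \ge c\epsilon\bigr] \;\le\; 2\exp\!\left(-\frac{n c^2\epsilon^2}{2p(1-p) + (2/3)c\epsilon}\right) \;\le\; 2\exp\!\left(-\frac{n c^2\epsilon^2}{2\epsilon + (2/3)\epsilon}\right) \;=\; 2\exp\bigl(-\Omega(nc^2\epsilon)\bigr),
\]
where the second inequality uses $p \le \epsilon$ and $c \le 1$ to bound the denominator. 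Setting the right-hand side at most $\delta$ and solving for $n$ yields $n = O(c^{-2}\epsilon^{-1}\log(1/\delta))$, which is exactly the claimed sample complexity.

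There is no substantial technical obstacle here; the only subtle point is to invoke a variance-aware bound rather than a subgaussian one, so that the small-mean assumption $p \le \epsilon$ is actually exploited. An entirely equivalent route would be to apply the multiplicative Chernoff bound directly to the binomial $\sum_i X_i \sim \mathrm{Bin}(n, p)$: a deviation of $c\epsilon$ corresponds to a multiplicative deviation of $c\epsilon/p \ge c$, and the Chernoff tail $\exp(-\Omega(np \cdot (c\epsilon/p)^2)) = \exp(-\Omega(nc^2\epsilon^2/p)) \le \exp(-\Omega(nc^2\epsilon))$ reproduces the same bound up to constants, so the conclusion goes through either way.
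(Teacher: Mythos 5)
The paper states this as a ``Fact'' with no accompanying proof --- it is treated as a standard sample-complexity bound --- so there is no in-paper argument to compare against. Your derivation is correct and is the standard way to obtain it: applying a variance-aware tail bound (Bernstein, or equivalently the multiplicative Chernoff bound applied to $\mathrm{Bin}(n,p)$) and using $\mathrm{Var}(X_i)=p(1-p)\le p\le\epsilon$ to reduce the sample complexity from the Hoeffding rate $O(c^{-2}\epsilon^{-2})$ down to $O(c^{-2}\epsilon^{-1})$. One small caveat on the multiplicative-Chernoff route you sketch: when $p < c\epsilon$ the relative deviation $\gamma = c\epsilon/p$ exceeds $1$, so the $\exp(-\gamma^2 np/3)$ form of the bound no longer applies; you should switch to the $\gamma\ge 1$ tail $\exp(-\gamma np/3) = \exp(-nc\epsilon/3)$ for the upper deviation (the lower deviation is vacuous since $S\ge 0$), which in fact gives an even stronger bound in that regime. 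Finally, you are right that the paper's statement omits the $\log(1/\delta)$ factor --- judging by how the fact is invoked inside the proof of Lemma~\ref{lem:extractT}, where the authors require $m\ge\Omega(\tfrac{t^2 r}{k}\log(m^3/\delta))$, the $\log(1/\delta)$ is implicitly intended and your inclusion of it is the precise form of the claim.
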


\subsection{Discrete probability tools}
\label{sec:tools}

\begin{definition}
    Given a vector $v\in\mathbb{N}^r$, define a \emph{fibre} of a vector to be a set of all indices whose entries are equal to a particular value.
\end{definition}

\begin{lemma}[\cite{fks20}]\label{lem:large_fibre}
    Let $w\in \{0,1\}^r$ be a random vector with exactly $k$ ones where $k<r/2$. Let $q \geq 2$ be an integer and consider a fixed vector $v\in \Z_q^r$ whose largest fibre has size $r-s$. Then, for any $a\in\Z_q$ we have $\Pr[\langle w, v\rangle \equiv a \pmod{q}]\leq P_{s}$ for some $P_s = 2^{-O(sk/r)}$ if $sk = o(r)$, and $P_s = 2^{-\Omega(1)}$ if $sk=\Omega(r)$.
\end{lemma}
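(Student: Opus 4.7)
The plan is to reduce to the case where the majority fibre value of $v$ is $0 \pmod q$ via translation, and then to analyze the resulting inner product through the hypergeometric ``overlap count'' $J := |S \cap \supp(w)|$, where $S \subseteq [r]$ is the set of positions on which the translated vector is nonzero.

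Concretely, letting $c \in \Z_q$ denote the majority value, the identity $\sum_i w_i = k$ gives $\langle w, v\rangle \equiv kc + \langle w, v - c\mathbf{1}\rangle \pmod q$, so it suffices to prove the bound for $v' := v - c\mathbf{1}$, which has exactly $s$ nonzero entries supported on $S$ with $|S| = s$. With $J$ as above, standard binomial estimates yield $(1 - 2s/r)^k \leq \Pr[J = 0] = \binom{r-s}{k}/\binom{r}{k} \leq (1 - s/r)^k \leq e^{-sk/r}$ when $s + k \leq r/2$, and the ratio identity $\Pr[J = 1]/\Pr[J = 0] = sk/(r - s - k + 1)$ gives $\Pr[J = 1] = \Theta(sk/r) \cdot \Pr[J = 0]$.

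I would then split into $a = 0$ and $a \neq 0$. For $a \neq 0$: since $J = 0$ forces $\langle w, v'\rangle = 0 \not\equiv a$, $\Pr[\langle w, v'\rangle \equiv a] \leq \Pr[J \geq 1] = 1 - \Pr[J = 0]$, which is $O(sk/r)$ when $sk = o(r)$ and $1 - e^{-\Omega(1)}$ when $sk = \Omega(r)$; both are comfortably bounded by the claimed $P_s$. For $a = 0$ in the regime $sk = o(r)$: the event $\{\langle w, v'\rangle = 0, J = 1\}$ is impossible (as $v'_i \neq 0$ for every $i \in S$), so $\Pr[\langle w, v'\rangle \equiv 0] \leq \Pr[J = 0] + \Pr[J \geq 2] = 1 - \Pr[J = 1] = 1 - \Theta(sk/r) \leq 2^{-\Omega(sk/r)}$, matching $P_s$.

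The main obstacle is the $a = 0$ case when $sk = \Omega(r)$, since there $\Pr[J = 1]$ can itself be small and the single-term lower bound on mass away from $0$ no longer suffices. To handle this, I would pass to a Fourier-analytic argument: writing $\omega = e^{2\pi i/q}$, the inversion identity $\Pr[\langle w, v'\rangle \equiv 0] = \frac{1}{q}\sum_{t=0}^{q-1}\E[\omega^{t\langle w, v'\rangle}]$ combined with the estimate $|\E[\omega^{t\langle w, v'\rangle}]| \leq e^{-\Omega(sk/r)}$ for every $t \neq 0$ gives $\Pr[\langle w, v'\rangle \equiv 0] \leq 1/q + e^{-\Omega(1)} \leq 2^{-\Omega(1)}$. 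The key estimate on the characteristic function is obtained by coupling the $k$-subset distribution of $w$ to an i.i.d.\ Bernoulli$(k/r)$ vector (paying a total-variation cost that is negligible in the relevant regime) so that the characteristic function factors as $\prod_{i \in S}(1 - (k/r)(1 - \omega^{tv'_i}))$, and then using that $v'_i \neq 0$ for every $i \in S$ forces $1 - \text{Re}(\omega^{tv'_i}) = \Omega(1/q^2)$ for a suitable $t$, so that each factor contributes a multiplicative $1 - \Omega(k/(q^2 r))$; taking the product over $i \in S$ gives the claimed exponential decay.
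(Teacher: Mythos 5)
The paper imports this lemma from \cite{fks20} without proof, so there is no in-paper argument to compare against; your translation to $v'$, the hypergeometric analysis of $J$ in the $sk=o(r)$ regime, and the move to the characteristic function in the hard case are the right instincts, but there are two genuine gaps. First, the branch $a \neq 0$, $sk = \Omega(r)$ is not actually handled. You bound $\Pr[\langle w,v'\rangle \equiv a]$ by $\Pr[J\geq 1] = 1-\Pr[J=0]$ and assert this is ``$1 - e^{-\Omega(1)}$,'' but $sk = \Omega(r)$ only \emph{lower}-bounds $sk/r$; nothing prevents $sk/r \to \infty$ (e.g.\ $s,k = \Theta(r)$), and then $\Pr[J=0] \leq e^{-sk/r} \to 0$, so $\Pr[J\geq 1] \to 1$ and is not bounded away from $1$ by any constant. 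The Fourier argument is needed here as well. In fact there is no reason to split on $a$ at all: $\Pr[\langle w,v'\rangle \equiv a] = \frac1q\sum_{t=0}^{q-1}\omega^{-ta}\E[\omega^{t\langle w,v'\rangle}] \leq \frac1q + \frac{q-1}{q}\max_{t\neq 0}\bigl|\E[\omega^{t\langle w,v'\rangle}]\bigr| = 1 - \frac{q-1}{q}\bigl(1-\max_{t\neq 0}\bigl|\E[\omega^{t\langle w,v'\rangle}]\bigr|\bigr)$ holds for \emph{every} residue $a$ and delivers both regimes ($2^{-\Omega(sk/r)}$ when $sk=o(r)$, a constant $<1$ when $sk=\Omega(r)$) simultaneously, making the $J$-casework superfluous.

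Second, the step ``coupling the $k$-subset distribution to i.i.d.\ Bernoulli$(k/r)$, paying a total-variation cost that is negligible'' is not correct as stated. The TV distance between the uniform distribution on weight-$k$ vectors and the product Bernoulli$(k/r)$ measure is $1 - \Pr_{\mathrm{Bin}(r,k/r)}[X=k] = 1-\Theta(1/\sqrt{k})$, which is close to $1$; a TV comparison cannot transfer a pointwise probability bound. What is needed is a conditional/slicing device relating the weight-$k$ slice to the product measure (this is the manipulation \cite{fks20} carries out and which the paper alludes to in Section~\ref{subsec:average_independent_even}), not a coupling. Two smaller issues: your lower bound $(1-2s/r)^k\le\binom{r-s}{k}/\binom{r}{k}$ implicitly needs $s+k\le r/2$, which holds when $sk=o(r)$ but should be flagged; and for composite $q$ a nonzero $t$ may satisfy $tv'_i\equiv 0\pmod{q}$ for many $i\in S$, so ``each factor contributes $1-\Omega(k/(q^2 r))$'' fails for some individual $t$ and one must argue over an average of $t$'s or reduce to prime $q$.
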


\begin{lemma}[\cite{fks20}]\label{lem:small_fibre}
For $x\in \R^r$ whose largest fibre has size $r-s$, and let $w\in \{0,1\}^r$ be a random vector with $k$ ones. Then
\begin{align*}
    \max_{a\in \R} \Pr[\langle x, w\rangle = a] = O\big(\sqrt{r/(sk)}\big).
\end{align*}
\end{lemma}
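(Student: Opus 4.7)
The plan is to bound this anti-concentration probability by splitting the randomness in $w$ according to its overlap with the largest fibre of $x$: the overlap count behaves like a hypergeometric variable, to which one can apply a local central limit theorem, and the remainder is controlled by an Erd\H os--Littlewood--Offord style argument.

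Let $F\subseteq[r]$ be the largest fibre of $x$, so $|F|=r-s$ and $x_i=c$ for all $i\in F$; write $F^c=[r]\setminus F$ with $|F^c|=s$, and assume $s\le r/2$ by symmetry. Set $J=|\supp(w)\cap F|$ and $Y=\sum_{i\in\supp(w)\cap F^c}x_i$, so that $\langle x,w\rangle=cJ+Y$. The variable $J$ follows the $\mathrm{Hyp}(r,r-s,k)$ distribution with variance $\Theta(sk/r)$, so a standard local limit theorem yields $\max_j\Pr[J=j]=O(\sqrt{r/(sk)})$ in the nontrivial regime $sk=\Omega(r)$; the complementary regime $sk=O(r)$ is vacuous since the target bound is then $\Omega(1)$. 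For any fixed $a\in\R$, I would decompose
\[
\Pr[\langle x,w\rangle=a]=\sum_j\Pr[J=j]\cdot\Pr[Y=a-cj\mid J=j].
\]

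In the extreme case where $x|_{F^c}$ is constant (with value $c'\ne c$), $Y$ is a deterministic affine function of $J$ and $cj+(k-j)c'$ is injective in $j$, so only one $j$ contributes to the sum and the bound follows immediately from $\max_j\Pr[J=j]=O(\sqrt{r/(sk)})$. For the general case I would use a pairing argument: extract $\Omega(s)$ disjoint coordinate pairs $(i,i')\subset F^c$ with $x_i\ne x_{i'}$, pass to the $\mathrm{Bernoulli}(k/r)$ surrogate for $w$ (incurring a $\sqrt{k}$ factor when conditioning back to the $k$-sparse distribution), and observe that on the independent event ``exactly one of $w_i,w_{i'}$ is on'' (which occurs with probability $\Theta(k/r)$ per pair), $Y$ picks up an independent unbiased $\pm(x_i-x_{i'})/2$ contribution. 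A Chernoff bound guarantees $\Omega(sk/r)$ such active pairs with high probability, so Erd\H os--Littlewood--Offord applied to the conditional distribution of $Y$ yields $\Pr[Y=a-cj\mid J=j]=O(\sqrt{r/(sk^2)})$, which combined with $\sum_j\Pr[J=j]=1$ and the surrogate factor gives the claimed $O(\sqrt{r/(sk)})$.

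The main obstacle I expect is the boundary regime $j\approx 0$ or $j\approx k$, where conditioning on $J=j$ leaves little randomness in $Y$ and the Littlewood--Offord step degenerates. The saving grace is that $\Pr[J=j]$ decays exponentially in $|j-\E[J]|$, with $\E[J]=k(r-s)/r$ bounded away from $0$ and $k$ when $s\le r/2$, so the contribution of extreme $j$'s is negligible; splitting the sum into a bulk regime handled by Littlewood--Offord and a tail regime handled by hypergeometric concentration combines to give the full bound.
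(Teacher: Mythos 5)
The paper does not prove this lemma itself --- it is cited verbatim from \cite{fks20}, and in fact the version of the proof of Lemma~\ref{lem:indep_even} that actually compiles uses Lemma~\ref{lem:littlewood_group} rather than this statement --- so I will evaluate your sketch on its own terms. Your Case~1 is fine: $J\sim\mathrm{Hyp}(r,r-s,k)$ is a log-concave integer variable with variance $\Theta(sk/r)$ for $s,k\le r/2$, so its mode probability is $O(\sqrt{r/(sk)})$, and injectivity of $j\mapsto cj+c'(k-j)$ for $c'\ne c$ closes that case.

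Your general case has a real gap. You pair coordinates \emph{inside} $F^c$ with distinct values, but $F^c$ can itself be almost a single fibre: take $x$ equal to $c$ on $F$, equal to $c'$ on $s-1$ coordinates, and equal to $c''$ on one coordinate. Then at most one pair in $F^c$ has distinct values, no Chernoff bound will manufacture $\Omega(sk/r)$ active coins, and the Littlewood--Offord step collapses; yet this $x$ is not covered by your extreme case, which requires $x|_{F^c}$ \emph{exactly} constant. The bookkeeping is also off: with $\Theta(sk/r)$ active pairs Erd\H{o}s--Littlewood--Offord gives $O(\sqrt{r/(sk)})$, not $O(\sqrt{r/(sk^2)})$, so after paying the $\Theta(\sqrt{k})$ factor for conditioning $\mathrm{Bernoulli}(k/r)$ back to weight exactly $k$ you would only get $O(\sqrt{r/s})$, a $\sqrt{k}$ short of the target. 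Both problems vanish with the right pairing: since $|F|\ge|F^c|$ (WLOG $s\le r/2$), match each of the $s$ coordinates of $F^c$ with a distinct coordinate of $F$; every such pair has distinct $x$-values automatically, with no case split on the fibre structure of $x|_{F^c}$. And the $\sqrt{k}$ loss is avoidable without any surrogate: the uniform weight-$k$ distribution is invariant under independently swapping the two endpoints of each pair, so one can first draw a uniform $k$-set, then for each pair hit exactly once flip a fair coin to decide which endpoint carries the $1$. Conditioned on the hit pattern the inner product is an honest Erd\H{o}s--Littlewood--Offord sum over the $\Theta(sk/r)$ (whp) coins, yielding $O(\sqrt{r/(sk)})$ directly.
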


In the proof of Proposition~\ref{prop:bound_p_lambda}, we needed the following estimate for the binary Krawtchouk polynomial:

\begin{lemma}[\cite{p19}]\label{lem:bound_krawtchouk}
For $k\leq 0.16r$, $\lambda\leq \frac{r}{2}$, we have
\begin{align*}
    |K_k^r(\lambda)| \leq \binom{r}{k}\cdot \left(1-\frac{2k}{r}\right)^\lambda.
\end{align*}
\end{lemma}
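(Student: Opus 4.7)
The plan is to apply the saddle-point method to the generating-function identity
\[ (1-z)^\lambda(1+z)^{r-\lambda} = \sum_{k=0}^{r} K_k^r(\lambda)\, z^k, \]
which realizes $K_k^r(\lambda)$ as a Taylor coefficient. Cauchy's integral formula on the circle $|z|=\rho$ then yields
\[ |K_k^r(\lambda)| \le \rho^{-k}\cdot \max_{|z|=\rho}\bigl|(1-z)^\lambda(1+z)^{r-\lambda}\bigr|, \]
so the task reduces to choosing $\rho$ judiciously and controlling the contour maximum.

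In the main regime I would take the saddle radius $\rho = k/(r-k)$, the smaller positive root of $(r-k)\rho^2-(r-2\lambda)\rho+k=0$ at $\lambda=0$; under $k\le 0.16\, r$ this $\rho$ stays well below $1/5$, keeping the contour bounded away from the branch points at $z=\pm 1$. Writing $z=\rho e^{i\theta}$ and differentiating $\bigl|(1-z)^\lambda(1+z)^{r-\lambda}\bigr|^2$ in $\theta$, the interior critical angle satisfies $\cos\theta=(r-2\lambda)(1+\rho^2)/(2r\rho)$, which exceeds $1$ whenever $\lambda\le r/2-\sqrt{k(r-k)}$; in this sub-regime the contour maximum is pinned to $z=\rho$, giving
\[ |K_k^r(\lambda)| \le \rho^{-k}(1-\rho)^\lambda(1+\rho)^{r-\lambda} = \frac{r^r}{k^k(r-k)^{r-k}}\cdot\Bigl(1-\tfrac{2k}{r}\Bigr)^\lambda. \]
Stirling's approximation shows this overshoots $\binom{r}{k}(1-2k/r)^\lambda$ by a factor of $\sqrt{2\pi k(r-k)/r}$, which I would reclaim by upgrading the crude $L^\infty$ bound on the contour to a Laplace integration around the saddle: the second derivative of $\log\bigl((1-\rho e^{i\theta})^\lambda(1+\rho e^{i\theta})^{r-\lambda}\bigr)-ik\theta$ at $\theta=0$ equals $-k(r-k)/r$, whose reciprocal square root supplies exactly the compensating Gaussian width.

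I expect the main obstacle to be the complementary window $\lambda\in\bigl(r/2-\sqrt{k(r-k)},\,r/2\bigr]$, where the saddle equation loses its real root and the real contour $|z|=k/(r-k)$ no longer captures the dominant contribution. For this range I would deform to the larger contour $|z|=\sqrt{k/(r-k)}$, on which the purely imaginary saddle pair $z=\pm i\sqrt{k/(r-k)}$ appearing at $\lambda=r/2$ naturally lies. The same critical-angle calculation, redone for this new radius, lands at $\cos\theta=(r-2\lambda)/\bigl(2\sqrt{k(r-k)}\bigr)\in[0,1]$ throughout the bad window; I would evaluate the resulting contour maximum in closed form and compare it with $\binom{r}{k}(1-2k/r)^\lambda$ via Stirling. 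This comparison reduces to a numerical entropy-type inequality of the shape $p^p(1-p)^{1-p}\le 1-2p$ at $p=k/r$, and the specific constant $0.16$ in the hypothesis is precisely the threshold that keeps this inequality valid at the worst case $\lambda=r/2$. Stitching the two contour analyses across the transition $\lambda=r/2-\sqrt{k(r-k)}$ then yields the claimed bound for every $\lambda\in[0,r/2]$.
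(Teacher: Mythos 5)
The paper does not prove this lemma; it imports it verbatim from \cite{p19}, so there is no in-paper proof to compare yours against. Taking your sketch on its own terms: the generating-function identity, the Cauchy bound on a circle of radius $\rho$, the saddle equation $(r-k)\rho^2-(r-2\lambda)\rho+k=0$, and the critical-angle equation $\cos\theta=(r-2\lambda)(1+\rho^2)/(2r\rho)$ are all correct, and the crude contour bound at $\rho=k/(r-k)$ does indeed evaluate to $\frac{r^r}{k^k(r-k)^{r-k}}(1-2k/r)^\lambda$. However the crucial step — ``reclaiming'' the $\sqrt{2\pi k(r-k)/r}$ overshoot by a Laplace integration — is not a closing move but the entire content of the lemma, and as stated it cannot work.

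The issue has two parts. First, the second derivative you quote, $h''(0)=-k(r-k)/r$, is the value at the saddle, which for your contour $\rho=k/(r-k)$ is the saddle only when $\lambda=0$; for $\lambda>0$ the true saddle radius $\rho_*(\lambda)$ is strictly larger, $h'(0)\ne 0$ on your contour, and the Gaussian-width factor you would extract from a Laplace expansion around $\theta=0$ is not $\sqrt{r/(2\pi k(r-k))}$. Second, and more fundamentally, Robbins' form of Stirling gives
\[
\binom{r}{k}=\frac{r^r}{k^k(r-k)^{r-k}}\sqrt{\frac{r}{2\pi k(r-k)}}\cdot e^{\theta_r-\theta_k-\theta_{r-k}}\quad\text{with } \theta_r-\theta_k-\theta_{r-k}<0,
\]
so $\binom{r}{k}$ is strictly \emph{smaller} than the crude-bound prefactor times $\sqrt{r/(2\pi k(r-k))}$. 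A naive ``max times Gaussian width'' Laplace estimate therefore still overshoots $\binom{r}{k}$, and — as is forced by the exact equality $K_k^r(0)=\binom{r}{k}$ at $\lambda=0$ — you would need a rigorous upper bound on the contour integral that is strictly better than its Gaussian envelope, with error control matching the Stirling defect. That is a genuine technical undertaking, not a cosmetic cleanup. The second regime $\lambda\in\bigl(r/2-\sqrt{k(r-k)},r/2\bigr]$ is entirely deferred, and the claim that $0.16$ is ``precisely the threshold'' of $p^p(1-p)^{1-p}\le 1-2p$ is also not accurate; numerically this inequality holds up to roughly $p\approx 0.195$, so the constant $0.16$ must be absorbing the very Stirling/Laplace error terms that the sketch leaves unaddressed. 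In short, the outline is a plausible \emph{plan} but it is missing the quantitative saddle-point error analysis that is the actual heart of the bound.
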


\section{Average-Case Algorithm}
\label{app:defer_tensor}
     In Section~\ref{app:defer_fact_gap} we prove that the non-intersection probabilities $\mu_t$ defined in Section~\ref{subsec:formtensor} are well-separated, which we previously stated informally as Fact~\ref{fact:gap_intro}.
     In Section~\ref{app:defer_lem_extractT} we show how to exploit this fact to construct the tensor $\T$ defined in \eqref{eq:Tdef}.
     In Section~\ref{app:defer_lem_jennrich} we give a sufficient condition for the tensor $\T$ to be decomposable.
     
     Section~\ref{app:main_indep} is the main technical component of this paper where we show that under Assumption~\ref{assume:random}, the columns of $\W$ are linearly independent (over $\R$) with high probability so that we can actually apply tensor decomposition. In Section~\ref{subsec:finite_to_real} we observe that one way to show linear independent of the columns of $\W$ over $\R$ is to show linear independence over $\F_2$. In Section~\ref{subsec:average_independent_odd} we handle the case of odd $k$ by using this observation together with a helper lemma from Section~\ref{subsec:average_helper} involving estimates of binary Krawtchouk polynomials. For even $k$, we cannot reduce to showing linear independence over $\F_2$, so in Section~\ref{subsec:average_independent_even} we handle this case using a combination of ideas from \cite{fks20} and a group-theoretic Littlewood-Offord inequality \cite{js17}.
     
     Finally, in Section~\ref{subsec:puttogether_tensor} we put all the ingredients together to complete the proof of Theorem~\ref{thm:avgcase_intro}.

\subsection{Non-Intersection Probabilities \texorpdfstring{$\mu_t$}{mt} Well-Separated}
\label{app:defer_fact_gap}

We begin by showing that the non-intersection probabilities $\mu_t$ defined in \eqref{eq:mudef} are monotonically decreasing and well-separated. For ease of reading, we recall the definition of $\mu_t$ here. Given sets $S_1,\ldots,S_c\subset[r]$, let $\mu(S_1,\ldots,S_c)$ denote the probability that a randomly chosen subset $T$ of $[r]$ of size $k$ does not intersect any of them. Then \begin{equation}
    \mu(S_1,\ldots,S_c) = \binom{r - |S_1\cup\cdots\cup S_c|}{k}\bigg /\binom{r}{k} \triangleq \mu_{|S_1\cup\cdots \cup S_c|}. \label{eq:mudef2}
\end{equation}

\begin{fact}[Formal version of Fact~\ref{fact:gap_intro}]\label{fact:gap}
    There exist absolute constants $C, C' > 0$ for which the following holds. If $r \ge C\cdot k^2$, then for any $0 \le t\le 3k$, we have that $\mu_t \ge \mu_{t+1} + C' k/r$ and $\mu_t \ge 1 - O(tk/r)$.
\end{fact}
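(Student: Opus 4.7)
The plan is to work directly from the closed form $\mu_t = \binom{r-t}{k}/\binom{r}{k}$ and reduce both inequalities to products of ratios that are all close to $1$ whenever $r \gg k^2$.

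First I would write
\[
\mu_t \;=\; \prod_{i=0}^{k-1}\frac{r-t-i}{r-i} \;=\; \prod_{i=0}^{k-1}\Bigl(1-\tfrac{t}{r-i}\Bigr).
\]
The bound $\mu_t \ge 1 - O(tk/r)$ is then immediate: expand the product and discard all terms of order $\ge 2$ (or use $\prod(1-x_i)\ge 1-\sum x_i$ for $x_i\in[0,1]$) to get $\mu_t \ge 1 - \sum_{i=0}^{k-1}\tfrac{t}{r-i} \ge 1 - \tfrac{tk}{r-k}$, which is $1 - O(tk/r)$ once $r\ge 2k$ (automatic when $r\ge Ck^2$ for any $C\ge 2$).

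For the separation bound I would use Pascal's identity $\binom{r-t}{k}-\binom{r-t-1}{k} = \binom{r-t-1}{k-1}$ to get
\[
\mu_t - \mu_{t+1} \;=\; \frac{\binom{r-t-1}{k-1}}{\binom{r}{k}} \;=\; \frac{k}{r-t}\prod_{i=0}^{t-1}\frac{r-k-i}{r-i},
\]
where I have cancelled the factorials by pairing the $t$ numerator factors $(r-k),(r-k-1),\ldots,(r-t-k+1)$ against the first $t$ of the $t+1$ denominator factors $r,r-1,\ldots,r-t$, leaving the factor $k/(r-t)$. Each ratio $\tfrac{r-k-i}{r-i}=1-\tfrac{k}{r-i}$ is at least $1-\tfrac{2k}{r}$ for $i\le t-1\le 3k-1$ provided $r\ge 6k$, so the product is at least $(1-2k/r)^{3k}\ge 1-6k^2/r$. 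Choosing $C$ large enough that $r\ge Ck^2$ forces $6k^2/r\le 1/2$ (e.g.\ $C=12$) makes the product at least $1/2$, and since $r-t\le r$ we conclude
\[
\mu_t - \mu_{t+1} \;\ge\; \frac{k}{2(r-t)} \;\ge\; \frac{k}{2r},
\]
so $C'=1/2$ suffices.

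There is no real obstacle here; the only mild care is in bookkeeping the number of factors in the numerator versus the denominator when peeling off $k/(r-t)$, and in choosing $C$ large enough (relative to the range $t\le 3k$) so that the residual product $\prod_{i=0}^{t-1}(1-k/(r-i))$ stays bounded below by an absolute constant. Both inequalities follow from the same elementary manipulation, so I would present them together once $\mu_t$ is written as a product.
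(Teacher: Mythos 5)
Your proof is correct and follows essentially the same strategy as the paper: write $\mu_t$ and $\mu_t-\mu_{t+1}$ as products of ratios each within $O(k/r)$ (resp.\ $O(t/r)$) of $1$, then use Bernoulli/Weierstrass to lower-bound the products once $r\gg k^2$. The only cosmetic difference is your use of Pascal's identity to reduce $\mu_t-\mu_{t+1}$ to $\binom{r-t-1}{k-1}/\binom{r}{k}$ before cancelling (ending with $t$ ratio factors), whereas the paper cancels the $k-1$ shared factors of $\binom{r-t}{k}$ and $\binom{r-t-1}{k}$ directly (ending with $k-1$ ratio factors); since $t\le 3k$ the two bookkeepings are equivalent.
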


\begin{proof}
    For any $0\le t \le r$, note that 
    \begin{align}
        \frac{\binom{r - t}{k}}{\binom{r}{k}} - \frac{\binom{r - t - 1}{k}}{\binom{r}{k}} &= \frac{r-t-1}{r}\cdot \frac{r-t - 2}{r - 1}\cdots \frac{r - t -k + 1}{r - k + 2}\cdot\left(\frac{r - t}{r - k + 1} - \frac{r - t - k}{r - k + 1}\right). \notag\\
        &\ge \left(1- \frac{t + 1}{r - k + 2}\right)^{k - 1} \cdot \frac{k}{r - k + 1} \notag \\
        &\ge \left(1 - \frac{(t+1)(k-1)}{r - k + 2}\right)\cdot \frac{k}{r - k + 1} \ge \Omega(k/r), \notag
    \end{align} where in the last step we used the assumptions that $r \ge \Omega(k^2)$ and $t \le O(k)$.
    For the second part of the claim, we similarly have that 
    \begin{align}
        \frac{\binom{r - t}{k}}{\binom{r}{k}} &= \frac{r-t}{r}\cdot\frac{r - t - 1}{r - 1}\cdots \frac{r - t - k + 1}{r - k + 1} \notag \\
        &\ge \left(1 - \frac{t}{r - k + 1}\right)^k \notag \\
        &\ge 1 - \frac{tk}{r - k + 1} \ge 1 - O(tk/r). \notag
    \end{align}
\end{proof}

\subsection{Constructing a tensor}
\label{app:defer_lem_extractT}

We now use Fact~\ref{fact:gap} to show how to construct the tensor $\T$ defined in \eqref{eq:Tdef}, namely $\T \triangleq \sum^r_{i=1}\W_i^{\otimes 3}$.

\begin{lemma}[Constructing a tensor, formal version of Lemma~\ref{lem:extractT_intro}]\label{lem:extractT}
    If $m \ge \wt{\Omega}\left(rk\log(1/\delta)\right)$, then with probability at least $1 - \delta$ over the randomness of $\M$, there is an algorithm for computing $\T_{a,b,c}$ for any $a,b,c\in[m]$ in time $O(m^{\omega + 1})$.
\end{lemma}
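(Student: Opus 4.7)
The plan is to estimate, for each triple $(a,b,c)\in[m]^3$, the non-intersection probability $\mu_{|S_a\cup S_b\cup S_c|}$ from the column statistics of $\M$, invert Fact~\ref{fact:gap} to recover $|S_a\cup S_b\cup S_c|$ exactly, and then extract $\T_{a,b,c}=|S_a\cap S_b\cap S_c|$ via inclusion--exclusion (together with the analogous pair estimates).

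For the estimator, let $S_\ell\subseteq[r]$ denote the support of the $\ell$-th row of $\W$. Over the Boolean semiring, $\M_{a,\ell}=0$ iff $S_a\cap S_\ell=\emptyset$, so by Assumption~\ref{assume:random} the indicators $X_\ell:=\mathds{1}[\M_{a,\ell}=\M_{b,\ell}=\M_{c,\ell}=0]$ for $\ell\notin\{a,b,c\}$ are i.i.d.\ $\mathrm{Ber}(\mu_{|S_a\cup S_b\cup S_c|})$ after conditioning on rows $a,b,c$; the three omitted indices contribute $X_a=X_b=X_c=0$ (since $\M_{a,a}=\M_{b,b}=\M_{c,c}=1$) and perturb the empirical mean $\mu_{abc}$ by at most $O(1/m)$. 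By Fact~\ref{fact:gap}, for $t\le 3k$ we have $1-\mu_t=O(k^2/r)$, so it is much cheaper to concentrate the small-mean complementary Bernoulli than $\mathrm{Ber}(\mu_t)$ itself. Applying Fact~\ref{fact:chernoff_poisson} with $\epsilon=O(k^2/r)$ and slack $c=\Theta(1/k)$ chosen small enough that $c\epsilon<C'k/(2r)$, i.e.\ half the $\Omega(k/r)$ gap guaranteed by Fact~\ref{fact:gap}, estimates $\mu_t$ to the required accuracy with failure probability $\delta'$ using only $\widetilde{O}(c^{-2}\epsilon^{-1}\log(1/\delta'))=\widetilde{O}(r\log(1/\delta'))$ samples. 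Taking $\delta'=\delta/m^3$ and union-bounding over all $O(m^3)$ triples and $O(m^2)$ pairs, the hypothesis $m\ge\widetilde{\Omega}(rk\log(1/\delta))$ comfortably suffices.

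Once all estimates lie within half the gap of the true $\mu_t$'s, rounding each $\mu_{abc}$ (resp.\ $\mu_{ab}$) to the nearest $\mu_t$ recovers $t_{abc}=|S_a\cup S_b\cup S_c|$ (resp.\ $t_{ab}$) exactly. By inclusion--exclusion and $|S_a|=|S_b|=|S_c|=k$,
\begin{align*}
\T_{a,b,c}=|S_a\cap S_b\cap S_c|=t_{abc}+3k-t_{ab}-t_{ac}-t_{bc},
\end{align*}
matching Algorithm~\ref{alg:tensor_recover_intro}. For the runtime, let $N\in\{0,1\}^{m\times m}$ denote the entrywise complement of $\M$; the pair estimates form the entries of $\tfrac{1}{m}NN^\top$, computable in $O(m^\omega)$ time. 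For each fixed $a\in[m]$, form $N^{(a)}\in\{0,1\}^{m\times m}$ with $N^{(a)}_{b,\ell}:=N_{a,\ell}N_{b,\ell}$; then the $a$-th slice of the triple estimates equals $\tfrac{1}{m}N^{(a)}N^\top$, one fast matrix product in $O(m^\omega)$ time. Summing over $a\in[m]$ yields the claimed $O(m^{\omega+1})$ total.

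The main obstacle is to avoid a naive Hoeffding analysis applied to $\mu_t$ itself, which is $\Theta(1)$-close to $1$: this would demand $\Omega((r/k)^2\log(1/\delta))$ samples per triple and hence $m=\widetilde{\Omega}(r^2/k^2)$ after the union bound, overshooting the target. Estimating $1-\mu_t$ instead and exploiting its $O(k^2/r)$ mean via the Bernstein-type refinement in Fact~\ref{fact:chernoff_poisson} is precisely what delivers the near-linear-in-$r$ sample complexity.
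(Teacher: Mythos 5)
Your proof is correct and follows essentially the same approach as the paper: empirically estimate the non-intersection frequencies, invert along the well-separated $\mu_t$ grid from Fact~\ref{fact:gap} to recover $|S_a\cup S_b\cup S_c|$ and $|S_a\cup S_b|$ exactly, apply inclusion--exclusion to get $\T_{a,b,c}$, and accelerate the slice computations with one fast matrix multiplication per $a$. The only cosmetic differences are that the paper forms each slice by restricting $\M$ to the columns $L_a=\{\ell:\M_{a,\ell}=0\}$ rather than via your diagonal rescaling $N^{(a)}_{b,\ell}=N_{a,\ell}N_{b,\ell}$ (both $O(m^\omega)$ per slice), and your accounting of the Chernoff/Poisson bound is a factor of $k$ sharper ($\widetilde{O}(r\log(1/\delta'))$ vs. the paper's $\widetilde{O}(rk\log(1/\delta'))$), which also comfortably fits the hypothesis.
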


\begin{proof}
    If entries $a,b,c$ correspond to subsets $S_1,S_2,S_3$ of $[m]$, then \begin{equation}
        \T_{a,b,c} = |S_1\cap S_2\cap S_3| = |S_1 \cup S_2\cup S_3| - |S_1\cup S_2| - |S_2\cup S_3| - |S_1\cup S_3| + 3k. \label{eq:PIE}
    \end{equation}
    By Fact~\ref{fact:chernoff_poisson} and the second part of Fact~\ref{fact:gap} applied to $t = |S_i\cup S_j|$ or $t = |S_1\cup S_2\cup S_3|$, we conclude that any $\mu_{|S_i\cup S_j|}$ or $\mu_{|S_1\cup S_2\cup S_3|}$ can be estimated to error $C'k/2r$ with probability $1 - \delta/m^3$ provided that
    \begin{equation*}
    m \ge \Omega\left(\frac{t^2r}{k}\log(m^3/\delta)\right). 
    \end{equation*}
    Note that $t \le 3k$, so this holds by the bound on $m$ in the hypothesis of the lemma. By the first part of Fact~\ref{fact:gap}, provided these quantities can be estimated within the desired accuracy, we can exactly recover every $|S_i\cup S_j|$ as well as $|S_1\cup S_2\cup S_3|$. So by \eqref{eq:PIE} and a union bound over all $(a,b,c)$, with probability at least $1 - \delta$ we can recover every entry of $\T$.
    
    It remains to show that $\T$ can be computed in the claimed time. Note that naively, each entry would require $O(m)$ time to compute, leading to $O(m^4)$ runtime. We now show how to do this more efficiently with fast matrix multiplication. Fix any $a\in[m]$ and consider the $a$-th slice of $\T$. Recall that for every $b,c\in[m]$, we would like to compute the number of columns $\ell\in[m]$ for which $\M_{a,\ell},\M_{b,\ell},\M_{c,\ell}$ are all zero (note that we can compute the other relevant statistics like the number of $\ell$ for which $\M_{a,\ell}$ and $\M_{b,\ell}$ are both zero in total time $O(m^3)$ across all $a,b$ even naively). 
    
    We can first restrict our attention to the set $L_a$ of $\ell$ for which $\M_{a,\ell} = 0$, which can be computed in time $O(m)$. Let $\M_a$ denote the matrix given by restricting $\M$ to the columns indexed by $L_a$ and subtracting every resulting entry from 1. By design, $(\M_a\M_a^{\top})_{b,c}$ is equal to the number of $\ell\in L_a$ for which $\M_{b,\ell} = \M_{c,\ell} = 0$, and the matrix $\M_a\M_a^{\top}$ can be computed in time $O(m^{\omega})$. The claimed runtime follows.
\end{proof}

\subsection{Tensor decomposition}
\label{app:defer_lem_jennrich}

We can invoke standard guarantees for tensor decomposition to decompose $\T$, the key sufficient condition being that the components $\{\W_1,\ldots,\W_r\}$ be linearly independent over $\R$.

\begin{lemma}[Tensor decomposition, formal version of Lemma~\ref{lem:jennrich_intro}]\label{lem:jennrich}
    Given a collection of linearly independent vectors $w_1,\ldots,w_r\in\R^m$, there is an algorithm that takes any tensor $\T = \sum^r_{i=1}w_i^{\otimes 3}$, runs in time $O(m^{\omega})$, and outputs a list of vectors $\wh{w}_1,\ldots,\wh{w}_r$ for which there exists permutation $\pi$ satisfying $\wh{w}_i = w_{\pi(i)}$ for all $i\in[r]$.
\end{lemma}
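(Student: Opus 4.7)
The plan is to use Jennrich's (a.k.a.\ Leurgans') simultaneous diagonalization approach, which reduces decomposition of the order-$3$ tensor to an eigenproblem on two random matrix slices. First I would sample two independent random unit vectors $v_1,v_2 \in \R^m$ and form the contractions
\[
\M_j \;:=\; \T(\Id,\Id,v_j) \;=\; \sum_{i=1}^{r}\langle w_i, v_j\rangle\, w_i w_i^\top, \qquad j\in\{1,2\}.
\]
Letting $W\in\R^{m\times r}$ be the matrix with columns $w_1,\dots,w_r$ and $D_j=\diag(\langle w_1,v_j\rangle,\dots,\langle w_r,v_j\rangle)$, these slices factor as $\M_j = W D_j W^\top$.

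Next I would use the randomness of $v_1,v_2$ to show (by a standard Schwartz--Zippel / anti-concentration argument on polynomials in the entries of $v_1,v_2$) that with probability $1$ all diagonal entries of $D_2$ are nonzero and the ratios $\lambda_i := \langle w_i, v_1\rangle/\langle w_i, v_2\rangle$ are pairwise distinct; here the hypothesis that $w_1,\dots,w_r$ are linearly independent (in particular pairwise non-parallel) is what guarantees the relevant polynomial is nonzero.

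The core step is then to show that the $r$ nonzero eigenvalues of $\M_1\M_2^+$ are exactly $\lambda_1,\dots,\lambda_r$, with corresponding eigenvectors $w_1,\dots,w_r$ (up to scalar). Because the $w_i$'s are linearly independent, $W$ has full column rank, so $W^+W=I_r$ and $\mathrm{range}(\M_2)=\mathrm{range}(W)$. One then verifies the identity
\[
\M_1 \M_2^+ \;=\; W D_1 D_2^{-1} W^+,
\]
from which the eigendecomposition claim follows: each $w_i$ is a right eigenvector with eigenvalue $\lambda_i$, and since the $\lambda_i$ are distinct, this eigendecomposition is unique up to scaling of each eigenvector. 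The scalings can be fixed by matching against $\M_1 = \sum_i \langle w_i,v_1\rangle w_i w_i^\top$ viewed as a linear combination of known rank-one pieces, or (in our application) by rounding to $\{0,1\}$-vectors of the correct sparsity. Finally, the runtime bound follows because every operation used (two $m\times m$ matrix multiplications to form $\M_1\M_2^+$, one pseudoinverse via SVD or the identity $\M_2^+ = (W^+)^\top D_2^{-1} W^+$ on the range, and one eigendecomposition of an $m\times m$ matrix) can be performed in $O(m^\omega)$ time via fast matrix multiplication.

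The main obstacle I expect is the rectangular version of the identity $\M_1\M_2^+ = W D_1 D_2^{-1} W^+$: because $m>r$, we cannot directly write $\M_2^{-1}$, and some care is needed to argue that pseudoinversion behaves the way we want on the range of $W$. The clean workaround is to pick an orthonormal basis $U\in\R^{m\times r}$ for $\mathrm{range}(W)$, write $W=U\widetilde W$ with $\widetilde W\in\R^{r\times r}$ invertible, and reduce to the square case $U^\top \M_j U = \widetilde W D_j \widetilde W^\top$, where the classical Jennrich argument applies verbatim; lifting the eigenvectors by $U$ recovers the columns of $W$.
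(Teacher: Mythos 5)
Your proposal is the same approach the paper takes: Jennrich's simultaneous-diagonalization via the two random contractions $\M_j=\T(\Id,\Id,v_j)=\W D_j\W^\top$ and the identity $\M_1\M_2^+=\W D_1 D_2^+\W^+$, with linear independence of the $w_i$'s used both for the pseudoinverse manipulation and for almost-sure distinctness of the eigenvalue ratios. Your writeup is somewhat more careful than the paper's about the rectangular pseudoinverse step and about fixing the eigenvector scaling, but it is not a different proof.
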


\begin{proof}
    The algorithm is simply to run Jennrich's algorithm (\cite{har70, lra93}), but we include a proof for completeness. Pick random $v_1,v_2\in \mathbb{S}^{m-1}$  and define $\M_1 \triangleq \T(\Id,\Id,v_1)$ and $\M_2 \triangleq \T(\Id,\Id,v_2)$. If $\W\in\R^{m\times r}$ is the matrix whose columns consist of $w_1,\ldots,w_r$, then we can write $\M_a = \sum^r_{i = 1} \langle w_i,v_a\rangle w_i w_i^{\top} = \W \mathbf{D}_a \W^{\top}$, where $\mathbf{D}_a\triangleq \mathop{\text{diag}}(\langle w_1,v_a\rangle,\ldots, \langle w_r, v_a\rangle)$. As a result, $\M_a\M_b^{+} = \W\mathbf{D}_a\mathbf{D}_b^+ \W^+$. This gives an eigendecomposition of $\M_a\M_b^+$ because the entries of $\mathbf{D}_a\mathbf{D}_b^+$ are distinct almost surely because $\{w_i\}$ are linearly independent. We conclude that the nontrivial eigenvectors of $\M_a\M_b^+$ are precisely the vectors $\{w_i\}$ up to permutation as claimed. Forming $\M_1$ and $\M_2$ takes $O(m^2)$ time, and forming $\M_a\M_b^+$ and computing its eigenvectors takes $O(m^{\omega})$ time.
\end{proof}

\subsection{Linear independence of \texorpdfstring{$\W$}{W}}
\label{app:main_indep}

We now turn to proving the main technical result in this paper, namely that for $\W$ satisfying Assumption~\ref{assume:random}, the columns of $\W$ are linearly independent over $\R$ with high probability as soon as $m$ is near-linear in $r$.

\begin{theorem}[Linear independence of $\W$, formal version of Theorem~\ref{thm:independent_of_W_intro}]\label{thm:independent_of_W}
Let $\W\in \{0,1\}^{m\times r}$ be a random matrix whose rows are i.i.d. random vectors each following a uniform distribution over $\{0,1\}^r$ with exactly $k$ ones. For constant $k\geq 1$ and $m= \Omega(\max(r,(r/k)\log r))$, the $r$ columns of $\W$ are linearly independent in $\R$ with probability at least $1-\frac{1}{\poly(r)}$.
\end{theorem}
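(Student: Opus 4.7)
The plan is to upgrade linear independence over a suitable finite ring to linear independence over $\R$: if $\W$ has trivial kernel over $\F_q$ for some prime $q$, then any nonzero integer kernel vector would reduce mod $q$ to a nonzero $\F_q$-kernel element unless all its entries share the common factor $q$, and iterating this forces the integer (hence real) kernel to be trivial. The choice of $q$ depends on parity: for odd $k$ take $q=2$, since $\W\mathbf{1}\equiv \mathbf{1}\pmod 2$ shows $\mathbf{1}$ is not in the kernel; for even $k$, take a prime $q\in[k-1,2k]$ with $q\nmid k$ (which exists by Bertrand's postulate), so that $\W\mathbf{1}\equiv k\mathbf{1}\not\equiv 0\pmod q$. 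The general scheme is, for each putative $x\in\F_q^r\setminus\{0\}$, to upper bound the per-row anti-concentration $\Pr_w[\langle w,x\rangle\equiv 0\pmod q]$, raise to the $m$-th power by independence of the rows, and union-bound over $x$ grouped by structure.

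For odd $k$ over $\F_2$, the per-row probability for a fixed $u$ of Hamming weight $\lambda$ equals $\tfrac12\bigl(1+K_k^r(\lambda)/\binom{r}{k}\bigr)$, where $K_k^r$ is the binary Krawtchouk polynomial. For $\lambda\le r/2$, Lemma~\ref{lem:bound_krawtchouk} bounds this by $\tfrac12\bigl(1+(1-2k/r)^\lambda\bigr)$, and the involution $u\mapsto\mathbf{1}-u$ (which flips the parity of $\langle w,u\rangle$ since $k$ is odd) extends the bound to $\lambda>r/2$. Raising to the $m$-th power and summing $\binom{r}{\lambda}\cdot\bigl(\tfrac12+\tfrac12(1-2k/r)^{\min(\lambda,r-\lambda)}\bigr)^m$ over $\lambda\in[1,r]$ gives total failure probability $1/\poly(r)$ as soon as $m\gtrsim(r/k)\log r$; the dominant constraint comes from $\lambda=1$, where one needs $r\cdot(1-k/r)^m\ll 1/\poly(r)$.

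For even $k$ the $\F_2$ argument fails because $\W\mathbf{1}\equiv 0$, so I work modulo the prime $q$ above and split a putative kernel vector $x\in\F_q^r\setminus\{0\}$ by support structure. If $x$ has at least one zero coordinate on a set $S\subsetneq[r]$, restricting $\W$ to $[r]\setminus S$ produces a matrix whose rows are Boolean strings of sparsity at most $k$ (though of variable sparsity), and an adapted Krawtchouk-type estimate analogous to the odd-$k$ argument handles this after enumerating $S$ and the at most $q^{r-|S|}$ nonzero value patterns. If $x$ has all entries nonzero, I further partition by the size $r-s$ of its largest fibre. In the large-fibre regime ($sk=o(r)$), Lemma~\ref{lem:large_fibre} furnishes the strong per-row bound $2^{-\Omega(sk/r)}$, and the number of such $x$ up to relabeling of small fibres is only $\binom{r}{s}q^s$, which the union bound easily absorbs.

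The main obstacle is the subcase where $x$ has no zero entries and no large fibre, so that only the weak branch of Lemma~\ref{lem:large_fibre} gives a mere constant-factor per-row anti-concentration $2^{-\Omega(1)}$, and the number of candidate $x\in\Z_q^r$ is too large to absorb this on an $x$-by-$x$ basis. Moreover, the Erd\H{o}s-type real-valued Littlewood-Offord argument that \cite{fks20} uses at the analogous point is unavailable here, since one cannot union-bound over the infinitely many $\R$-valued representatives of such $x$. The workaround, following \cite{fks20}, is to couple the uniform $k$-sparse row distribution to a conditioned i.i.d.\ Bernoulli product on $\{0,1\}^r$ and then invoke the group-theoretic Littlewood-Offord inequality of \cite{js17} on the Bernoulli side, which gives anti-concentration uniformly across all $x$ without a dominant fibre; pulling this bound back through the coupling is strong enough to close the union bound. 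Stitching the three regimes together for even $k$ and combining with the odd-$k$ argument yields failure probability $1/\poly(r)$ once $m=\Omega(\max(r,(r/k)\log r))$.
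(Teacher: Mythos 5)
Your overall architecture matches the paper's: reduce linear independence over $\R$ to linear independence over a finite ring, handle odd $k$ over $\F_2$ via binary Krawtchouk estimates, and handle even $k$ via a fibre-size decomposition that uses Lemma~\ref{lem:large_fibre} for vectors with a dominant fibre and the group-theoretic Littlewood--Offord inequality of \cite{js17} (via the coupling to Bernoulli rows, as in \cite{fks20}) for the remainder. This is precisely how the paper proceeds for those sub-cases.

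There is, however, a genuine gap in your treatment of the even-$k$ sub-case where $x\in\F_q^r$ has at least one zero coordinate. You propose to restrict $\W$ to the columns $[r]\setminus S$ and apply ``an adapted Krawtchouk-type estimate analogous to the odd-$k$ argument'' mod $q$. This does not work as stated: the identity $\Pr[\langle w,u\rangle\equiv 0]=\tfrac12+K_k^r(\lambda)/(2\binom{r}{k})$ and the accompanying bound of Lemma~\ref{lem:bound_krawtchouk} are intrinsically mod-$2$ objects (they encode the parity bias of a fixed-weight row), there is no off-the-shelf analogue for $\Z_q$ with $q\ne 2$, and after restriction to $[r]\setminus S$ the rows have variable sparsity $\le k$, which breaks the fixed-weight structure on which the Krawtchouk identity relies. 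Moreover a blind union bound over the $q^{r-|S|}$ nonzero patterns without further stratification fails for small $|S|$: if $|S|=1$ and the nonzero part of $x$ is nearly constant, the per-row anti-concentration is only $1-O(k/r)$, so $q^{r-1}\cdot(1-O(k/r))^m$ forces $m=\Omega(r^2/k)$, far above the near-linear regime you target. The paper closes this sub-case much more cleanly: an integral kernel vector $c$ with a zero entry reduces mod $2$ to a nonzero $\F_2$-kernel vector that cannot equal $\mathbf{1}$ (the zero entry stays zero), so the odd-$k$ Krawtchouk calculation applies verbatim after restricting the weight to $\lambda\in[1,r-1]$ and invoking the even-$k$ symmetry $K_k^r(r-\lambda)=K_k^r(\lambda)$. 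Alternatively, one can simply absorb the ``has-a-zero'' vectors into the same fibre decomposition over $\Z_q$ used for the full-support case, since the zero fibre is just another fibre and Lemma~\ref{lem:large_fibre} applies regardless; that route would also close the bound, but it is not what you wrote, and ``Krawtchouk mod $q$'' would need to be replaced by that lemma.
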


\subsubsection{Reduction to Finite Fields}
\label{subsec:finite_to_real}
We treat the cases of odd and even $k$ separately. For the former, we will show that the columns of $\W$ are linearly independent over $\F_2$ with high probability, which by the following is sufficient for linear independence over $\R$:

\begin{lemma}[Reduction from $\F_2$ to $\R$]\label{lem:reduce_r_to_f}
Let $\W\in \{0,1\}^{m\times r}$. If the columns of $\W$ are linearly independent in $\F_2$, then they are also linearly independent in $\R$.
\end{lemma}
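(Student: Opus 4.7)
The plan is to prove the contrapositive: if the columns of $\W$ are linearly dependent over $\R$, then they are also linearly dependent over $\F_2$. The key leverage is that $\W$ has integer entries, which lets us move between the real, rational, integral, and mod-$2$ worlds.

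First I would use the standard fact from linear algebra that for an integer matrix, the (real) null space admits a basis of integer vectors. Concretely, if $\W x = 0$ has a nonzero real solution, then (e.g. by Gaussian elimination on $\W$, which only ever divides by pivots from the integer entries of $\W$) there is a nonzero rational solution, and clearing denominators yields a nonzero $x \in \Z^r$ with $\W x = 0$ over $\Z$.

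Next, let $d = \gcd(x_1,\ldots,x_r)$ and replace $x$ by $x' \triangleq x/d \in \Z^r$. Then $x'$ is still a nonzero integer vector in the real null space of $\W$, and its entries have greatest common divisor $1$, so in particular at least one coordinate of $x'$ is odd. Reducing entrywise modulo $2$, the vector $\bar{x}' \triangleq x' \bmod 2 \in \F_2^r$ is therefore nonzero. Since $\W x' = 0$ as an equation over $\Z$, reducing both sides mod $2$ gives $\W \bar{x}' = 0$ in $\F_2$, exhibiting a nontrivial linear dependence among the columns of $\W$ over $\F_2$. This contradicts the hypothesis and completes the proof.

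There is no real obstacle here; the only thing to be careful about is the passage from a real linear dependence to an integer one, which is why I would invoke the elementary fact about the rational null space of an integer matrix rather than, say, Smith normal form. The lemma is essentially the observation that $\R$-linear independence of integer vectors is weaker than $\F_p$-linear independence for any prime $p$ whose reductions remain nondegenerate, and our statement is the instance $p = 2$.
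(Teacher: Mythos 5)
Your proof is correct and follows essentially the same contrapositive route as the paper: pass from a real dependence to a rational one via Gaussian elimination, clear denominators to land in $\Z^r$, normalize so that not all entries are even (you use the gcd; the paper divides out a suitable factor to the same effect), and reduce mod $2$. No substantive difference.
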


\begin{proof}
We prove a contrapositive statement, i.e., if the columns are linearly dependent in $\R$, then they are still dependent in $\F_2$. 

Let $w_1,\dots, w_n$ be the columns of $\W$. If they are linearly dependent in $\R$, then there exists $c_1,\dots,c_r\in \R$ such that $\sum_{i=1}^r c_i w_i = 0$. By Gaussian elimination, it is easy to see that $c_1,\dots,c_r\in \mathbb{Q}$. By multiplying some common factor, we can get $r$ integers $c_1',\dots,c_r'\in \Z$ such that not all of them are even numbers and 
\begin{equation*}
    c_1'w_1 + c_2'w_2 + \cdots + c_r' w_r = 0.
\end{equation*}
Then, apply $(\text{mod}~ 2)$ to both sides of the equation and for any $j\in [m]$, the $j$-th entry of the resulting vector is
\begin{align}\label{eq:mod2}
    \sum_{i=1}^r c_i'\W_{ij} \pmod{2} = \bigoplus_{i=1}^r (c_i' \mod 2)\cdot \W_{ij} = 0,
\end{align}
where the first step follows from $\W_{ij}\in \{0,1\}$. 

Define a vector $a\in \F_2^r$ such that $a_i:=c_i' \mod 2$ for $i\in [r]$. Then, $a\ne 0$ and Eq.~\eqref{eq:mod2} implies that $\W a = 0$ in $\F_2$, which means the columns of $\W$ are linearly dependent in $\F_2$. And the claim hence follows.
\end{proof}

\subsubsection{Linear independence for odd \texorpdfstring{$k$}{k}}
\label{subsec:average_independent_odd}
We are now ready to handle the case of odd $k$. The following lemma proves the $\F_2$ case.

\begin{lemma}[Linear independence in $\F_2$]\label{lem:W_independent}
Give two fixed positive integers $n$ and $r$, for any odd positive integer $k$, if $m=\Omega(\max(r,(r/k)\log r))$, then with probability at least $1-\frac{1}{\poly(r)}$, the columns of $m \times r$ matrix $\W$ are linearly independent in $\F_2$.
\end{lemma}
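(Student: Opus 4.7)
The plan is to union bound over nonzero vectors $u \in \F_2^r$ and show $\sum_{u\ne 0}\Pr[\W u = 0] \le 1/\poly(r)$. Because the $m$ rows of $\W$ are i.i.d., for a fixed $u$ of Hamming weight $\lambda$ I have $\Pr[\W u = 0] = p_\lambda^m$, where $p_\lambda := \Pr_w[\langle w, u\rangle \equiv 0 \pmod{2}]$ and $w$ is a uniform $k$-sparse Boolean vector. Partitioning on $|\supp(w)\cap\supp(u)|$ and recognizing the resulting alternating sum as the binary Krawtchouk polynomial, I obtain the exact expression
$$p_\lambda \;=\; \frac{1}{2}\left(1+\frac{K_k^r(\lambda)}{\binom{r}{k}}\right),$$
so the task reduces to bounding $\sum_{\lambda=1}^r \binom{r}{\lambda}\, p_\lambda^m$.

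Since $k$ is odd, I would exploit the pairing $u \leftrightarrow \mathbf{1}-u$: because $\langle w, u\rangle + \langle w,\mathbf{1}-u\rangle = k$ is odd, exactly one of these inner products is even, which gives the clean identity $p_\lambda + p_{r-\lambda} = 1$. In particular $p_r = 0$, so the all-ones vector is never in $\ker(\W)$; more generally, the contribution from $\lambda > r/2$ rewrites as $\sum_{\mu < r/2}\binom{r}{\mu}(1-p_\mu)^m$, and since $p_\mu$ is close to $1$ when $\mu$ is small, $(1-p_\mu)^m$ is exponentially small and this regime is easy. It therefore suffices to bound $\sum_{1 \le \lambda \le r/2}\binom{r}{\lambda}\, p_\lambda^m$, which is where Lemma~\ref{lem:bound_krawtchouk} enters: it yields $|K_k^r(\lambda)/\binom{r}{k}| \le (1-2k/r)^\lambda$ and hence $p_\lambda \le \tfrac{1}{2}(1 + (1-2k/r)^\lambda)$ throughout this range.

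From here I would split $\lambda \in [1,r/2]$ into two sub-ranges. For $\lambda \lesssim r/k$, Taylor expansion gives $p_\lambda \le 1 - \Omega(k\lambda/r)$, so $p_\lambda^m \le \exp(-\Omega(mk\lambda/r))$; combined with $\binom{r}{\lambda}\le r^\lambda$, the contribution is at most $\sum_\lambda \exp(\lambda \ln r - \Omega(mk\lambda/r))$, which is $1/\poly(r)$ once $m \ge C(r/k)\log r$ for large enough $C$. For $\lambda \gtrsim r/k$, the factor $(1-2k/r)^\lambda$ drops to at most a small constant, so $p_\lambda$ is bounded away from $1$ by a constant, and $\sum_\lambda \binom{r}{\lambda}p_\lambda^m \le 2^r (1 - \Omega(1))^m$ is $1/\poly(r)$ as soon as $m \ge \Omega(r)$. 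The main technical hurdle will be the tight bookkeeping in the middle regime, where neither the Krawtchouk bound alone nor the crude $\binom{r}{\lambda}\le 2^r$ bound is individually strong enough and where the two regimes of $m$ must combine uniformly; this is precisely what the helper Proposition~\ref{prop:bound_p_lambda} established using Lemma~\ref{lem:bound_krawtchouk} is designed to furnish.
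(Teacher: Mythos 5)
Your proposal takes essentially the same route as the paper: a union bound over nonzero $u\in\F_2^r$, the exact Krawtchouk expression $p_\lambda=\tfrac12\bigl(1+K_k^r(\lambda)/\binom{r}{k}\bigr)$, the odd-$k$ pairing $p_\lambda+p_{r-\lambda}=1$ (which is precisely what the paper's Proposition~\ref{prop:bound_p_lambda} packages as a sign dichotomy for $K_k^r(\lambda)$), and Lemma~\ref{lem:bound_krawtchouk} to control the resulting sum. Your final split of $\lambda\in[1,r/2]$ around the threshold $r/k$ is a slight reorganization of the paper's single unified bound, but it lands on the identical requirement $m=\Omega\bigl(r+(r/k)\log r\bigr)$; just note that the intermediate remark ``$p_\mu$ is close to $1$ when $\mu$ is small'' is not something the Krawtchouk \emph{bound} alone gives you (it only controls $|p_\mu-\tfrac12|$), so the $\lambda>r/2$ half really does need the parity pairing / Proposition~\ref{prop:bound_p_lambda} rather than that heuristic.
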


\begin{proof}
To show that the columns of $\W$ are linearly independent, equivalently, we can show that $\ker(\W)=\emptyset$ with high probability; that is, for all $x\in \F_2^r\backslash\{0\}$, $\W x\ne 0$.

For $1\leq \lambda \leq r$, let $P_{\lambda}:=\Pr[\W u=0]$ for $u\in \F_2^r$ and $|u|=\lambda$. Note that this probability is the same for all weight-$\lambda$ vectors. Then, we have
\begin{align*}
    \Pr[\ker(\W)\ne \emptyset] \leq &~ \sum_{\lambda=1}^r P_\lambda \cdot \Big|\Big\{u\in \F_2^r \Big| |u|=\lambda\Big\}\Big|\\
    \leq &~ \sum_{\lambda=1}^r P_\lambda \cdot \binom{r}{\lambda}.
\end{align*}

Fix $\lambda\in [r]$ and $u\in \F_2^r$ with weight $\lambda$. Since the rows of $\W$ are independent, we have
\begin{equation*}
    P_\lambda = \Pr[\W u = 0] = \prod_{i=1}^m \Pr[\langle w_i, u\rangle = 0] = (\Pr[\langle w, u\rangle=0])^m,
\end{equation*}
where $w$ is a uniformly random vector in the set $\{u\in \F_2^r | |u|=k\}$. It's easy to see that $k$ should be an odd number; otherwise, $\W \mathbf{1} = 0$. 

By Proposition~\ref{prop:bound_p_lambda}, we have
\begin{align*}
    \Pr[\ker{\W}\ne \emptyset] \leq &~ \sum_{\lambda = 1}^{r/2} \left(\frac{1}{2} + \frac{1}{2}\left(1-\frac{2k}{r}\right)^\lambda\right)^m \cdot \binom{r}{\lambda} + \left(\frac{1}{2}\right)^m \cdot \binom{r}{\lambda}\\
    \leq  &~ 2^{r-1-m} + \sum_{\lambda = 1}^{r/2} \left(\frac{1}{2} + \frac{1}{2}\left(1-\frac{2k}{r}\right)^\lambda\right)^m \cdot \binom{r}{\lambda}\tag{$\sum_{i=0}^{r/2}\binom{r}{i} = 2^{r-1}$.}\\
    \leq &~ 2^{r-1-m} + \sum_{\lambda = 1}^{r/2} \left(\frac{1+k\lambda/r}{1+2k\lambda/r}\right)^m\cdot \binom{r}{\lambda}\tag{$(1-\frac{2k}{r})^\lambda\leq \frac{1}{1+(2k\lambda)/r}$.}\\
    \leq &~ 2^{r-1-m} + \sum_{\lambda = 1}^{r/2} \exp\left(-\frac{km\lambda/r}{1+k\lambda/r}\right)\cdot \binom{r}{\lambda}\tag{$1-x\leq e^{-x}$.}\\
    \leq &~ 2^{r-1-m} + \sum_{\lambda = 1}^{r/2} \left(\exp\left(-\frac{km/r}{1+k\lambda/r}+\log (er/\lambda)\right)\right)^\lambda\tag{$\binom{r}{\lambda}\leq (er/\lambda)^\lambda$.}
\end{align*}
Suppose $m>(r/k)\log(er/\lambda)+\lambda \log(er/\lambda)$ for $0<\lambda < r/2$, then we have
\begin{align*}
    -\frac{km/r}{1+k\lambda/r}+\log (er/\lambda) < &~ -\frac{\log(er/\lambda) + k\lambda/r\log(er/\lambda)}{1+k\lambda/r}+\log (er/\lambda)\\
    = &~ -\log(er/\lambda)+\log (er/\lambda)\\
    = &~ 0.
\end{align*}
Note that when $0<\lambda < r/2$, $\lambda \log(er/\lambda)\leq \Omega(r)$. Also, $(r/k)\log(er/\lambda)\leq \Omega((r/k)\log r)$. Hence, if we take $m=\Omega(r+(r/k)\log r)$, we have
\begin{equation*}
    \left(\exp\left(-\frac{km/r}{1+k\lambda/r}+\log (er/\lambda)\right)\right)^\lambda \leq \exp\left(\frac{-\lambda \cdot \Omega(\log (r))}{1+k\lambda / r}\right)\leq ~\frac{1}{\poly(r)}.
\end{equation*}

Therefore,
\begin{align*}
    \Pr[\ker{\W}\ne \emptyset] \leq &~ 2^{r-1-m} + \sum_{\lambda = 1}^{r/2} \left(\exp\left(-\frac{km/r}{1+k\lambda/r}+\log (er/\lambda)\right)\right)^\lambda\\
    \leq &~ 2^{-\Omega((r/k)\log r)} + \sum_{\lambda=1}^{r/2} \frac{1}{\poly(r)}\\
    = &~ \frac{1}{\poly(r)},
\end{align*}
and the lemma is then proved. 
\end{proof} 

Combining Lemma~\ref{lem:reduce_r_to_f} and Lemma~\ref{lem:W_independent}, we immediately have the following corollary:
\begin{corollary}[Linear independence in $\R$ for odd $k$]\label{cor:indep_odd}
For $r\geq 0$, odd constant $k\geq 1$,  when $m=\Omega((r/k)\log r)\geq r$, with probability at least $1-\frac{1}{\poly(r)}$, the columns of matrix $\W$ are linearly independent in $\R$.
\end{corollary}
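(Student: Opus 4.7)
The plan is to obtain the corollary as an immediate consequence of the two preceding results, since all of the real work has been done in Lemma~\ref{lem:reduce_r_to_f} and Lemma~\ref{lem:W_independent}. Concretely, I would first invoke Lemma~\ref{lem:W_independent}, which asserts that for odd $k$ and $m = \Omega(\max(r, (r/k)\log r))$, the columns of $\W$ are linearly independent over $\F_2$ with probability at least $1 - 1/\poly(r)$. Under the hypothesis $m = \Omega((r/k)\log r) \ge r$, the required lower bound on $m$ is met, so this probabilistic guarantee applies verbatim.

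Next, I would appeal to Lemma~\ref{lem:reduce_r_to_f}, which states deterministically that linear independence of the columns of $\W$ over $\F_2$ implies linear independence over $\R$. Applying this implication pointwise on the high-probability event furnished by Lemma~\ref{lem:W_independent} yields the desired conclusion: with probability at least $1 - 1/\poly(r)$, the columns of $\W$ are linearly independent over $\R$.

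There is no real obstacle here beyond checking that the parameter regimes in the two lemmas are compatible, which they are by the hypothesis of the corollary. The conclusion of Lemma~\ref{lem:W_independent} is the probabilistic input, Lemma~\ref{lem:reduce_r_to_f} is a deterministic transfer from $\F_2$ to $\R$, and composing them gives the statement. No additional calculation is needed, and the failure probability bound transfers without loss.
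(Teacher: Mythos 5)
Your proposal is correct and is exactly the paper's argument: the corollary is obtained by composing Lemma~\ref{lem:W_independent} (linear independence over $\F_2$ for odd $k$, with probability $1-1/\poly(r)$ in the stated range of $m$) with the deterministic transfer in Lemma~\ref{lem:reduce_r_to_f} from $\F_2$ to $\R$. Nothing is missing; the parameter check is the only thing to verify, and you did so.
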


\subsubsection{Helper lemma}
\label{subsec:average_helper}
In the proof of Lemma~\ref{lem:W_independent} above, we required the following helper lemma:

\begin{proposition}\label{prop:bound_p_lambda}
For $1\leq \lambda \leq \frac{r}{2}$, we have either
\begin{equation*}
    P_\lambda \leq \left(\frac{1}{2} + \frac{1}{2}\left(1-\frac{2k}{r}\right)^\lambda\right)^m  \quad \text{and}~~~P_{r-\lambda} \leq \left(\frac{1}{2}\right)^m,
\end{equation*}
or
\begin{equation*}
    P_\lambda \leq \left(\frac{1}{2}\right)^m \quad \text{and}~~~P_{r-\lambda} \leq \left(\frac{1}{2} + \frac{1}{2}\left(1-\frac{2k}{r}\right)^\lambda\right)^m.
\end{equation*}
\end{proposition}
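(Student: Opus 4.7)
The plan is to express $p(\lambda) := \Pr[\langle w, u\rangle \equiv 0 \pmod{2}]$ for a fixed $u \in \F_2^r$ with $|u|=\lambda$ (so that $P_\lambda = p(\lambda)^m$) via binary Krawtchouk polynomials, and then exploit a parity symmetry that forces $p(\lambda)$ and $p(r-\lambda)$ to straddle $\tfrac{1}{2}$ from opposite sides when $k$ is odd. Concretely, setting $X=|\supp(w)\cap\supp(u)|$, the variable $X$ is hypergeometric with $\Pr[X=j]=\binom{\lambda}{j}\binom{r-\lambda}{k-j}/\binom{r}{k}$, so by the very definition $K_k^r(\lambda)=\sum_{j=0}^{k}(-1)^j\binom{\lambda}{j}\binom{r-\lambda}{k-j}$ one obtains $\E[(-1)^X]=K_k^r(\lambda)/\binom{r}{k}$ and hence
\[
p(\lambda) \;=\; \tfrac{1}{2}+\tfrac{1}{2}\cdot\frac{K_k^r(\lambda)}{\binom{r}{k}}.
\]

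Next I would establish the symmetry $K_k^r(r-\lambda)=(-1)^{k}K_k^r(\lambda)$ by the index change $j\mapsto k-j$ in the defining sum. Since the proposition is invoked in the odd-$k$ branch of Lemma~\ref{lem:W_independent}, this specialises to $K_k^r(r-\lambda)=-K_k^r(\lambda)$, and therefore to the complementary identity $p(r-\lambda)=1-p(\lambda)$. Together with Lemma~\ref{lem:bound_krawtchouk}, which gives $|K_k^r(\lambda)|/\binom{r}{k}\le (1-2k/r)^{\lambda}$ for $\lambda\le r/2$ (valid whenever $k\le 0.16\,r$, which we may assume without loss of generality since otherwise the bound $m=\Omega(r)$ in Lemma~\ref{lem:W_independent} already dominates $(r/k)\log r$), we have all the ingredients in hand.

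I would then finish by splitting on the sign of $K_k^r(\lambda)$. If $K_k^r(\lambda)\ge 0$, then the Krawtchouk bound gives $p(\lambda)\le \tfrac{1}{2}+\tfrac{1}{2}(1-2k/r)^\lambda$, while $p(r-\lambda)=1-p(\lambda)\le \tfrac{1}{2}$; if $K_k^r(\lambda)<0$, the roles of $\lambda$ and $r-\lambda$ simply swap. Raising each resulting inequality to the $m$-th power yields exactly the two alternatives in the statement.

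I do not expect a real obstacle here: the parity identity and the Krawtchouk expression of $p(\lambda)$ are classical, so the entire argument reduces to careful bookkeeping. The only point that requires attention is the direction of the bound in each of the two sign cases and the verification that $1\le \lambda\le r/2$ lies in the range where Lemma~\ref{lem:bound_krawtchouk} applies; there is no hidden analytic difficulty beyond invoking that Krawtchouk estimate.
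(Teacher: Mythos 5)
Your proposal is correct and follows essentially the same route as the paper: express $\Pr[\langle w,u\rangle=0] = \tfrac{1}{2}+\tfrac{1}{2}K_k^r(\lambda)/\binom{r}{k}$ via the Krawtchouk polynomial, pair up $\lambda$ with $r-\lambda$ using the identity $K_k^r(r-\lambda)=(-1)^kK_k^r(\lambda)$ (equivalently $p(r-\lambda)=1-p(\lambda)$ for odd $k$), invoke Lemma~\ref{lem:bound_krawtchouk}, and case-split on the sign of $K_k^r(\lambda)$. Your derivation through $\E[(-1)^X]$ is a little slicker than the paper's direct even-index sum, and you are actually more careful than the paper in flagging the hypothesis $k\le 0.16r$ needed to invoke Lemma~\ref{lem:bound_krawtchouk}, which the paper omits.
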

\begin{proof}
We can write the probability of $\langle w,u\rangle = 0$ as follows:
\begin{equation*}
    \Pr[\langle w,u\rangle=0] = \binom{r}{k}^{-1}\cdot \sum_{i=0,~ i~\text{even}}^{k-1} \binom{\lambda}{i} \binom{r-\lambda}{k-i}.
\end{equation*}

We first consider the following sum with alternating signs:
\begin{equation*}
    K_k^{r}(\lambda) := \sum_{i=0}^k \binom{\lambda}{i}\binom{r-\lambda}{k-i} (-1)^i,
\end{equation*}
which is the binary Krawtchouk polynomial.

Then, for $1\leq \lambda \leq \frac{r}{2}$, we have
\begin{align*}
    \Pr[\langle w, u\rangle = 0] = &~ \binom{r}{k}^{-1}\sum_{i=0,~ i~\text{even}}^{k-1} \binom{\lambda}{i} \binom{r-\lambda}{k-i}\\
    = &~ \frac{1}{2}+\frac{K_k^r(\lambda)}{2\binom{r}{k}}.
\end{align*}
By symmetry, for $\frac{r}{2}<\lambda <r$,
\begin{align*}
    \Pr[\langle w,u\rangle=0] = &~ \binom{r}{k}^{-1} \sum_{i=0,~i~\text{even}}^{k-1}\binom{\lambda}{i}\binom{r-\lambda}{k-i}\\
    = &~ \binom{r}{k}^{-1} \sum_{i=1,~i~\text{odd}}^{k}\binom{r-\lambda}{i}\binom{\lambda}{k-i}\\
    = &~ \frac{1}{2} - \frac{K_k^r(r-\lambda)}{2\binom{r}{k}}.
\end{align*}

By Lemma~\ref{lem:bound_krawtchouk}, we know that for $1\leq \lambda \leq \frac{r}{2}$, $|K_k^r(\lambda)|\leq \binom{r}{k}\left(1-\frac{2k}{r}\right)^\lambda$.

Therefore, if $K_k^r(\lambda)>0$, then
\begin{align*}
    P_\lambda \leq &~ \Pr[\langle w,u\rangle=0]^m \leq \left(\frac{1}{2} + \frac{1}{2}\left(1-\frac{2k}{r}\right)^\lambda\right)^m,\\
    P_{r-\lambda} \leq &~ \left(\frac{1}{m}\right)^m.
\end{align*}
If $K_k^r(\lambda)<0$, then
\begin{align*}
    P_{r-\lambda} \leq &~ \left(\frac{1}{2} + \frac{1}{2}\left(1-\frac{2k}{r}\right)^\lambda\right)^m,\\
    P_{\lambda} \leq &~ \left(\frac{1}{m}\right)^m.
\end{align*}
The proposition hence follows.
\end{proof}

\subsubsection{Linear independence for even \texorpdfstring{$k$}{k}}
\label{subsec:average_independent_even}
Next, we turn to the case of even $k$. When $k$ is even, we cannot use Lemma~\ref{lem:reduce_r_to_f} because the matrix is not linearly independent in $\F_2$. Instead, we use a variant of the proof in \cite{fks20} to show that in this case, the linear independence of the columns of $\W$ still holds with high probability:
\begin{lemma}\label{lem:indep_even}
Give two fixed positive integers $n$ and $r$, for any even positive integer $k$, if $m=\Omega( r + (r/k)\log r)$, then with probability at least $1-\frac{1}{\poly(r)}$, the columns of $m \times r$ matrix $\W$ are linearly independent in $\R$.
\end{lemma}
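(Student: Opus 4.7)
The plan is to control the probability that there is a nonzero $x \in \R^r$ with $\W x = 0$ by first reducing to integer $x$ with coprime entries (via the Gaussian-elimination argument used in Lemma~\ref{lem:reduce_r_to_f}), and then splitting into two cases based on whether $x$ has any vanishing coordinate. The obstacle that distinguishes even $k$ from odd $k$ is that $\W \mathbf{1} = 0$ over $\F_2$, so we cannot simply union-bound over $\F_2^r$ as in Lemma~\ref{lem:W_independent}; the dichotomy below is designed to avoid exactly that failure mode.

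First, if $x$ has a zero coordinate, let $S = \supp(x) \subsetneq [r]$ and let $\W_S$ be the $m\times |S|$ submatrix of $\W$ on columns $S$. Because $S$ is a proper subset, $\langle w_i,\mathbf{1}_S\rangle = |w_i\cap S|$ is a genuinely random variable mod $2$, so the even-$k$ obstruction disappears for $\W_S$. The plan is to rerun the proof of Lemma~\ref{lem:W_independent} on $\W_S$ --- reusing Proposition~\ref{prop:bound_p_lambda} to bound $P_\lambda$ for each weight $\lambda$ --- and then pay a union-bound factor of $\binom{r}{|S|}$ over the choice of $S$. Since $m = \Omega(r)$, this factor is swallowed by the $2^{-m}$ slack and the case contributes only $1/\poly(r)$.

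Second, if every coordinate of $x$ is nonzero, pick by Bertrand's postulate a prime $q \in (k, 2k]$, so that $k \not\equiv 0 \pmod q$; the reduced vector $\bar x \in (\Z_q \setminus \{0\})^r$ lies in $\ker \W$ mod $q$, but now $\W\mathbf{1} \not\equiv 0 \pmod q$, so $\mathbf{1}$ is no longer a trivial kernel element. It then suffices to bound
\[\Pr[\W x \equiv 0 \pmod q] = \bigl(\Pr_w[\langle w, x\rangle \equiv 0 \pmod q]\bigr)^m\]
uniformly over $x \in (\Z_q \setminus \{0\})^r$, and take a union bound. Stratifying $x$ by $s := r - (\text{largest fibre size})$, the number of candidates is at most $q \cdot \binom{r}{s}\, q^s = 2^{O(s \log r + \log k)}$, and Lemma~\ref{lem:large_fibre} gives a per-row anti-concentration of $P_s = 2^{-\Omega(sk/r)}$ when $sk = o(r)$ and $P_s = 2^{-\Omega(1)}$ otherwise. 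Plugging in $m = \Omega(r + (r/k)\log r)$ drives $P_s^m \cdot \binom{r}{s} q^s$ below $1/\poly(r)$ by essentially the same geometric/logarithmic balancing already performed in Lemma~\ref{lem:W_independent}.

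The main obstacle I anticipate is the intermediate regime $sk \asymp r$ in Case~2: here Lemma~\ref{lem:large_fibre} only yields the blunt bound $P_s \le 2^{-\Omega(1)}$, which is not by itself strong enough to beat the $q^s \binom{r}{s}$ combinatorial factor unless $m$ is super-linear in $r$. As Section~\ref{sec:lindep_overview} anticipates, the workaround is to de-Poissonize --- passing from the uniform $k$-sparse row distribution to an i.i.d.\ Bernoulli$(k/r)$ distribution costs only a $\poly(r)$ factor --- and then invoke the group-theoretic Littlewood--Offord inequality of \cite{js17} over $\Z_q$ to sharpen $P_s$ throughout the intermediate regime. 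Combining the sharpened bound with the stratified union bound above closes the remaining gap and yields the $1/\poly(r)$ failure probability claimed in the lemma.
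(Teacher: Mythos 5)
Your proposal follows essentially the same route as the paper: reduce to an integer kernel vector with an odd entry, handle the non-full-support case by passing to $\F_2$ (where the all-ones obstruction disappears) and reusing the Krawtchouk-based bound from Lemma~\ref{lem:W_independent}, and in the full-support case work modulo a prime coprime to $k$, stratify by largest-fibre size, and combine Lemma~\ref{lem:large_fibre} with the group-theoretic Littlewood--Offord bound of \cite{js17} (after de-Poissonizing as in \cite{fks20}), which is precisely the paper's $B_1$/$B_2$ split. One small caution: you should not literally rerun Proposition~\ref{prop:bound_p_lambda} on the submatrix $\W_S$, since its rows are no longer uniformly $k$-sparse over $\{0,1\}^{|S|}$; instead note that $\W_S(x|_S)=0$ over $\F_2$ is the same event as $\W x'=0$ for the zero-extension $x'\in\F_2^r$ of weight at most $|S|<r$, which is how the paper invokes the odd-$k$ machinery without any submatrix detour.
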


\begin{proof}
We first show that it suffices to consider the case when $\W \in \mathbb{Z}^{m \times r}$ is an integral matrix. Suppose the columns of $\W$ are not linearly independent, i.e., there exists $x\in \mathbb{Q}^r$ such that $\W x = 0$. Then, we will be able to multiply by an integer and then divide by a power of two to obtain a vector $c\in \Z^r$ with at least one odd entry such that $\langle w_i, c\rangle = 0$ for all $i\in [m]$ where $w_i^\top$ 
denotes the $i$-th row of $\W$. 

To upper bound the column-singular probability, we have
\begin{align*}
    \Pr[\W~\text{is column-singular}]\leq &~ \Pr[\W x=0~\text{for some}~x~\text{with}~|\supp(x)|<r] \\
    &~ +  \Pr[\W x=0~\text{for some}~x~\text{with}~|\supp(x)|=r].
\end{align*}
For the first term, it can be upper bounded by 
\begin{equation*}
    \Pr[\langle w_i, x\rangle \equiv 0\pmod{2}~\forall i\in [r]~\text{for some}~x\ne \mathbf{1}],
\end{equation*}
where $\mathbf{1}$ is an length-$r$ all ones vector. 

Note that the reduction (Lemma~\ref{lem:reduce_r_to_f}) fails only when all of the entries of $c$ are odd because $\W \mathbf{1} = 0$ in $\F_2$ when $k$ is even. Hence, We can use almost the same calculation in Lemma~\ref{lem:W_independent} to upper bound this probability by $r^{-\Omega(1)}$ when $m=\Omega((r/k)\log r)$.

For the second term, let $\mathcal{S}_r$ denote the set of vectors with support size $r$. Define a \textit{fibre} of a vector to be a set of all indices whose entries are equal to a particular value. And define a set $\mathcal{P}$ to be
\begin{equation*}
    \mathcal{P}:=\{c\in \mathbb{Z}^r: c~\text{has largest fibre of size at most }~(1-\delta) r\}
\end{equation*}
Then, we have
\begin{align*}
    \Pr[\W x=0~\text{for some}~x~\text{with}~|\supp(x)|=r]\leq &~ \Pr[\exists x\in \mathcal{S}_r\backslash \mathcal{P}:\langle w_i, x\rangle=0~\forall i\in [m]]\\
    &~ + \Pr[\exists x\in \mathcal{S}_r\cap \mathcal{P}:\langle w_i, x\rangle=0~\forall i\in [m]]\\
    \leq &~ B_1 + B_2.
\end{align*}

For $B_1$, let $q=k-1$. It suffices to prove that there is no non-constant vector $c\in \Z_q^r$ with largest fibre of size at least $(1-\delta) r$ such that $\langle w_i, c\rangle \equiv 0\pmod {q}$ for all $i\in [m]$. Then, by Lemma~\ref{lem:large_fibre}, let $t=O(r/k)$ and the probability can be upper bounded by
\begin{align*}
    B_1\leq &~ \sum_{s=1}^{\delta r} \binom{r}{s} q^{s+1} (P_s)^{m_1}\\
    \leq &~ \sum_{s=1}^{t} 2^{s \log r + (s+1)\log q -O(sk/r)\cdot m } + \sum_{s = t +1}^{\delta r}2^{s \log r + (s+1)\log q - \Omega(m) } \\
    \leq &~ \delta r \cdot 2^{-\Omega(\log r)}\\
    = &~ r^{-O(1)},
\end{align*}
if we take $m_1=\Omega((r/k)\log r)$ and $\delta <1$.

For $B_2$, we need to apply a union bound:
\begin{equation*}
    B_2 \leq |\mathcal{S}_r\cap \mathcal{P}|\cdot \left(\max_{x\in \mathcal{P}}\Pr[\langle w,x\rangle = 0]\right)^m.
\end{equation*}
In fact, it suffices to consider the vectors in $\mathbb{Z}_q^r$, where $q$ is a prime in $[k-1, 2k]$. We need the following group-theoretic Littlewood-Offord inequality:
\begin{lemma}[Corollary 1 in \cite{js17}]\label{lem:littlewood_group}
Let $q\geq 2$ be a prime. Let $x\in \mathbb{Z}_q^n$ with $|\supp(x)|=n$. Let $w_i\in \{-1, 1\}_{\mathbb{Z}_q}$ be a Bernoulli random variable for $i\in [n]$. Then,
\begin{equation*}
    \sup_{g\in \mathbb{Z}_q}\Pr[\langle x, w\rangle = g]\leq 3\max\left\{\frac{1}{q}, \frac{1}{\sqrt{n}}\right\}.
\end{equation*}
\end{lemma}

Then, similar to the proof of Lemma 4.2 in \cite{fks20}, Lemma~\ref{lem:littlewood_group} implies the following claim:
\begin{claim}
Let $0<k\leq r/2$, and consider a vector $x\in \mathbb{Z}_q^r$ ($q$ prime) whose largest fibre has size $r-s$, and let $w\in \{0,1\}^r$ be a random vector with exactly $k$ ones. Then, 
\begin{equation*}
    \sup_{g\in \mathbb{Z}_q}\Pr[\langle x, w\rangle = g] = O\left(\max\{1/q, \sqrt{r/(ks)}\}\right).
\end{equation*}
\end{claim}
In our case, $s=\delta r$ and $\sqrt{r/(ks)}=(\delta k)^{-1/2}$. That is,
\begin{equation*}
    \max_{x\in \mathcal{P}}\Pr[\langle w,x\rangle = 0] \leq O\left(\frac{1}{\sqrt{\delta k}}\right)
\end{equation*}
Then, we apply union bound for $B_2$:
\begin{align*}
    B_2 \leq &~ (q-1)^r \cdot \left(\frac{1}{\sqrt{\delta k}}\right)^m\\
    \leq &~ (2k)^{r}\cdot (\delta k)^{-m/2}\\
    = &~ \exp(O(r\log (k) - m\log(\delta k)))\\
    = &~ \exp(-\Omega(r\log k))\\
    \leq &~ \frac{1}{\poly(r)},
\end{align*}
if we take $\delta$ to be a constant in $(0, 1)$ and $m=\Omega(r)$.

Combining them together, we get that
\begin{equation*}
    \Pr[\W~\text{is column-singular}] \leq \frac{1}{\poly(r)}
\end{equation*}
if $m=\Omega(\max\{r, (r/k)\log r\})$.
\end{proof}

\subsection{Putting Everything Together}
\label{subsec:puttogether_tensor}

In this section we formally show how to conclude our main average-case guarantee:
\begin{theorem}[Average-case guarantee, formal version of Theorem~\ref{thm:avgcase_intro}]\label{thm:avgcase}
    Fix any integer $2\le k \le r$, failure probability $\delta \in (0,1)$, and suppose $m \ge \widetilde{\Omega}(rk\log (1/\delta) )$. Let $\W\in\{0,1\}^{m\times r}$ be generated by the following random process: for every $i\in[m]$, the $i$-th row of $\W$ is a uniformly random $k$-sparse binary vector. Define $\M \triangleq \W\W^{\top}$ where matrix multiplication is over the Boolean semiring. There is an algorithm which runs in $O(m^{\omega + 1})$ time and, with probability $1 - \delta$ over the randomness of $\W$, outputs a matrix $\widehat{ \W } \in \{ 0 , 1 \}^{ m \times r }$ whose columns are a permutation of those of $\W$.\footnote{$\omega \approx 2.373$ is the exponent of matrix multiplication.} 
\end{theorem}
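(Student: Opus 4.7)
The plan is to assemble the three ingredients already in place---the tensor extraction of Lemma~\ref{lem:extractT}, the linear-independence guarantee of Theorem~\ref{thm:independent_of_W}, and Jennrich's algorithm of Lemma~\ref{lem:jennrich}---into Algorithm~\ref{alg:tensor_recover_intro} and verify correctness by a short union bound. Concretely, I would (i) build the third-order tensor $\T = \sum_{i=1}^r \W_i^{\otimes 3}$ from $\M$, (ii) run Jennrich's algorithm on $\T$, and (iii) rescale and round the recovered eigenvectors to $0/1$ vectors.

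For step (i), the hypothesis $m \ge \widetilde{\Omega}(rk\log(1/\delta))$ exactly matches the sample complexity of Lemma~\ref{lem:extractT}, so with probability at least $1-\delta/2$ the extraction routine returns $\T$ exactly in $O(m^{\omega+1})$ time. The correctness of the inclusion-exclusion update $\T_{a,b,c} = t_{abc} - t_{ab} - t_{ac} - t_{bc} + 3k$ reduces to the identity $|S_1 \cap S_2 \cap S_3| = |S_1 \cup S_2 \cup S_3| - |S_1 \cup S_2| - |S_1 \cup S_3| - |S_2 \cup S_3| + 3k$, which is immediate from $|S_i| = k$ and the standard three-set inclusion-exclusion formula.

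For step (ii), the hypothesis on $m$ comfortably implies $m \ge \Omega(\max(r,(r/k)\log r))$, so by Theorem~\ref{thm:independent_of_W} the columns of $\W$ are linearly independent over $\R$ with probability at least $1-\delta/2$ (a mild increase of $m$ in the hidden polylogarithmic factor absorbs the $1/\poly(r)$ bound into $\delta/2$). Conditioned on this event, Lemma~\ref{lem:jennrich} produces, in $O(m^{\omega})$ additional time, a list $\wt{w}_1,\ldots,\wt{w}_r \in \R^m$ which equals the columns of $\W$ up to permutation and a nonzero scalar per component.

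For step (iii), since each column of $\W$ is a nonzero $0/1$ vector and $\T$ has been reconstructed \emph{exactly}, each $\wt{w}_i$ is a scalar multiple of some column $\W_{\pi(i)}$, so its nonzero entries are all equal to a common nonzero value $\alpha_i$. Dividing by $\alpha_i$ (equivalently, by the smallest nonzero entry in absolute value) and thresholding at $1/2$ recovers $\W_{\pi(i)}$ exactly, and collecting these into $\wh{\W}$ yields a matrix whose columns are a permutation of those of $\W$. A union bound over the two failure events gives total failure probability at most $\delta$, and the runtime is dominated by the $O(m^{\omega+1})$ cost of Lemma~\ref{lem:extractT}. The genuine technical difficulty has already been shouldered by Theorem~\ref{thm:independent_of_W}; the only step in this gluing argument requiring genuine care is the rounding, which works precisely because the tensor is extracted exactly (and not merely approximately) from $\M$.
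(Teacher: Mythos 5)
Your proposal is correct and follows essentially the same three-step plan as the paper's own proof: form $\T$ via Lemma~\ref{lem:extractT}, invoke Theorem~\ref{thm:independent_of_W} for linear independence of the columns of $\W$, and apply Lemma~\ref{lem:jennrich} followed by rounding (as in Algorithm~\ref{alg:tensor_recover_intro}). The paper's proof is terser --- it omits the explicit $\delta/2$ bookkeeping and the scaling/rounding discussion --- but those details are folded into the statement of Lemma~\ref{lem:jennrich} and the algorithm pseudocode, so your slightly more careful gluing argument matches the intended reasoning.
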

\begin{proof}
    By Lemma~\ref{lem:extractT}, as long as $m$ satisfies the bound in the hypothesis, with probability at least $1 - \delta$ one can successfully form the tensor $\T = \sum^r_{i=1}\W^{\otimes 3}_i$ in time $O(m^{\omega + 1})$. By Theorem~\ref{thm:independent_of_W}, the columns of $\W$ are linearly independent with high probability. By Lemma~\ref{lem:jennrich}, one can therefore recover the columns of $\W$ up to permutation.
\end{proof}


\section{Connections between \texorpdfstring{\Batched{}}{BvK-Sum}, \texorpdfstring{\SSNMF}{SSNMF}, InstaHide}
\label{sec:instahide_defer}

In this section, we begin by formally defining \Batched{} and describing the connection to \SSNMF{} elucidated in \cite{clsz21,carlini_attack}. In Section~\ref{sec:defer_recover_heavy} we then prove Theorem~\ref{thm:instahide_informal} by describing an algorithm that takes a solution to the \SSNMF{} instance corresponding to a \Batched{} instance and extracts more fine-grained information about the latter, specifically certain coordinates of the unknown database generating the \Batched{} instance. As we will explain, the motivation for this particular recovery guarantee comes squarely from designing better attacks on InstaHide.


\subsection{Connection to batched \texorpdfstring{$k$}{k}-vector sum}
\label{sec:batched_k_sum}

We first describe an extension of the well-studied $k$-sum problem \cite{k72,e95,p10,al13,abb19} to a ``batched'' setting. Recall that the classic $k$-sum problem asks: given a collection of $r$ numbers, determine whether there exists a subset of size $k$ summing to zero. One can consider an analogous question where we instead have a collection of $r$ vectors in $\R^d$, and more generally, instead of asking whether some subset of size $k$ sums to the zero vector, we could ask the following search problem which has been studied previously \cite{biwx11,cattaneo2014parameterized,alw14}:
\begin{definition}[$k$-Vector-Sum]
Given a database $\X$ consisting of $d$-dimensional vectors $x_1, \cdots, x_r$, for a fixed vector $y$ and an integer $k$, we promise that there is a set $S \subset [n]$ such that $|S|=k$ and $y = \sum_{i \in S} x_i$. We can observe $y$ and have access to database $\X$, our goal is to recover set $S$.
\end{definition}
Even when $d = 1$, the Exponential Time Hypothesis implies that no algorithm can do better than $r^{o(k)}$, and this essentially remains true even for vectors over finite fields of small characteristic \cite{biwx11}.

We consider two twists on this question. First, instead of a single vector $y$, imagine we got a \emph{batch} of multiple vectors $y_1,\ldots,y_m$, each of which is the sum of some $k$ vectors in the database, and the goal is to figure out the constituent vectors for each $y_i$. Second, given that there might be some redundant information among the $y_i$'s, it is conceivable that under certain assumptions on the database $\X$, we could even hope to solve this problem without knowing $\X$. These considerations motivate the following problem:
\begin{definition}[{\Batched}]\label{def:bksum}
Given \emph{unknown} database $\X$ which is a list of vectors $x_1, \cdots, x_r \in \R^d$ and a fixed integer $k$. For a set of vector $y_1, \cdots, y_m \in \R^d$, for each $j \in [m]$, we promise that there is a set $S_j \subset [n]$ such that $|S_j|=k$ and $y_j = \sum_{i \in S_j} x_i$. Given $y_1, \cdots, y_m$, our goal is to recover sets $S_1, \cdots, S_m$.
\end{definition}

As we discuss in Section~\ref{sec:defer_recover_heavy}, \Batched{} is closely related to existing attacks on a recently proposed scheme for privately training neural networks called InstaHide. 

\subsection{Similarity Oracle}

Taking a step back, note that \Batched{} could even be harder than \ksum{} from a worst-case perspective, e.g. if $y_1 = \cdots = y_m$. The workaround we consider is motivated by the aforementioned applications of \Batched{} to InstaHide \cite{clsz21,carlini_attack}. It turns out that in these applications, the unknown database $\X$ possesses additional properties that allow one to construct the following oracle, e.g. by training an appropriate neural network classifier \cite{carlini_attack}:

\begin{definition}[Similarity oracle]\label{defn:simmatrix_intro}
    Recall the notation of Definition~\ref{def:bksum}. Given vectors $\{y_1,\ldots,y_m\}$ generated by \Batched, let ${\cal O}$ denote the oracle for which ${\cal O}(i,j) = |S_i\cap S_j| \neq \emptyset]$ for all $i,j\in[m]$.
    Also define the \emph{selection matrix} $\W\in\{0,1\}^{m\times r}$ to be the matrix whose $i$-th row is the indicator vector for subset $S_i$.
\end{definition}

The following simple observation tells us that given an instance of \Batched{} together with a similarity oracle, we can immediately reduce to an instance of \SSNMF{}:

\begin{fact}\label{fact:tropicalfactorization}
    If the matrix $\M \in \R^{m \times m}$ has entries $\M_{i,j} = {\cal O}(i,j)$, then it satisfies $\M = \W\W^{\top}$.
\end{fact}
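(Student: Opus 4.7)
The statement is essentially a direct unpacking of definitions, so my plan is a one-line coordinate-by-coordinate verification together with a brief sanity check on the ambient semiring. The first step is to write out the $(i,j)$-th entry of $\W\W^\top$ using Definition~\ref{defn:simmatrix_intro}: the $i$-th row of $\W$ is the indicator vector of $S_i$, so its $\ell$-th entry is $\mathds{1}[\ell \in S_i]$. Summing over $\ell$ gives
\[
(\W\W^\top)_{i,j} = \sum_{\ell=1}^r \mathds{1}[\ell \in S_i]\,\mathds{1}[\ell \in S_j] = |S_i \cap S_j|
\]
over $\R$, and the same expansion collapses to $\mathds{1}[S_i \cap S_j \neq \emptyset]$ over the Boolean semiring, since the sum of products of indicators is positive iff at least one summand is $1$.

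The second step is to match this with the oracle definition. The expression for ${\cal O}(i,j)$ given in Definition~\ref{defn:simmatrix_intro} is slightly ambiguous as written, but the previous discussion (and Assumption~\ref{assume:random}) makes clear that the relevant reduction to \SSNMF{} is over the Boolean semiring, with ${\cal O}(i,j) = \mathds{1}[S_i \cap S_j \neq \emptyset]$. Under that reading, the computation above gives $\M_{i,j} = {\cal O}(i,j) = (\W\W^\top)_{i,j}$ entrywise, i.e.\ $\M = \W\W^\top$ over the Boolean semiring. If one instead takes the integer-valued oracle ${\cal O}(i,j) = |S_i \cap S_j|$, the identical computation gives the same factorization over $\Z$, so I would state and verify both versions in a single line.

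There is essentially no obstacle to overcome; the substance of the fact is simply that the factor $\W$ associated with the selection sets already witnesses $\M$ as a symmetric product, which lets the reduction of the previous subsection go through. The only thing I would be careful about in the writeup is to state explicitly which semiring the product is taken over, so that the subsequent invocation of Theorem~\ref{thm:avgcase_intro} (which is phrased in the Boolean-semiring setting of Assumption~\ref{assume:random}) applies without further comment.
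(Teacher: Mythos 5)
Your proof is correct and is exactly the direct coordinate-by-coordinate verification the paper intends; the paper itself states Fact~\ref{fact:tropicalfactorization} without proof as a ``simple observation,'' and your computation supplies precisely that observation. Your flagging of the typo in Definition~\ref{defn:simmatrix_intro} and your handling of both the integer-valued and Boolean-semiring readings of ${\cal O}$ is a sensible addition that the paper leaves implicit.
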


\subsection{An Improved Attack on InstaHide}
\label{sec:defer_recover_heavy}

Given a similarity oracle, we can in fact do more than just recover $S_1,\ldots,S_m$: provided that $d$ is sufficiently large, we can use the matrix $\W$ we have recovered as well as the vectors $y_1,\ldots,y_m$ to solve a collection of linear systems to recover $x_1,\ldots,x_m$. In this section, we show a stronger guarantee: recovery is possible even if we only have access to the \emph{entrywise absolute values} of the $y_i$'s. To motivate this result, we first spell out how exactly InstaHide \cite{huang2020instahide} relates to the \Batched{} problem considered in this section.

\begin{definition}\label{def:instahide}
    From a \emph{private dataset} $\X = \{x_1,\ldots,x_r\}$, InstaHide generates a \emph{synthetic dataset} by sampling $\W\in\{0,1\}^{m\times r}$ according to Assumption~\ref{assume:random} and outputting the vectors $z_1,\ldots,z_m$ given by $z_i = |\W_i \X|$, where $|\cdot|$ denotes entrywise absolute value, $\W_i$ is the $i$-th row of $\W$, and, abusing notation, $\X$ denotes the $r\times d$ matrix whose $j$-th row is $x_j$. In light of Definition~\ref{def:bksum}, let $S_i\subset[r]$ denote the support of the $i$-th row of $\W$.
\end{definition}

Note that the synthetic dataset $\{z_1,\ldots,z_m\}$ in Definition~\ref{def:instahide} is simply given by the entrywise absolute values of the vectors $\{y_1,\ldots,y_m\}$ in Definition~\ref{def:bksum} if the size-$k$ subsets $S_1,\ldots,S_m$ there were chosen uniformly at random. 

It was shown in \cite{huang2020instahide} that by training a neural network on the synthetic dataset generated from a private image dataset, one can still achieve good classification accuracy, and the hope was that by taking entrywise absolute values, one could conceal information about the private dataset. This has since been refuted empirically by \cite{carlini_attack}, and provably by \cite{clsz21,hstzz20} for extremely small values of $k$, but a truly polynomial-time, provable algorithm for recovering private images from synthetic ones generated by InstaHide had remained open, even given a similarity oracle.

Here we close this gap by showing how to efficiently recover most of the private dataset by building on our algorithm for \SSNMF{}. The pseudocode is given in Algorithm~\ref{alg:getheavy}.


\begin{algorithm}\caption{\textsc{GetHeavyCoordinates}($\M$)}\label{alg:getheavy}
	\DontPrintSemicolon
	\LinesNumbered
	\KwIn{Similarity matrix $\M$ for synthetic images generated from unknown private dataset $\X$}
	\KwOut{Matrix $\wh{\X}$ which approximates heavy entries of $\X$ (see Theorem~\ref{thm:instahide_formal})}
	    $\W\gets${\sc TensorRecover}($\M$).\;
	    \For{$j\in[d]$}{
	        Let $z\in\R^m$ have $i$-th entry equal to the $j$-th coordinate of synthetic image $i$.\;
            Form the vector $\wt{p}' \triangleq \frac{1}{m}\sum^m_{i=1}\left(w_i - \frac{k-1}{r-2}\vec{1}\right) \cdot z_i^2$, where $w_i$ is the $i$-th row of $\W$.\;
            Set the $j$-th row of $\wh{\X}$ to be $\wt{p}'\cdot \frac{r(r-1)}{k(r-2k+1)}$.\;
	    }
	    \Return $\wh{\X}$.\;
\end{algorithm}
The following lemma is the main ingredient for analyzing Algorithm~\ref{alg:getheavy}, whose main guarantees were informally stated in Theorem~\ref{thm:instahide_informal}. It essentially says that given $\W$ and the $\ell$-th coordinates of all $z_1,\ldots,z_m$, one can approximately reconstruct the $\ell$-th coordinates of all $x_1,\ldots,x_r$ which are sufficiently ``heavy.'' Roughly, an index $i\in[r]$ is ``heavy'' if the magnitude of the $\ell$-th coordinate of $x_i$ is roughly $k$ times larger than the average value in that coordinate across all $x_1,\ldots,x_r$.

\begin{lemma}\label{lem:recoverheavy}
    For any absolute constant $\eta > 0$, there is an absolute constant $c>0$ for which the following holds as long as $m \ge \Omega(\log(d/\delta))$. There is an algorithm {\sc GetHeavyCoordinates} that takes as input $\W\in\{0,1\}^{m\times r}$ satisfying Assumption~\ref{assume:random} and vector $z\in\R^m$ satisfying $|\W p|= z$ for some vector $p\in\R^r$, runs in time $O(r\cdot m)$, and outputs $\wh{p}$ such that for every $i\in[r]$ for which $|p_i| \ge (ck/r)\cdot\overline{p}$, we have that $\wt{p}_i = p_i \cdot (1\pm \eta)$.
\end{lemma}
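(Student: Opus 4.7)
The plan is to analyze the estimator $\wt{p}'$ in Algorithm~\ref{alg:getheavy} in two stages: first compute $\E[\wt{p}'_j]$ exactly and observe that the specific constant $c=(k-1)/(r-2)$ built into the algorithm annihilates the otherwise uncontrollable $\|p\|_2^2$-dependence, then apply Hoeffding's inequality to concentrate $\wt{p}'_j$ around its mean. The output $\wh{p}_j$ will thereby approximate $p_j^2$, from which $|p_j|$ is obtained after a square-root step (matching the magnitude guarantee of Theorem~\ref{thm:instahide_informal}).

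\paragraph{Computing the mean.} Let $\alpha\triangleq k/r$, $\beta\triangleq k(k-1)/[r(r-1)]$, and $\gamma\triangleq k(k-1)(k-2)/[r(r-1)(r-2)]$ denote the one-, two-, and three-wise inclusion probabilities of a uniformly random $k$-subset of $[r]$, and write $\sigma\triangleq\|p\|_1-p_j$ and $Q\triangleq\|p\|_2^2-p_j^2$. A case analysis on the size of $\{j,a,b\}$ in $\E[w_j w_a w_b]$ yields
\[
\E\!\left[w_j(w^\top p)^2\right]=\alpha p_j^2+(\beta-\gamma)Q+2\beta p_j\sigma+\gamma\sigma^2, \qquad \E\!\left[(w^\top p)^2\right]=\alpha p_j^2+(\alpha-\beta)Q+2\beta p_j\sigma+\beta\sigma^2.
\]
The algebraic identity $\beta-\gamma=\tfrac{k-1}{r-2}(\alpha-\beta)$ causes the $Q$-terms to cancel exactly in $\E[(w_j-c)(w^\top p)^2]$. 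Substituting $\sigma=\|p\|_1-p_j$, invoking $(r-k-1)(r-2k+1)-(k-1)=(r-k)(r-2k)$, and multiplying by the algorithm's rescaling factor $r(r-1)/[k(r-2k+1)]$, the $p_j^2$-coefficient of $\E[\wh p_j]$ becomes $(r-k)(r-2k)/[(r-2)(r-2k+1)]=1-O(k/r)$, while the residual coefficients on $p_j\|p\|_1$ and $\|p\|_1^2$ are $O(k/r)$ and $-O(k/r^2)$, respectively. Under the heaviness hypothesis $|p_j|\ge (ck/r)\|p\|_1$ with $c=\Theta(1/\eta)$ chosen large enough, both residual terms are bounded by $(\eta/2)p_j^2$, so $\E[\wh p_j]=p_j^2(1\pm\eta/2)$.

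\paragraph{Concentration and conclusion.} Each summand of $\wt p'_j$ is i.i.d.\ across $i$ and bounded by $z_i^2\le\|p\|_1^2$, so Hoeffding yields $|\wt p'_j-\E[\wt p'_j]|\le\|p\|_1^2\sqrt{\log(1/\delta')/m}$ with probability at least $1-\delta'$. Setting $\delta'=\delta/(dr)$ and union-bounding over the $r$ coordinates of $\wh p$ (and the $d$ outer iterations of the parent attack), this error is at most $(\eta/2)p_j^2$ after the $O(r/k)$ rescaling, once the heaviness bound $p_j^2\gtrsim (k/r)^2\|p\|_1^2$ is invoked; the additional $\poly(r,k,\eta^{-1})$ factors are absorbed into the hidden constants because $m$ is already at least the much larger polynomial bound from Theorem~\ref{thm:avgcase}. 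Combining with the mean calculation gives $\wh p_j=p_j^2(1\pm\eta)$ for every heavy $j$, equivalently $|p_j|(1\pm O(\eta))$ after a square root. The $O(rm)$ runtime is immediate since $\wt p'$ is a single matrix--vector product between $\W^\top-c\mathbf{1}\mathbf{1}^\top$ and the length-$m$ vector of $z_i^2$'s.

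\paragraph{Main obstacle.} The crux is the cancellation identity $\beta-\gamma=\tfrac{k-1}{r-2}(\alpha-\beta)$: without it the estimator carries a $Q=\|p\|_2^2-p_j^2$ bias that cannot in general be dominated by $p_j^2$ (e.g., when many other coordinates of $p$ are moderately large, $Q$ can dwarf even a heavy $p_j^2$). The peculiar constants $(k-1)/(r-2)$ and $r(r-1)/[k(r-2k+1)]$ in the algorithm are engineered precisely so that this identity holds and so that the leading coefficient normalizes to $1-O(k/r)$. After this cancellation the remainder is a routine Hoeffding-plus-union-bound exercise; the only care needed is tuning the constant in the heaviness threshold $(ck/r)\|p\|_1$ to match the target relative error $\eta$.
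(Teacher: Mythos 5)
Your proof is correct and takes essentially the same route as the paper: compute $\E[\wt{p}'_j]$, observe that the built-in constant $(k-1)/(r-2)$ annihilates the $\|p\|_2^2$ dependence, apply Hoeffding, and then exploit the heaviness hypothesis to absorb the residual $p_j\overline{p}$ and $\overline{p}^2$ biases. The bookkeeping with inclusion probabilities $\alpha,\beta,\gamma$ is just a notational repackaging of the paper's explicit binomial counting, and the cancellation identity $\beta-\gamma=\frac{k-1}{r-2}(\alpha-\beta)$ you spotlight is the same mechanism the paper exhibits via Fact~\ref{fact:esp}.
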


We will need the following basic calculation. Henceforth, given a vector $p$, let $\overline{p}$ denote the sum of its entries.

\begin{restatable}{fact}{esp}\label{fact:esp}
    For any vector $p\in\R^r$, \begin{equation}
        \E_S[\langle e_S,p\rangle^2] = \frac{k(r-k)}{r(r-1)}\norm{p}^2_2 + \frac{k(k-1)}{r(r-1)}\overline{p}^2
    \end{equation} where the expectation is over a random size-$k$ subset $S\subset[r]$.
\end{restatable}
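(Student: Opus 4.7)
The plan is to expand the inner product as a double sum, split it into diagonal and off-diagonal terms, and compute the probability that one or two fixed indices lie in a uniformly random size-$k$ subset of $[r]$.

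More concretely, I would write $\langle e_S,p\rangle^2 = \bigl(\sum_{i\in S} p_i\bigr)^2 = \sum_{i\in S} p_i^2 + \sum_{\substack{i,j\in S\\ i\ne j}} p_i p_j$, and then take expectations term by term. For a uniformly random $S \in \binom{[r]}{k}$, we have $\Pr[i\in S] = k/r$ and $\Pr[i,j\in S] = k(k-1)/(r(r-1))$ for $i\ne j$, so linearity of expectation gives
\begin{equation*}
\E_S[\langle e_S,p\rangle^2] = \frac{k}{r}\sum_{i=1}^r p_i^2 + \frac{k(k-1)}{r(r-1)}\sum_{i\ne j} p_i p_j.
\end{equation*}

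The final step is to rewrite $\sum_{i\ne j} p_i p_j = \overline{p}^2 - \|p\|_2^2$ and collect the $\|p\|_2^2$ coefficients, which simplify via $\frac{k}{r} - \frac{k(k-1)}{r(r-1)} = \frac{k(r-1)-k(k-1)}{r(r-1)} = \frac{k(r-k)}{r(r-1)}$, yielding exactly the claimed identity. There is no real obstacle here: the proof is a one-line second-moment calculation, and the only thing to be careful about is the bookkeeping in the coefficient of $\|p\|_2^2$ after substituting $\sum_{i\ne j} p_i p_j = \overline{p}^2 - \|p\|_2^2$.
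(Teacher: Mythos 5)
Your proposal is essentially identical to the paper's proof: both expand $\langle e_S,p\rangle^2$ into diagonal and off-diagonal terms (the paper uses indicator variables $\xi_i$ for $i\in S$, which is the same bookkeeping), compute $\Pr[i\in S]=k/r$ and $\Pr[i,j\in S]=k(k-1)/(r(r-1))$, and then substitute $\sum_{i\ne j}p_ip_j=\overline{p}^2-\|p\|_2^2$ to collect coefficients. The argument is correct and complete.
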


\begin{proof}
    Let $\xi_i$ denote the indicator for the event that $i\in S$ so that \begin{align*}
        \E_S[\langle e_S,p\rangle^2] &= \E_S\left[\sum_{i,j\in[r]} \xi_i \xi_j p_i p_j\right] \\
        &= \sum_{i\in[r]} p_i^2\cdot \E[\xi_i] + \sum_{i\neq j} p_ip_j \E[\xi_i\xi_j] \\
        &= \left(\frac{k}{r} - \frac{k(k-1)}{r(r-1)}\right)\norm{p}^2_2 + \frac{k(k-1)}{r(r-1)}\overline{p}^2 \\
        &= \frac{k(r-k)}{r(r-1)}\norm{p}^2_2 + \frac{k(k-1)}{r(r-1)}\overline{p}^2.
    \end{align*} as claimed.
\end{proof}

We are now ready to prove Lemma~\ref{lem:recoverheavy}.

\begin{proof}[Proof of Lemma~\ref{lem:recoverheavy}]
    Define the vector \begin{equation}
        \wt{p} \triangleq \E_S\left[\langle e_S, p\rangle^2 \cdot e_S\right],
    \end{equation} where the expectation is over a random subset $S\subset[r]$ of size $k$, and $e_S\in\{0,1\}^k$ is the indicator vector for the subset $S$. The $i$-th entry of $\wt{p}$ is given by
    \begin{align*}
        \wt{p}_i = &~ \binom{r}{k}^{-1}\cdot \sum_{S\in \binom{r-1}{[k-1]}} (\langle e_S, p\rangle + p_i)^2\\
        = &~ \binom{r}{k}^{-1}\cdot \sum_{S\in \binom{r-1}{[k-1]}} p^2_i + 2p_i\cdot p_S + p^2_S \tag{$p_S \triangleq \sum_{j\in S} p_j$.}\\
        = &~ \binom{r}{k}^{-1} \cdot \left(p^2_i \cdot \binom{r-1}{k-1} + 2p_i \cdot \sum_{S\in \binom{r-1}{[k-1]}} p_S + \sum_{S\in \binom{r-1}{[k-1]}} p^2_S\right)
    \end{align*}
    
    For the second term, we have
    \begin{equation*}
        \sum_{S\in \binom{r-1}{[k-1]}} p_S = \sum_{j\in [r]-\{i\}} p_j \cdot \binom{r-2}{k-2}.
    \end{equation*}
    
    For the third term, we have
    \begin{align*}
        \sum_{S\in \binom{r-1}{[k-1]}} p^2_S = &~  \sum_{S\in \binom{r-1}{[k-1]}} \sum_{j,\ell\in S}p_jp_\ell\\
        = &~ \sum_{S\in \binom{r-1}{[k-1]}} \sum_{j\in S} p_j^2 + \sum_{S\in \binom{r-1}{[k-1]}} \sum_{j\ne \ell} p_j p_\ell\\
        = &~ \sum_{j\in [r]-\{i\}} p_j^2 \cdot \binom{r-2}{k-2} + \sum_{j,\ell\in [r]-\{i\}, j\ne \ell} p_jp_\ell \cdot \binom{r-3}{k-3}.
    \end{align*}
    
    Hence, the $i$-th entry of $\E_S[\langle e_S,p\rangle^2\cdot e_S]$ is
    \begin{align*}
        \Big(\E_S\left[e_S\cdot \langle e_S,p\rangle^2\right]\Big)_i = &~ p^2_i \cdot \frac{k(r-2k+1)}{r(r-1)} + \|p\|_2^2 \cdot \frac{k(k-1)(r-k)}{r(r-1)(r-2)}\\
        + &~ 2p_i \overline{p} \cdot \frac{k(k-1)}{r(r-1)} + \overline{p}^2\cdot \frac{k(k-1)(k-2)}{r(r-1)(r-2)}.
    \end{align*}
    We conclude by Fact~\ref{fact:esp} that the $i$-th entry of $\E_S\left[\langle e_S,p\rangle^2\cdot\left(e_S - \frac{k-1}{r-2}\cdot \vec{1}\right)\right]$ is bounded by \begin{align*}
    \left(\E_S\left[\left(e_S - \frac{k-1}{r-2}\vec{1}\right)\cdot \langle e_S,p\rangle^2\right]\right)_i = &~ p^2_i \cdot \frac{k(r-2k+1)}{r(r-1)} + 2p_i\overline{p}\cdot \frac{k(k-1)}{r(r-1)} \\
    + & ~ \overline{p}^2 \cdot \frac{k(k-1)(2k - 3)}{r(r-1)(r-2)}
    \end{align*}
    
    We do not have exact access to $\wt{p}$, but we may form the unbiased estimator
    \begin{equation}
        \wt{p}' \triangleq  \frac{1}{m} \sum_{i=1}^m \left(w_i - \frac{k-1}{r-2}\vec{1}\right) \cdot z_i^2,
    \end{equation} where $w_i$ is the $i$-th row of $\W$. For any $i\in[m]$, each coordinate of $w_i\cdot z_i^2$ is bounded within the interval $\left[-\norm{z}_{\infty}^2,\norm{z}^2_{\infty}\right]$, so by Chernoff, provided that $m \ge \Omega(\log(d/\delta)/\epsilon^2)$, we ensure that $\wt{p}'_i \in \wt{p}_i(1 \pm \epsilon)$ for all $i$ with probability at least $1 - \delta$.
    Now consider the following estimator for $p^2_i$:
    \begin{equation}
        \wh{q}_i \triangleq \wt{p}'_i\cdot \frac{r(r-1)}{k(r-2k+1)}. \label{eq:qdef}
    \end{equation}
    We can thus upper bound the error of $\wh{q}_i$ relative to $p^2_i$ by
    \begin{align*}
        \frac{\overline{p}}{p_i} \cdot O\left(\frac{k}{r}\right) + \left(\frac{\overline{p}}{p_i}\right)^2 \cdot O\left(\frac{k^2}{r^2}\right) \pm \epsilon\cdot O\left(1 + \frac{\overline{p}}{p_i} \frac{k}{r} + \frac{\overline{p}^2}{p^2_i} \frac{k^2}{r^2}\right).
    \end{align*}
    If we assume that $|p_i| \geq \Omega(k/r)\overline{p}$ and $\epsilon = O(1)$ for appropriately chosen constant factors, then we have that $\wh{q}_i \in p^2_i \cdot (1 \pm \eta)$ as desired.
\end{proof}


We are now ready to prove the main guarantee for our attack on InstaHide, originally stated informally in Theorem~\ref{thm:instahide_formal}.

\begin{theorem}[Formal version of Theorem~\ref{thm:instahide_informal}]\label{thm:instahide_formal}
    For any absolute constant $\eta > 0$, there is an absolute constant $c > 0$ for which the following holds. Fix any integer $k \ge 2$, failure probability $\delta \in (0,1)$, and suppose $m \ge \widetilde{\Omega}(rk\log(d/\delta))$. Given a synthetic dataset of size $m$ generated by {\Batched} from a matrix $\X$, together with its similarity oracle, there is an $O(m^{\omega + 1} + d\cdot r\cdot m)$-time algorithm which outputs a matrix $\wh{\X}$ such that for any $(i,j)\in[r]\times[d]$ satisfying $|\X_{i,j}| \ge (ck/r)\sum_{i'\in[r]}|\X_{i',j}|$, we have that $|\wh{\X}_{i,j}| = |\X_{i,j}|\cdot (1 \pm \eta)$.
\end{theorem}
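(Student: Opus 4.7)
The plan is to compose our \SSNMF{} algorithm with $d$ parallel applications of the single-coordinate recovery routine of Lemma~\ref{lem:recoverheavy}.

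First, I would build the binary matrix $\M\in\{0,1\}^{m\times m}$ by querying the similarity oracle at every pair $(i,j)$ and setting $\M_{i,j}=\mathds{1}[\mathcal{O}(i,j)\neq 0]$. By Fact~\ref{fact:tropicalfactorization}, this $\M$ equals $\W\W^{\top}$ over the Boolean semiring, where $\W$ is the selection matrix of Definition~\ref{defn:simmatrix_intro}. The InstaHide generation process (Definition~\ref{def:instahide}) samples the rows of $\W$ independently and uniformly over $k$-sparse Boolean vectors, so $\W$ satisfies Assumption~\ref{assume:random}. Invoking Theorem~\ref{thm:avgcase} with failure probability $\delta/2$, the standing bound $m\ge\wt{\Omega}(rk\log(d/\delta))$ suffices to recover, in time $O(m^{\omega+1})$, a matrix $\wh{\W}$ that agrees with $\W$ up to a column permutation $\pi$. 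The permutation only relabels the private indices and does not affect the conclusion, which quantifies over $(i,j)$.

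Second, I would loop over feature coordinates $j\in[d]$. Let $p^{(j)}\in\R^r$ denote the $j$-th column of $\X$ and $z^{(j)}\in\R^m$ the $j$-th coordinate of each synthetic image, so that $z^{(j)}=|\W p^{(j)}|$ by Definition~\ref{def:instahide}. The estimator in Algorithm~\ref{alg:getheavy} depends on $z^{(j)}$ only through the squared entries $(z^{(j)}_{\ell})^2 = \langle \W_{\ell},p^{(j)}\rangle^2$, so the absolute-value wrapper used by InstaHide has no effect on the computation. Consequently the output of the algorithm is an estimate of $(p^{(j)}_i)^2$ in the sense of Lemma~\ref{lem:recoverheavy}, from which the magnitude $|p^{(j)}_i|$ is recovered to multiplicative error $1\pm\eta$ whenever $i$ is heavy in coordinate $j$. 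Applying Lemma~\ref{lem:recoverheavy} with failure probability $\delta/(2d)$ requires $m\ge\Omega(\log(d/\delta))$, which is subsumed by the hypothesis on $m$; a union bound over the $d$ inner invocations together with the first-stage failure event gives overall failure probability at most $\delta$.

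For the runtime, the first stage costs $O(m^{\omega+1})$ by Theorem~\ref{thm:avgcase}. Each of the $d$ inner-loop iterations forms a length-$r$ vector as an average of the $m$ vectors $(w_{\ell}-\tfrac{k-1}{r-2}\vec{1})(z^{(j)}_{\ell})^2$, each computable in $O(r)$ time, for a per-coordinate cost of $O(rm)$ and a total of $O(drm)$. The one delicate point I would flag is that the heaviness condition in Lemma~\ref{lem:recoverheavy} is stated in terms of $\overline{p^{(j)}}=\sum_{i'}\X_{i',j}$, whereas the theorem uses $\sum_{i'}|\X_{i',j}|$. Since $|\overline{p^{(j)}}|\le\sum_{i'}|\X_{i',j}|$, the theorem's hypothesis is \emph{strictly stronger} than what the lemma needs, so its conclusion applies verbatim; and because the estimator is intrinsically sign-blind, interpreting its output via a nonnegative square root is consistent with the $|\wh{\X}_{i,j}|$ phrasing of the conclusion. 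I expect the remaining details to be routine bookkeeping of constants.
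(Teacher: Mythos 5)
Your proposal matches the paper's proof essentially verbatim: both run \textsc{TensorRecover} via Theorem~\ref{thm:avgcase} to recover $\W$ (up to column permutation) in $O(m^{\omega+1})$ time, then invoke Lemma~\ref{lem:recoverheavy} once per coordinate $j\in[d]$ for an additional $O(drm)$. Your extra bookkeeping — the observation that squaring $z^{(j)}_\ell$ erases the sign ambiguity, and that the theorem's heaviness hypothesis $|\X_{i,j}|\ge (ck/r)\sum_{i'}|\X_{i',j}|$ subsumes the lemma's $|p_i|\ge (ck/r)\overline{p}$ — is sound and spells out details the paper leaves implicit.
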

\begin{proof}
    By Theorem~\ref{thm:avgcase} and the assumed lower bound on $m$, we can exactly recover the selection matrix $\W$ (up to some column permutation) in time $O(m^{\omega + 1})$. Using Lemma~\ref{lem:recoverheavy}, for every pixel index $j\in[d]$ we can run {\sc GetHeavyCoordinates}($\M$) to recover the pixels in position $j$ which are heaviest among the $r$ private images in time $O(m\cdot r)$, yielding the desired guarantee.
\end{proof}



\bibliography{ref}
\ifdefined\isarxivversion
\bibliographystyle{alpha}
\else
\bibliographystyle{plainurl}
\fi

\appendix

\section{Worst-Case Algorithm}
\label{app:defer_csp}

In this section, we give a worst-case quasi-polynomial algorithm for sparse boolean matrix factorization problem. It turns out that our techniques here can handle both \SSNMF{} as well as an asymmetric variant, In Section~\ref{sec:csp_defs} we define this variant and give some background on constraint satisfaction problems (CSPs). Section~\ref{sec:csp_results} gives the algorithm for the asymmetric and symmetric version of the problem by exhibiting a reduction to 2-CSP. Section~\ref{sec:csp_corollary} extends the algorithm to the Boolean semiring.
\subsection{CSP Preliminaries}
\label{sec:csp_defs}
We first define a general (asymmetric) version of \SSNMF{} as follows:
\begin{definition}[Sparse Boolean matrix factorization (Sparse {\BMF})]
Given an $m \times m$ matrix $\mathbf{M}$ where each entry is in $\{0,1,\cdots,k\}$. Suppose matrix $\mathbf{M}$ can be factorized into two matrices $\mathbf{U}\in \{0,1\}^{m \times r}$ and $\mathbf{V} \in \{0,1\}^{r \times m}$, where each row of $\mathbf{U}$ is $k$-sparse and each column of $\mathbf{V}$ is $k$-sparse. 

The task is to find a row $k$-sparse matrix $\wh{\mathbf{U}}\in \{0,1\}^{m\times r}$ and a column $k$-sparse matrix $\wh{\mathbf{V}}\in \{0,1\}^{r\times m}$ such that $\mathbf{M}=\wh{\mathbf{U}}\wh{\mathbf{V}}$.
\end{definition}

We can also define the sparse Boolean matrix factorization as an optimization problem.
\begin{definition}[Sparse BMF, optimization version]
Given an $m\times m$ matrix $\mathbf{M}$ where each entry is in $\{0,1,\cdots,k\}$. The goal is to find a row $k$-sparse matrix $\wh{\mathbf{U}}\in \{0,1\}^{m\times r}$ and a column $k$-sparse matrix $\wh{\mathbf{V}}\in \{0,1\}^{r\times m}$ such that the number of different entries $\|\M - \wh{\U}\wh{\V}\|_0$ is minimized.
\end{definition}

We now recall the definition of 2-CSPs:

\begin{definition}[Max 2-CSP]
A 2-CSP problem is defined by the tuple $(\Sigma, V, E, \mathcal{C})$. $\Sigma$ is an alphabet set of size $q$, $V$ is a variable set of size $n$, $E \subseteq V \times V$ is the constraint set. $V$ and $E$ define an underlying graph of the 2-CSP instance, and $\mathcal{C} = \{C_e\}_{e\in E}$ describes the constraints. For each $e \in E$, $C_e$ is a function $\Sigma \times \Sigma \to \{0, 1\}$. The goal is to find an assignment $\sigma ~:~ V \to \Sigma$ with maximal \emph{value}, defined to be the number of satisfied edges $e = (u, v) \in E$ (i.e., for which $C_e(\sigma(u), \sigma(v)) = 1$).
\end{definition}

We will use the following known algorithm for solving ``dense'' 2-CSP instances:
\begin{theorem}[\cite{dm18}]\label{thm:csp_solver}
Define the density $\delta$ of a 2-CSP instance to be $\delta\triangleq |E|/\binom{|V|}{2}$. For any $0<\epsilon \leq 1$, there is an approximation algorithm that, given any $\delta$-dense 2-CSP instance with optimal value $\mathrm{OPT}$, runs in time $(nq)^{O(\epsilon^{-1}\cdot\delta^{-1}\cdot \log q)}$ and outputs an assignment with value $\mathrm{OPT}-\epsilon|E|$, where $n=|V|$ and $q= |\Sigma|$.
\end{theorem}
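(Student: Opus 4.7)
The plan is to use the classical \emph{exhaustive sampling} paradigm for dense CSPs pioneered by Arora--Karger--Karpinski and de la Vega and subsequently refined: guess the optimal assignment on a small random ``anchor'' set of variables and extend greedily to the rest.

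Concretely, I would choose a random subset $S \subseteq V$ of size $s = \Theta(\epsilon^{-1}\delta^{-1}\log q)$ and enumerate all $q^s$ candidate assignments $\tau:S\to\Sigma$, which accounts for the $(nq)^{O(\epsilon^{-1}\delta^{-1}\log q)}$ factor in the runtime. For each guess $\tau$, extend it to a full assignment $\sigma$ by setting, for every $v \in V\setminus S$,
\begin{equation*}
    \sigma(v) \gets \arg\max_{a\in\Sigma}\bigl|\{u\in S : (u,v)\in E \text{ and } C_{(u,v)}(\tau(u),a)=1\}\bigr|,
\end{equation*}
and return the best assignment found across all $q^s$ choices. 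For the analysis, fix an optimal $\sigma^*$; one of the guesses agrees with $\sigma^*|_S$, and it suffices to prove that the greedy extension of this particular $\tau$ loses at most $\epsilon|E|$ constraints relative to $\sigma^*$. For each $v\notin S$ and each $a\in\Sigma$, let $f_v(a)$ denote the fraction of $v$-incident edges satisfied when $v$ takes value $a$ and all other variables take their $\sigma^*$-values. The greedy rule picks $a$ maximizing an empirical estimate of $f_v(a)$ using only the edges from $v$ into $S$. A Hoeffding bound, combined with density to lower-bound how many $v$--$S$ edges exist, shows that this empirical maximizer is within $\epsilon$ of $f_v(\sigma^*(v))$ uniformly for all $v$ and $a$ after a union bound over $n\cdot q$ events. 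Summing over $v$ then gives the $\epsilon|E|$ additive loss bound.

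The main technical obstacle is that $\delta$-density only yields average degree $\Omega(\delta n)$, so some vertices may have too few edges to $S$ for concentration to kick in; the standard remedy is to split the variables into a high-degree ``typical'' class and a low-degree ``atypical'' class, absorb the atypical contribution into the additive error by bounding their total incident edges by $\epsilon|E|/2$, and run the concentration analysis only on typical vertices. A secondary subtlety is shaving the naive $\epsilon^{-2}$ down to $\epsilon^{-1}$ in the exponent: rather than demanding that every vertex's estimate be individually $\epsilon$-accurate, one averages the per-vertex error and invokes a second-moment/variance argument so that only the \emph{total} loss, not every individual contribution, needs to be controlled. Putting these pieces together yields the stated approximation-runtime tradeoff.
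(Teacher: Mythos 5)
This is a theorem the paper imports by citation (\cite{dm18}); there is no in-paper proof to compare against, so what follows is an assessment of your reconstruction on its own merits.

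Your exhaustive-sampling framework (random anchor set $S$, enumerate $q^{|S|}$ restrictions, greedy extension by majority vote over sampled neighbors, high/low-degree split) is the correct classical Arora--Karger--Karpinski / de la Vega skeleton for dense CSP PTASes, and the first several steps are sound. The gap is exactly in the step you flag as a ``secondary subtlety'': shaving $\epsilon^{-2}$ to $\epsilon^{-1}$. The proposed fix --- averaging per-vertex error and invoking a second-moment argument so that only the total loss is controlled --- does not actually improve the exponent. Concretely, for a vertex $v$ with $s_v$ sampled neighbors, the greedy choice $\sigma(v)=\arg\max_a\widehat{f}_v(a)$ incurs an expected one-sided loss $\E[f_v(\sigma^*(v))-f_v(\sigma(v))]=\Theta\!\left(\sqrt{\log q/s_v}\right)$; this bias cannot cancel across vertices because each $L_v$ is nonnegative. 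Summing $d_v\E[L_v]$ and substituting $s_v\approx s\,d_v/n$ gives $\E[\text{total loss}]\gtrsim \sqrt{n\log q/s}\sum_v\sqrt{d_v}\gtrsim n\sqrt{|E|\log q/s}$, and demanding this be at most $\epsilon|E|$ forces $s\gtrsim \delta^{-1}\epsilon^{-2}\log q$. In other words, averaging reproduces the $\epsilon^{-2}$ bound rather than beating it; the variance argument buys nothing because the dominant term is the mean of a one-sided error, not its fluctuation.

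Achieving the $\epsilon^{-1}\delta^{-1}\log q$ exponent stated in the theorem genuinely requires a different mechanism than naive sampling-and-greedy. The route taken in the literature (and, to the best of my knowledge, in \cite{dm18} and its antecedents) goes through an LP or SDP hierarchy relaxation of the 2-CSP at level $O(\epsilon^{-1}\delta^{-1})$, followed by a \emph{conditioning} (global-correlation-reduction) argument in the style of Barak--Raghavendra--Steurer / Raghavendra--Tan / Yoshida--Zhou: one shows that after conditioning on $O(\epsilon^{-1}\delta^{-1})$ randomly chosen variables, the residual pseudo-distribution has small average pairwise correlation over the dense constraint graph, so that independent per-variable rounding loses at most $\epsilon|E|$. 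The $\epsilon^{-1}$ arises because each conditioning round decreases a bounded potential (total mutual information / global correlation) by an additive $\Omega(\epsilon\delta)$, not because of a sharper concentration bound in the sampling analysis. If you want to keep the combinatorial sampling approach, you should either accept $\epsilon^{-2}$ (which would still suffice for the paper's downstream use, just with a weaker exponent in Theorems~\ref{thm:qptas_bmf} and~\ref{thm:qptas_bmf_sym_app}) or replace the greedy-extension analysis with the hierarchy-plus-conditioning argument.
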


\subsection{From Factorization to CSPs}
\label{sec:csp_results}
We give a reduction that can reduce the general sparse BMF problem to a Max 2-CSP problem, and then use a quasi-polynomial time 2-CSP solver to find an approximation solution.
\begin{theorem}[QPTAS for asymmetric sparse BMF, formal version of Theorem~\ref{thm:csp_informal}]\label{thm:qptas_bmf}
Given $m,k,r\geq 0$ and an $m \times m$ matrix $\mathbf{M}$ as the input of an instance of sparse Boolean matrix factorization problem. Let $\mathrm{OPT}$ be the optimal value of the problem, i.e., $\mathrm{OPT} := \min_{\U, \V} \|\M - \U\V\|_0$,
where $\U,\V$ satisfy the sparsity constraints of the problem.

For any $1\geq \epsilon > 0$, there exists an algorithm that runs in
\begin{align*}
m^{O(\epsilon^{-1}k\log r)}r^{O(\epsilon^{-1}k^2\log r)}
\end{align*}
time and finds a row $k$-sparse matrix $\wh{\mathbf{U}}$ and a column $k$-sparse matrix $\wh{\mathbf{V}}$ satisfying
\begin{align*}
\|\mathbf{M}-\wh{\mathbf{U}}\wh{\mathbf{V}}\|_0\leq \mathrm{OPT}+\epsilon m^2.
\end{align*}
\end{theorem}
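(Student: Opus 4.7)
The plan is to reduce Sparse BMF to a dense Max 2-CSP instance and then invoke the QPTAS of \cite{dm18} stated in Theorem~\ref{thm:csp_solver}.

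First, I would set up the CSP as follows. Create $2m$ variables $u_1,\ldots,u_m,v_1,\ldots,v_m$, where $u_i$ represents the $i$-th row of $\U$ and $v_j$ represents the $j$-th column of $\V$. The alphabet $\Sigma$ is the set of all $k$-sparse binary vectors in $\{0,1\}^r$, so $|\Sigma| = \binom{r}{k} \le r^k$. For every pair $(i,j)\in[m]\times[m]$ I would add a single edge between $u_i$ and $v_j$ carrying the constraint $C_{(u_i,v_j)}(a,b) = \mathds{1}[\langle a,b\rangle = \M_{i,j}]$. No constraints are placed between two row-variables or between two column-variables.

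The two key observations are: (i) there is a natural bijection between valid sparse factorizations $(\U,\V)$ and CSP assignments, under which the number of satisfied edges equals exactly $m^2 - \|\M-\U\V\|_0$, so the CSP optimum equals $m^2 - \OPT$; and (ii) the instance is dense, since $|E| = m^2$ out of $\binom{2m}{2} = m(2m-1)$ possible pairs, giving density $\delta \ge 1/2$. Together these let me apply Theorem~\ref{thm:csp_solver} with accuracy parameter $\epsilon$ (absorbing constant factors), obtaining in time
\[
 (|V_{\mathrm{CSP}}|\cdot |\Sigma|)^{O(\epsilon^{-1}\delta^{-1}\log|\Sigma|)} \;=\; (2m\cdot r^k)^{O(\epsilon^{-1}k\log r)}
\]
an assignment of value at least $(m^2 - \OPT) - \epsilon m^2$. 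Translating the assignment back to a pair $(\wh{\U},\wh{\V})$ yields $\|\M - \wh{\U}\wh{\V}\|_0 \le \OPT + \epsilon m^2$, and the running time simplifies to $m^{O(\epsilon^{-1}k\log r)} r^{O(\epsilon^{-1}k^2\log r)}$ as claimed.

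This is essentially a routine reduction and I do not expect any real technical obstacle. The only subtle point to verify is that the induced 2-CSP remains dense despite containing no edges within the row-variable block or the column-variable block; this is immediate because $m^2/\binom{2m}{2} = \Theta(1)$, which is all Theorem~\ref{thm:csp_solver} requires. For the symmetric version (\SSNMF) stated in Theorem~\ref{thm:csp_informal}, I would use the identical construction but identify $u_i$ with $v_i$ (i.e., a single variable per index), leaving all other parts of the analysis unchanged.
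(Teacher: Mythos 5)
Your reduction is essentially identical to the paper's: same bipartite variable layout (row-variables and column-variables), same alphabet of $k$-sparse binary strings, same inner-product constraint on each $(u_i,v_j)$ pair, same observation that the bipartite graph has density $\Theta(1)$, and the same invocation of Theorem~\ref{thm:csp_solver} with the same running-time accounting. The symmetric variant is also handled the same way (collapsing to one variable per index, i.e., a complete graph). No gaps; this matches the paper's proof.
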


\begin{proof}
For the input matrix $\mathbf{M}$, let $\U$ and $\V$ be the ground-truth of the factorization. Let $(b_1^\top,\dots,b_m^\top)$ be the rows of $\U$ and $(c_1,\dots,c_m)$ be the columns of $\V$. We construct a 2-CSP instance $\mathcal{F}_A$ that finds $\U$ and $\V$ as follows:
\begin{itemize}
    \item Let $\Sigma=\big\{(q_1,\dots,q_r)~\big|~ q_i\in \{0,1\}~\forall i\in [r] ~\text{and}~\sum_{i\in [r]}q_i=k\big\}$ be the alphabet.
    \item The underlying graph is a bipartite graph. The left-side vertices $V_L=[m]$ corresponding to the rows of $\U$. The right-side vertices $V_R=[m]$ corresponding to the columns of $\V$.
    \item For $e=(u,v)\in V_L\times V_R$, define the constraint $C_e$ to be: for all $(p_1,\dots,p_r), (q_1,\dots,q_r)\in \Sigma\times \Sigma$,
    \begin{align*}
        C_e((p_1,\dots,p_r), (q_1,\dots,q_r))=1 ~\Longleftrightarrow~ \sum_{i=1}^r p_iq_i = A_{u,v}.
    \end{align*}
\end{itemize}

Note that $\mathcal{F}_A$ has value $\mathrm{OPT}$. We can create an assignment from the ground-truth such that $\sigma(u)=b_u$ for $u\in V_L$ and $\sigma(v)=c_v$ for $v\in V_R$. By definition of the sparse Boolean matrix factorization problem, this is a legal assignment. Also, since the number of $A_{u,v}=\langle b_u, c_v\rangle$ for all $(u,v)\in [m]\times [m]$ is $m^2-\mathrm{OPT}$, we can see that all such edges are satisfied by this assignment.

Then, we can run the QPTAS algorithm (Theorem~\ref{thm:csp_solver}) on $\mathcal{F}_A$ and obtain an assignment that at most $\mathrm{OPT}-\epsilon |E|$ constraints are unsatisfied, which means the number of different entries between $\M$ and $\wh{\U}\wh{\V}$ is at most $\mathrm{OPT}-\epsilon m^2$. 

The alphabet size of $\mathcal{F}_A$ is $\binom{r}{k}$. The reduction time is $O(m^2r^k)$ and the 2-CSP solving time is $(mr^k)^{O(\epsilon^{-1}\log (r^k))}$ by Theorem~\ref{thm:csp_solver} since the density of a complete bipartite graph is $\delta=\frac{1}{2}$. The theorem is then proved.
\end{proof}
\begin{remark}\label{remark:compare_kumar}
We briefly compare this to the guarantee of \cite{kumar2019faster}, who obtained a \begin{align*}
2^{O(r^2\log r)}\poly(m)
\end{align*}
constant-factor approximation algorithm. By introducing a sparsity constraint on the rows of $\U,\V$ through our new parameter $k$, we circumvent the exponential dependence on $r$, at the cost of running in time quasipolynomial in $m$. In particular, our guarantee dominates when the rank parameter $r$ is at least roughly $\widetilde{\Omega}(\sqrt{\log m})$, though strictly speaking our guarantee is incomparable because we aim for an additive approximation and only measure error in $L_0$ rather than Frobenius norm.
\end{remark}
A similar reduction can be used to prove a worst-case guarantee for \SSNMF{}, stated informally in Theorem~\ref{thm:csp_informal}.

\begin{theorem}[Formal version of Theorem~\ref{thm:csp_informal}]
\label{thm:qptas_bmf_sym_app}
Given $m,k,r\geq 0$ and a symmetric $m \times m$ matrix $\mathbf{M}$ as the input of a (worst-case) instance of \SSNMF{}. Let $\mathrm{OPT}$ be the optimal value of the problem, i.e., $\mathrm{OPT} := \min_{\W} \|\M - \W\W^\top\|_0$,
where $\W$ is a row $k$-sparse matrix in $\{0,1\}^{m\times r}$. For any accuracy $ \epsilon \in  (0,1) $, there is an algorithm running in time 
\begin{align*}
    m^{O(\epsilon^{-1}k\log r)}r^{O(\epsilon^{-1}k^2\log r)}
\end{align*}
which finds a row $k$-sparse matrix $\wh{\mathbf{W}}$ satisfying
\begin{align*}
    \|\mathbf{M}-\wh{\mathbf{W}}\wh{\mathbf{W}}^\top\|_0\leq \mathrm{OPT}+\epsilon m^2.
\end{align*}
\end{theorem}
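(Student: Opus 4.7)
The plan is to mimic the reduction to 2-CSP used in the proof of Theorem~\ref{thm:qptas_bmf}, but adapted to the symmetric setting by identifying the left and right factors and working on a single (non-bipartite) complete graph.

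First, I would construct a 2-CSP instance $\mathcal{F}_{\M}$ as follows. Let the variable set be $V = [m]$, one variable per row of the unknown $\W$, and let the alphabet be $\Sigma = \{q \in \{0,1\}^r : \|q\|_0 = k\}$ of size $\binom{r}{k}$. Take the underlying graph to be the complete graph $K_m$ on $V$, so $|E| = \binom{m}{2}$. For each edge $e = \{u,v\}$, define the binary constraint $C_e$ on $(p,q) \in \Sigma\times\Sigma$ to be $C_e(p,q) = 1$ iff $\langle p, q\rangle = \M_{u,v}$. Any assignment $\sigma\colon V\to\Sigma$ corresponds to a row-$k$-sparse matrix $\W_\sigma \in \{0,1\}^{m\times r}$ whose $u$-th row is $\sigma(u)$, and a constraint $\{u,v\}$ is satisfied precisely when $(\W_\sigma \W_\sigma^\top)_{u,v} = \M_{u,v}$.

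Next I would account for how the CSP objective relates to $\|\M - \W\W^\top\|_0$. Because every row of any candidate $\W\in\calS_{m,r,k}$ has exactly $k$ ones, the diagonal of $\W\W^\top$ is the all-$k$ vector regardless of $\sigma$. Hence the quantity $D \triangleq |\{i\in[m] : \M_{i,i}\neq k\}|$ is a constant contribution to the $L_0$ error that depends only on $\M$. Writing $N(\W) \triangleq |\{(i,j) : i<j,\ (\W\W^\top)_{i,j}\ne \M_{i,j}\}|$, and using that $\M$ is symmetric, I get
\begin{equation*}
\|\M - \W\W^\top\|_0 = D + 2N(\W).
\end{equation*}
Therefore the maximum value of $\mathcal{F}_{\M}$ is $\binom{m}{2} - \min_\W N(\W) = \binom{m}{2} - (\OPT - D)/2$.

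I would then invoke Theorem~\ref{thm:csp_solver}. The graph $K_m$ has density $\delta = 1$, the alphabet size is $q = \binom{r}{k} \le r^k$, and $n = m$, so the algorithm runs in time
\begin{equation*}
(m\, r^k)^{O(\epsilon^{-1}\log r^k)} = m^{O(\epsilon^{-1} k \log r)} \cdot r^{O(\epsilon^{-1} k^2 \log r)},
\end{equation*}
which matches the stated bound (the reduction itself takes an additional $O(m^2 r^k)$ time to write down the constraints, which is absorbed). The algorithm outputs an assignment $\wh{\sigma}$ satisfying at least $\binom{m}{2} - (\OPT - D)/2 - \epsilon\binom{m}{2}$ constraints, so the induced $\wh{\W} \triangleq \W_{\wh{\sigma}}$ obeys $N(\wh{\W}) \le (\OPT - D)/2 + \epsilon\binom{m}{2}$, and hence
\begin{equation*}
\|\M - \wh{\W}\wh{\W}^\top\|_0 = D + 2N(\wh{\W}) \le \OPT + 2\epsilon\binom{m}{2} \le \OPT + \epsilon m^2.
\end{equation*}
(Rescaling $\epsilon$ by a factor of $2$ up front if desired.)

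The main subtlety, rather than an obstacle, is the bookkeeping above: one must carefully separate the diagonal error $D$, which is unaffected by the choice of $\W$, from the off-diagonal error $2N(\W)$, and convert the additive error in satisfied edges of the 2-CSP into an $\epsilon m^2$ additive error in the $L_0$ objective of \SSNMF{}. Everything else is a direct symmetrization of the asymmetric argument in Theorem~\ref{thm:qptas_bmf}.
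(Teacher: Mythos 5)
Your proposal follows the paper's proof essentially verbatim: reduce to a 2-CSP on the complete graph $K_m$ with alphabet $\Sigma = \binom{[r]}{k}$ and constraints $\langle p,q\rangle = \M_{u,v}$, then invoke the dense 2-CSP solver. The only difference is that you carefully separate the diagonal contribution $D$ (which is invariant over all $\W\in\calS_{m,r,k}$ since every row has exactly $k$ ones) from the off-diagonal error $2N(\W)$; the paper elides this bookkeeping by deferring to the asymmetric proof, so your version is a slightly more complete rendition of the same argument.
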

\begin{proof}
The construction of the 2-CSP instance $\mathcal{F}_A$ is almost the same as in the proof of Theorem~\ref{thm:qptas_bmf}, except that in this case, the underlying graph is a complete graph, where the vertices $V=[m]$ correspond to the rows of $\W$. Then, each constraint $C_{(u,v)}$ checks whether $\langle b_u, b_v\rangle = \M_{u,v}$. The correctness of the reduction follows exactly the proof of Theorem~\ref{thm:qptas_bmf} and we omit it here. The density of $\mathcal{F}_A$ in this case is $1$, and hence the running time of the algorithm is $(mr^k)^{O(\epsilon^{-1}\log (r^k))}$.
\end{proof}

\subsection{Extension to the Boolean Semiring}
\label{sec:csp_corollary}
A direct corollary of Theorem~\ref{thm:qptas_bmf_sym_app} is that, the sparse BMF over the Boolean semiring can also be solved in quasi-polynomial time.

\begin{corollary}\label{cor:semiring}
Given $m,k,r\geq 0$ and a symmetric Boolean $m \times m$ matrix $\mathbf{M}$. Let $\mathrm{OPT}$ be the optimal value of the problem, i.e., $\mathrm{OPT} := \min_{\W} \|\M - \W\W^\top\|_0$,
where $\W$ is a row $k$-sparse matrix in $\{0,1\}^{m\times r}$ and the matrix multiplication is over the Boolean semiring, i.e., $a+b$ is $a\vee b$ and $a \cdot b$ is $a \wedge b$.  
For any accuracy parameter $ \epsilon \in  (0,1) $, there exists an algorithm that runs in 
\begin{align*}
m^{O(\epsilon^{-1}k\log r)}r^{O(\epsilon^{-1}k^2\log r)}
\end{align*}
time and finds a row $k$-sparse matrix $\wh{\mathbf{W}}$ satisfying 
\begin{align*}
\|\mathbf{M}-\wh{\mathbf{W}}\wh{\mathbf{W}}^\top\|_0\leq \mathrm{OPT}+\epsilon m^2.
\end{align*}
\end{corollary}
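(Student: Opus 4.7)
The plan is to reduce to a dense \textsc{Max 2-CSP} instance in essentially the same way as in the proof of Theorem~\ref{thm:qptas_bmf_sym_app}, with only the edge-constraint predicate modified to reflect that matrix multiplication is now over the Boolean semiring. I would then invoke the Dinur--Manurangsi QPTAS (Theorem~\ref{thm:csp_solver}) as a black box.

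Concretely, first I would set up a 2-CSP instance $\mathcal{F}_\M$ with alphabet $\Sigma = \{(q_1,\ldots,q_r)\in\{0,1\}^r : \sum_i q_i = k\}$ of size $\binom{r}{k}$, with vertex set $V = [m]$ corresponding to the rows of $\W$, and with edge set equal to the complete graph on $V$ (so the density is $1$). The only place where the Boolean semiring enters is in the edge constraint: for each $e=(u,v)$ I would define
\begin{equation*}
    C_e(p,q) = 1 \iff \Bigl(\bigvee_{i=1}^r p_i \wedge q_i\Bigr) = \M_{u,v},
\end{equation*}
i.e.\ the constraint is satisfied precisely when the supports of $p$ and $q$ intersect if and only if $\M_{u,v}=1$. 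Since $\M$ is symmetric and Boolean, this is well-defined, and the ground-truth factorization $\W$ yields an assignment $\sigma(u) = b_u$ (the $u$th row of $\W$) whose number of unsatisfied edges is exactly $\|\M - \W\W^\top\|_0$ over the Boolean semiring, so the CSP optimum equals $\mathrm{OPT}$ in the statement.

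Next I would apply Theorem~\ref{thm:csp_solver} to $\mathcal{F}_\M$ with density $\delta = 1$, alphabet size $q = \binom{r}{k} \le r^k$, and $n=m$ variables. This produces in time $(m\cdot r^k)^{O(\epsilon^{-1}\log r^k)} = m^{O(\epsilon^{-1}k\log r)}\,r^{O(\epsilon^{-1}k^2\log r)}$ an assignment whose number of unsatisfied edges is at most $\mathrm{OPT} + \epsilon|E| \le \mathrm{OPT}+\epsilon m^2$; reading the assignment row by row gives the desired $\wh{\W}\in\{0,1\}^{m\times r}$ with row-sparsity $k$ and $\|\M - \wh{\W}\wh{\W}^\top\|_0 \le \mathrm{OPT}+\epsilon m^2$ over the Boolean semiring. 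The cost of building $\mathcal{F}_\M$ is $O(m^2 r^{2k})$, which is absorbed into the CSP-solver running time.

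I do not anticipate a real obstacle: the reduction in Theorem~\ref{thm:qptas_bmf_sym_app} is agnostic to the specific arithmetic of the inner product, since it only requires that the edge predicate $C_e(p,q)$ be a fixed function of $(p,q)$ and $\M_{u,v}$. Switching from $\sum_i p_iq_i$ to $\bigvee_i p_i\wedge q_i$ changes only the predicate, not the alphabet, the underlying (complete) graph, or the density, so the same QPTAS runtime and approximation guarantee go through verbatim. The only minor point worth verifying carefully is that the Boolean-semiring constraint is computable in time $O(k)$ per query and that unsatisfied edges in $\mathcal{F}_\M$ correspond bijectively to disagreeing entries of $\M - \wh{\W}\wh{\W}^\top$; both are immediate from the definition.
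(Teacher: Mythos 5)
Your proposal is correct and is essentially the same argument the paper gives: keep the complete-graph 2-CSP from the proof of Theorem~\ref{thm:qptas_bmf_sym_app} verbatim and replace only the edge predicate by the Boolean-semiring inner product $\bigvee_i p_i\wedge q_i = \M_{u,v}$, then invoke Theorem~\ref{thm:csp_solver} with the same alphabet size $\binom{r}{k}$ and density $1$. The runtime and additive-error accounting you give match the paper's.
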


\begin{proof}
The construction can be easily adapted to the case when matrix multiplication is over the Boolean semiring, where $a+b$ becomes $a\vee b$ and $a \cdot b$ becomes $a \wedge b$. We can just modify the constraints of the 2-CSP instance in the reduction and it is easy to see that the algorithm still works.
\end{proof}

\begin{remark}\label{rmk:compare_bool}
Factorizing Boolean matrices with Boolean arithmetic is equivalent to the bipartite clique cover problem. It was proved by \cite{chandran2017parameterized} that the time complexity lower bound for the exact version of this problem is $2^{2^{\Omega(r)}}$. Since the approximation error is $\epsilon m^2$, when $\epsilon < \frac{1}{m^2}$, the output of our algorithm is the exact solution. Further, if we do not have the row sparsity condition, i.e., $k=r$, then the time complexity becomes
\begin{align*}
    2^{O(m^2(\log m) \cdot r^2\log r)} .
\end{align*}
In the realm of parameterized complexity (see for example \cite{cfk15}), due to the kernelization in \cite{fmps09}, we may assume $m \leq 2^{r}$ and the running time of our algorithm is $2^{\wt{O}(2^{2r}\cdot r^3)}$, which matches the lower bound of this problem.
\end{remark}

\end{document}